\documentclass{article}
 \usepackage[preprint]{neurips_2026}
\usepackage[utf8]{inputenc} 
\usepackage[T1]{fontenc}    
\usepackage{hyperref}       
\usepackage{url}            
\usepackage{booktabs}       
\usepackage{amsfonts}       
\usepackage{nicefrac}       
\usepackage{microtype}      
\usepackage{xcolor}         
\usepackage{graphicx}       

\usepackage{amsthm}
\usepackage{amsmath,mathtools}
\usepackage{amssymb}

\usepackage{tabularx,array}

\usepackage{algorithm}
\usepackage{algorithmic}

\usepackage{caption}
\usepackage{newfloat}
\usepackage{listings}
\DeclareCaptionStyle{ruled}{labelfont=normalfont,labelsep=colon,strut=off}
\floatstyle{ruled}
\newfloat{listing}{tb}{lst}{}
\floatname{listing}{Listing}

\newtheorem{theorem}{Theorem}
\newtheorem{lemma}{Lemma}
\newtheorem{proposition}{Proposition}
\newtheorem{corollary}{Corollary}

\newtheorem{remark}{Remark}

\newcommand{\R}{\mathbb{R}}
\newcommand{\E}{\mathbb{E}}
\newcommand{\N}{\mathcal{N}}
\newcommand{\KL}{\mathrm{KL}}
\newcommand{\Var}{\operatorname{Var}}
\newcommand{\Cov}{\operatorname{Cov}}
\newcommand{\diag}{\operatorname{diag}}
\newcommand{\dd}{\,\mathrm{d}}

\usepackage{enumitem}
\newcommand{\ran}{\operatorname{ran}}
\newcommand{\corr}{\operatorname{corr}}
\newcommand{\osc}{\operatorname{osc}}
\newcommand{\TV}{\operatorname{TV}}
\newcommand{\MMSE}{\mathrm{MMSE}}
\DeclareMathOperator*{\argmin}{arg\,min}

\newcommand{\dzf}[1]{\frac{\Delta z_{#1}}{z_{#1}}}
\newcommand{\Har}{\mathcal{H}}
\newcommand{\Hil}{\mathbb{H}}
\newcommand{\Sm}{\mathcal{S}}
\newcommand{\Wind}{\mathcal{W}_T}

\title{PRISM: Principled Reference Identification for Schr\"odinger Bridge Models}
\author{%
Forouzan Fallah,
Yezhou Yang \\
    Arizona State University}
\begin{document}
\maketitle

\begin{abstract}
Schr\"odinger bridge models restore a clean signal from a degraded observation by following the conditional bridges of a reference process, yet this reference is chosen heuristically, typically white noise with a hand-tuned schedule. We develop PRISM, a theory of bridge reference design. We characterize the time-varying Gaussian references that remain exactly tractable with per-mode schedules: precisely those whose instantaneous covariances commute. We then prove an invisibility principle: with the exact drift and unlimited solver steps, every admissible reference recovers the true posterior. The choice of reference therefore matters only under finite computational resources.
For a fixed step budget, we derive the finite-step objective in closed form and prove that every optimal noise spectrum is proportional to $P_k$, the spectrum of information destroyed by the sensor, with a mode-independent constant $x^{*}(T) = (2\ln T)^{-1/2}(1+o(1))$. The analysis shows that noise color and temporal scheduling are interchangeable, and regularization provably shifts the optimal reference toward white noise. 
Experiments in Gaussian settings confirm the predicted orderings and the closed-form loss floors.
On FFHQ, the distortion--perception trade-off and spectral localization transfer, but white noise outperforms the matched reference; a pre-registered study that changes the training regime refutes ridge whitening as the explanation. A $2{\times}2$ mechanism study then traces the inversion to the non-Gaussian per-mode statistics of real images. 
PRISM turns reference design from a hyperparameter sweep into a calculation in the Gaussian regime, and locates exactly where real images break it.
\end{abstract}

\section{Introduction}
\label{sec:intro}
Schr\"odinger bridge models \cite{liu20232,de2021diffusion,shi2023diffusion} learn to transport samples between two distributions along the conditional bridges of a \emph{reference process}. This reference determines the intermediate states used during training, including how noise is distributed across spatial frequencies. Yet it is usually chosen by hand, most often as white noise with a heuristic scalar schedule. This practice treats the reference as a minor implementation detail. Fig.~\ref{fig:teasor} shows that it is not: at every solver budget there is an interior optimal noise level, and its location follows a simple, predictable rule.
\begin{figure}[!t]
    \centering
    \includegraphics[width=.9\columnwidth]{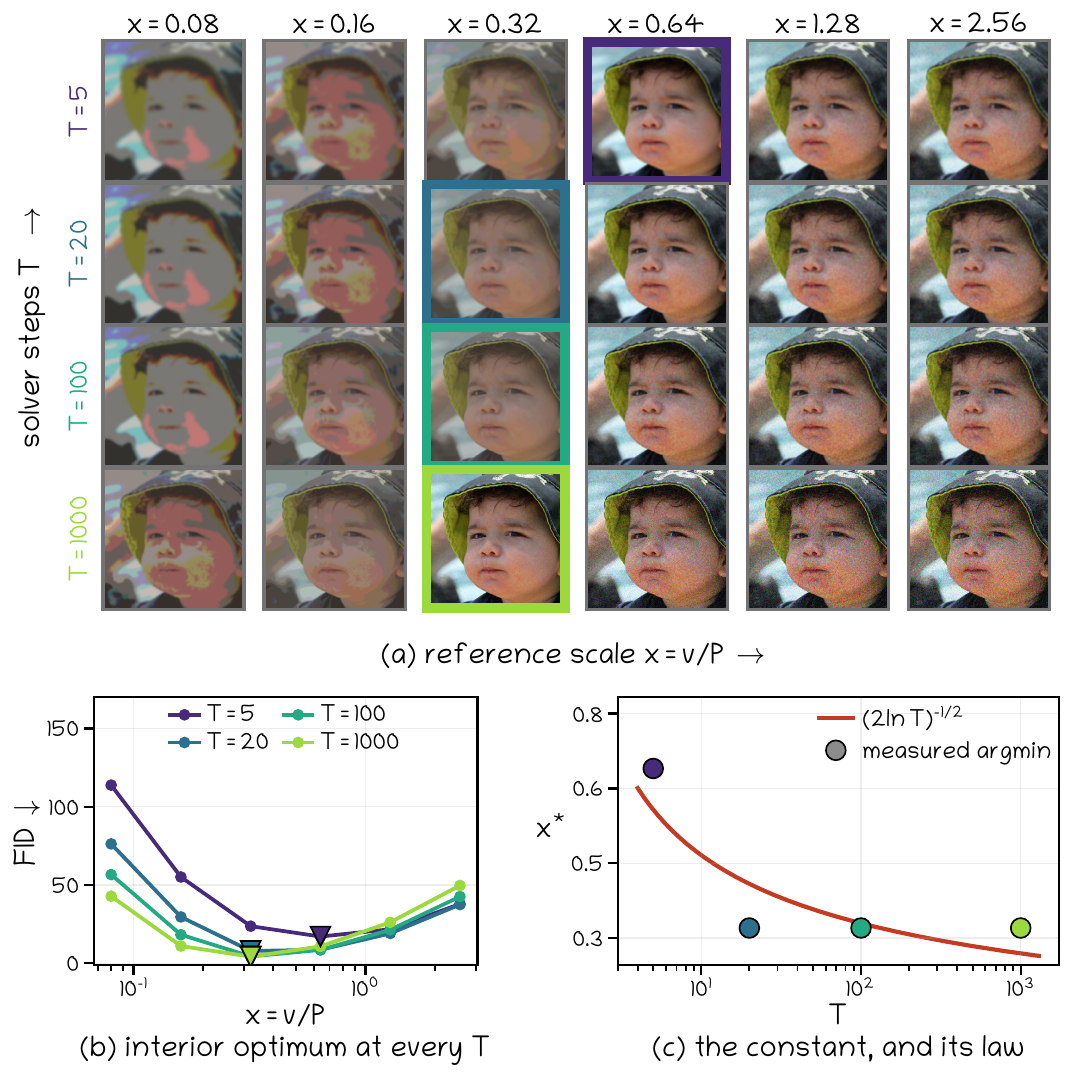}
    \caption{\textbf{There is a right amount of reference noise, and its size is predictable.} (a) Each row is a solver sampling budget $T$; each column shows the injected reference noise level, as a scale
    $x$ relative to the information destroyed by the sensor. Too little noise (left) fails to recover missing detail; too much (right) obscures the reconstruction. (b) Reconstruction error as a function of $x$, with one curve per $T$. Every curve has a minimum, showing that an intermediate noise level performs best. (c) The minimum's location follows Eq.~\eqref{eq:rule}, matching the measured optimum at every tested budget.}
    \label{fig:teasor}
\end{figure}
Diffusion schedules were hand-tuned until schedule theory \cite{karras2022elucidating,kingma2021variational} made their design principled. Recent work has shown empirically that the spectrum of injected noise matters in one-endpoint generative diffusion \cite{falck2025fourier,jiralerspong2025shaping,benita2026spectral,esteves2026spectrallyguided}.
That setting, however, has no measurement operator that identifies a preferred
spectrum. Color is therefore selected by search rather than derived, with no
theory describing when it should matter. This paper develops the analogous theory for bridge references, which we call PRISM (Principled Reference Identification for Schr\"odinger Bridge Models), and finds the answer more subtle than a single optimal formula: the reference is irrelevant in the ideal limit, while its optimal finite-resource design depends on solver steps or model error.

Our analysis proceeds in a solvable linear-Gaussian degradation model, where the observation is $x_1 = h_k x_0 + n$ per spatial frequency $k$, with transfer function $h_k$, noise spectrum $N_k$, and signal prior $S_k$. The central quantity is the Wiener residual spectrum
\begin{equation}
    P_k \;=\; {S_k N_k}/({h_k^2 S_k + N_k}),
    \label{eq:wiener-residual}
\end{equation}
which measures the posterior uncertainty remaining after observing $x_1$; where the sensor sees well, $P_k \approx N_k \approx 0$; where it is blind, $P_k \approx S_k$. Thus, $P_k$ is the spectrum of the destroyed information and is computable directly from the known degradation model.

\paragraph{Contributions.}
\begin{enumerate}
    \item \textbf{A tractability theorem (\S\ref{sec:tractability}).} We characterize the time-varying operator-valued references that admit $I^2$SB-style closed-form bridges and decouple into independent scalar modes: their instantaneous covariances must pairwise commute, equivalently simultaneously diagonalizable. We also give blockwise aliasing, linear-drift, and Hilbert-space extensions.
    \item \textbf{An invisibility principle (\S\ref{sec:invisibility}).} With the exact drift and $T\to\infty$ steps, the terminal law equals the true posterior for every reference variance $v>0$, and collapses to a point mass at $v=0$. The reference matters only once some resource (solver steps or model accuracy) is limited.
    \item \textbf{Finite-step theory (\S\ref{sec:finitestep}).} We derive the $T$-step terminal law in closed form. The per-mode KL depends on $(v,P)$ only through $x = v/P$, so every optimizer is proportional, $v_k = x^{*}P_k$. On uniform grids, $x^{*}(T) = (2\ln T)^{-1/2}(1+o(1))$ and the optimal KL is $\tfrac{\ln T}{2T^{2}}(1+o(1))$. A second change of variables, the $z$-identity, makes color and schedule exchangeable: under per-mode schedules discretization error is color-blind, and the optimal schedule is unique with a
    $4P/T$ floor. On clustered grids the color objective has several local minima, so the two cannot be designed independently.
    \item \textbf{Model error and learning (\S\ref{sec:modelerror}).} We extend the analysis to arbitrary (gain, bias) perturbations of the drift. Given the $z$-schedule and error profile, sampling is color-blind, and per-mode learning is color-blind for any scale-equivariant learner; color matters only where scale symmetry breaks. Shared ridge regularization breaks it in a definite direction, giving the sub-proportional law $v_k = x^{*}(nP_k)\cdot P_k$: less data whitens the optimal reference, which lands between white noise and $P_k$.
    \item \textbf{Experiments, and a refuted prediction (\S\ref{sec:experiments}).} Numerics and trained predictors confirm the predicted orderings and closed-form loss floors, and measure ridge whitening directly. On FFHQ \cite{ffhq} at $64{\times}64$ with a known degradation, the distortion--perception trade-off and the predicted spectral localization transfer, but white stays the better perceptual reference. A pre-registered two-regime study (\S\ref{sec:exp-regime}) refutes ridge whitening as the explanation, and a $2{\times}2$ mechanism study (\S\ref{sec:exp-mech}) shows that the inversion tracks the data, not the architecture, implicating the non-Gaussian per-mode statistics that the theory idealizes away.
\end{enumerate}
\section{Related Work}
\label{sec:related}

\paragraph{Schr\"odinger bridges and bridge matching.}
Diffusion Schr\"odinger Bridge (DSB) approximates iterative proportional fitting
using score-based diffusion \cite{de2021diffusion}; DSBM and Iterative Markovian
Fitting instead alternate bridge matching with Markovian projection
\cite{shi2023diffusion,peluchetti2023diffusion}, and Iterative Proportional
Markovian Fitting unifies the two families \cite{kholkin2024diffusion,gushchin2024adversarial}. A parallel line pursues
simplified, non-iterative, or provably light solvers
\cite{tang2024simplified,gushchin2024light,peluchetti2023diffusion}, task-dependent state costs, and semi-supervised
couplings \cite{liu2024generalized,howard2026schrodinger}. These methods improve estimators built on a given reference. We instead ask which reference should be used.

\paragraph{Bridge models for restoration and inverse problems.}
\cite{liu20232} uses the analytic Brownian-bridge posterior to map paired degraded
and clean images, while \cite{zhou2024denoising} formulate DDBMs
over a general reference diffusion with VE/VP variants. Follow-up work improves
faster and non-Markovian sampling \cite{wang2024implicit,zheng2025diffusion},
few- or one-step distillation \cite{he2024consistency,gushchin2025inverse}, pretrained priors
\cite{wang2025irbridge}, mean-reverting and stochastic-control formulations
\cite{zhu2025unidb}, residual or energy-shortened trajectories
\cite{wang2026residual,hou2026energy}, and regularization against exposure bias
and distortion \cite{yao2025regularized}. These methods mainly modify the
drift or time horizon; the diffusion covariance remains isotropic,
with a scalar scale chosen by tuning.

\paragraph{Structured references and degradation-aware forward processes.}
Gaussian Schr\"odinger bridges admit closed forms
\cite{bunne2023schrodinger}, and richer reference dynamics such as multivariate
Ornstein--Uhlenbeck processes, topology-aware heat diffusions, analytic
linear--quadratic bridges---have been studied recently
\cite{zhang2026learning,wyrwal2026topological,chertkov2026analytic}. In these
works, the reference or its inducing cost is fixed in advance; we treat it as
unknown.

\paragraph{Spectral shaping and colored noise.}
The closest line of work studies the spectrum of injected noise. Fourier-space analyses of the forward process
\cite{falck2025fourier} motivate schedules that corrupt
all frequencies at a matched rate. Empirical studies report gains from blue
noise, low-frequency-heavy noise, spectrally anisotropic forward noise,
spectrally-guided or per-instance schedules, and frequency-band redistribution
at sampling time
\cite{huang2024blue,jiralerspong2025shaping,scimeca2025learning,
benita2026spectral,esteves2026spectrallyguided,davidson2026colored}. These works
show that noise color is an important design choice, but select it through
search or end-to-end fitting. This produces
dataset-specific rules without explaining why one spectrum should be preferred
or when color should have no effect. We provide both results in the bridge
setting: the optimal spectrum is the destroyed-information spectrum $P_k$, and
its effect provably vanishes in the exact-drift, schedule-adapted limit.

\paragraph{Schedule design, step budgets, and distortion--perception.}
Variational Diffusion Models and EDM established schedule and weighting choices
as important design variables \cite{kingma2021variational,karras2022elucidating}.
Later work studied ELBO-based reweighting, logSNR importance sampling, and
corrections to common schedules
\cite{kingma2023understanding,lin2024common,okada2024constant}. These methods
optimize the \emph{temporal} schedule of a fixed one-endpoint process and
evaluate it empirically under a given NFE budget. Our finite-step objective is
closed-form, so its optimum is proved rather than measured. Moreover, the
$z$-identity shows that reported color effects must be tested for schedule
confounds. Finally, the distortion--perception trade-off is classical
\cite{blau2018perception,freirich2021theory} and remains important in bridge
restoration \cite{yao2025regularized, fallah2025rareflow}. In our model, its factor-two cost follows
as a theorem, while the reference traces the entire curve.
\section{PRISM: A Theory of Reference Design}
\subsection{Which References Are Exactly Tractable?}
\label{sec:tractability}
Let \(B\) be standard Brownian motion in \(\R^d\), and let
\(\dd X_t = L(t)\,\dd B_t\) with \(Q(t) := L(t)L(t)^\top\).
Image-to-Image Schr\"odinger Bridge ($I^2$SB) methods \cite{liu20232, WANG2025111627} require two closed-form objects: the pinned marginal $X_t \mid (X_0,X_1)$ for training-state sampling and the reverse sub-bridge kernel $X_s \mid (X_t, X_0)$ for ancestral sampling. For any deterministic covariance schedule $Q(\cdot)$, both are multivariate Gaussians with explicit matrix formulas (Lemmas~\ref{lem:pinned}--\ref{lem:reverse}, App~\ref{app:tractability}). Practical $I^2$SB, however, needs an exact decomposition into independent scalar modes, each with its own spatial-frequency noise schedule. The following result characterizes this condition.
\begin{theorem}[Commuting-reference tractability]
\label{thm:tractability}
Suppose $Q(t)$ is real symmetric and positive semidefinite for almost every \(t\in[0,1]\). Then the following are equivalent:
\begin{enumerate}
\item
Pairwise commutativity: $Q(s)Q(t)=Q(t)Q(s)$
for all $s,t$ outside a common null set.
\item
A fixed modal basis: $ Q(t)=U\operatorname{diag} \bigl(q_1(t),\ldots,q_d(t)\bigr)U^\top$,
with a time-independent orthogonal $U$.
\item
In the coordinates $Y=U^\top X$, the reference is a product
of independent time-changed scalar Brownian motions:
$\dd Y_{k,t} = \sqrt{q_k(t)}\,\dd B_{k,t}$.
\end{enumerate}
Under these conditions, with $a_k(t) := \int_0^t q_k$, $v_k := a_k(1)$, and $\rho_k(t) := a_k(t)/v_k$, the pinned bridge decomposes exactly over modes:
\begin{equation}
\begin{aligned}
Y_{k,t}\mid(y_{k,0},y_{k,1})
&\sim \N\!\left(\mu_{k,t},\sigma_{k,t}^2\right),
\\
\mu_{k,t}
&=
\bigl(1-\rho_k(t)\bigr)y_{k,0}
+\rho_k(t)y_{k,1},
\\
\sigma_{k,t}^2
&=
v_k\rho_k(t)\bigl(1-\rho_k(t)\bigr),
\end{aligned}
\label{eq:pinned-modal}
\end{equation}
and for $0<s<t\le 1$ the reverse kernel is modewise Gaussian with mean coefficient $r_k(s,t) = \rho_k(s)/\rho_k(t)$ and variance $a_k(s)\bigl(1-r_k(s,t)\bigr)$.
\end{theorem}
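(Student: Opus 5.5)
The plan is to prove the cycle (1)$\Rightarrow$(2)$\Rightarrow$(3)$\Rightarrow$(1), and then read off the modal formulas from the general Gaussian bridge formulas of Lemmas~\ref{lem:pinned}--\ref{lem:reverse}.

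\textbf{(2)$\Rightarrow$(3)$\Rightarrow$(1).} For (2)$\Rightarrow$(3), set $Y=U^\top X$. Then $\dd Y_t = U^\top L(t)\,\dd B_t$, and its diffusion matrix is $U^\top Q(t) U=\diag(q_k(t))$. By Lévy's characterization, $\tilde B_{k,t}:=\int_0^t q_k^{-1/2}\mathbf 1_{q_k>0}\,(U^\top L\,\dd B)_k$ can be padded with an independent Brownian motion on $\{q_k=0\}$ to give independent standard Brownian motions. Alternatively, we simply note that $Y$ is a centered Gaussian process with independent increments and diagonal covariance $\diag(a_k(t))$. That is all the law-level statement needs, since both processes are Gaussian with matching covariance. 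For (3)$\Rightarrow$(1), the covariance rate of $Y$ is diagonal, so $Q(t)=U\diag(q(t))U^\top$, and any two such matrices commute.

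\textbf{(1)$\Rightarrow$(2).} This is the main step. Fix representatives and discard the common null set $N$. The family $\{Q(t):t\notin N\}$ consists of pairwise commuting real symmetric matrices. By the standard theorem, a commuting family of symmetric (hence diagonalizable) matrices is simultaneously orthogonally diagonalizable, even when the family is infinite. To see this, note that the family spans a finite-dimensional commutative subspace of matrices. Pick a finite basis $Q(t_1),\dots,Q(t_m)$ of that span. Then diagonalize inductively: restrict to an eigenspace of $Q(t_1)$, which is invariant under the others because they commute, and recurse. The resulting orthogonal $U$ diagonalizes every element of the span, hence every $Q(t)$ with $t\notin N$. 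On $N$ we redefine $Q$, which changes nothing in $\dd X=L\,\dd B$ in law. The $q_k(t)\ge0$ are measurable because $q_k(t)=u_k^\top Q(t)u_k$. The subtlety I expect to be the main obstacle is exactly this: the family is uncountable and the equalities hold only almost everywhere, so the statement must be about a version of $Q$. The finite-dimensional span argument is what handles it.

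\textbf{Modal formulas.} Under (3) the coordinates $Y_k$ are independent, so conditioning on $(y_0,y_1)$ factorizes over $k$. Each mode reduces to a scalar problem: $Y_{k,t}-y_{k,0}$ is Gaussian with variance $a_k(t)$, and the increment $Y_{k,1}-Y_{k,t}$ is independent with variance $v_k-a_k(t)$. We apply scalar Gaussian conditioning to $(Y_{k,t},Y_{k,1})$, whose covariance is $a_k(t)$. This gives mean $y_{k,0}+\tfrac{a_k(t)}{v_k}(y_{k,1}-y_{k,0})$ and variance $a_k(t)-a_k(t)^2/v_k=v_k\rho_k(1-\rho_k)$, which is \eqref{eq:pinned-modal}. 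For the reverse kernel, condition $Y_{k,s}$ on $(Y_{k,t},y_{k,0})$ with $s<t$. The mean is $y_{k,0}+\tfrac{a_k(s)}{a_k(t)}(Y_{k,t}-y_{k,0})$, so $r_k=\rho_k(s)/\rho_k(t)$. The variance is $a_k(s)-a_k(s)^2/a_k(t)=a_k(s)(1-r_k)$. The same results follow by substituting $Q=U\diag(q)U^\top$ into the matrix formulas of Lemmas~\ref{lem:pinned}--\ref{lem:reverse}, since all the matrix functions involved then become diagonal in the $U$ basis. Degenerate modes with $v_k=0$ or $a_k(t)=0$ are handled by the convention that the bridge is deterministic there.
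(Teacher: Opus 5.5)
Your proposal is correct and follows essentially the same route as the paper: the finite-dimensional-span reduction to a finite commuting family for (1)$\Rightarrow$(2), with $Q$ redefined on the null set; the diagonal-square-root and Gaussian-independence argument for (2)$\Rightarrow$(3); the immediate converse; and the modal formulas from Gaussian conditioning. The only cosmetic difference is that you condition each independent scalar mode directly, whereas the paper substitutes $A(t)=U\diag(a_k(t))U^\top$ and $V=U\diag(v_k)U^\top$ into Lemmas~\ref{lem:pinned}--\ref{lem:reverse}, and you already note that the two are equivalent.
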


\begin{corollary}[Schedule parameterization]
\label{cor:bijection}
The family in Theorem~\ref{thm:tractability} is parameterized exactly, per mode, by a total variance \(v_k>0\), called the \emph{color}, and an absolutely continuous nondecreasing schedule $\rho_k:[0,1]\to[0,1]$ with $\rho_k(0)=0$ and $\rho_k(1)=1$, called the \emph{allocation}, through $q_k(t)=v_k\dot{\rho}_k(t)$.
\end{corollary}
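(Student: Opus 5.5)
The corollary amounts to a change of variables inside the modal form given by Theorem~\ref{thm:tractability}. I will build two maps and show that each inverts the other. The first sends a family of commuting references, written per mode as $q_k\ge 0$, to a pair $(v_k,\rho_k)$. The second sends a pair $(v_k,\rho_k)$ back to $q_k$. Throughout I fix the modal basis $U$ from item 2 of the theorem. I also assume, as the theorem already does when it divides by $v_k$, that every mode satisfies $v_k=\int_0^1 q_k>0$.

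\emph{Forward map.} Fix a mode with $q_k\ge 0$ almost everywhere and $q_k\in L^1[0,1]$, so that $a_k(t)=\int_0^t q_k$ is finite. Set $v_k:=a_k(1)$ and $\rho_k:=a_k/v_k$, exactly as in the theorem. The following properties are then standard facts about indefinite integrals of integrable functions.
\begin{itemize}
\item $\rho_k$ is absolutely continuous.
\item $\rho_k$ is nondecreasing, because $q_k\ge 0$.
\item $\rho_k(0)=0$ and $\rho_k(1)=1$.
\end{itemize}
By the Lebesgue differentiation theorem, $\dot\rho_k=q_k/v_k$ almost everywhere. This gives $q_k=v_k\dot\rho_k$.

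\emph{Inverse map.} Start from $v_k>0$ and an absolutely continuous nondecreasing $\rho_k$ with $\rho_k(0)=0$ and $\rho_k(1)=1$. Define $q_k:=v_k\dot\rho_k$. The derivative $\dot\rho_k$ exists almost everywhere, is nonnegative, and is integrable, with $\int_0^t\dot\rho_k=\rho_k(t)$ by absolute continuity. Hence $Q(t)=U\diag(q_1(t),\ldots,q_d(t))U^\top$ is positive semidefinite almost everywhere, and its values pairwise commute. So $Q$ belongs to the family characterized in Theorem~\ref{thm:tractability}. Its integrated variance is $a_k(t)=v_k\rho_k(t)$, so applying the forward map to it returns the original $(v_k,\rho_k)$.

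\emph{Bijectivity.} In the other direction, applying the forward map and then the inverse map returns $q_k$ almost everywhere. Since $Q$ is only defined up to null sets, this is equality as an element of the family. The same bijection shows that all the modal quantities in \eqref{eq:pinned-modal} depend on the reference only through $(v_k,\rho_k)$. The pinned mean and variance are given by $\rho_k$ and $v_k\rho_k(1-\rho_k)$. The reverse kernel has mean coefficient $\rho_k(s)/\rho_k(t)$ and variance $v_k\rho_k(s)(1-r_k)$. This justifies calling $(v_k,\rho_k)$ an exact parameterization.

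\emph{Main obstacle.} The substantive step is measure-theoretic. One must pin down that the absolutely continuous nondecreasing functions are exactly the antiderivatives of nonnegative $L^1$ functions. This is the fundamental theorem of calculus for Lebesgue integrals. Without absolute continuity, singular schedules such as a Cantor function would not arise from any $q_k$, which is why the regularity hypothesis is stated.

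A secondary point is the edge case $v_k=0$, where $q_k\equiv 0$ and $\rho_k$ is undefined. It is excluded by the standing hypothesis $v_k>0$.
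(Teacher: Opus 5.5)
Your proposal is correct and follows the same route as the paper's proof. Both rest on the identity $a_k(t)=\int_0^t q_k=v_k\rho_k(t)$: nonnegativity of $q_k$ matches monotonicity of $\rho_k$, the endpoint conditions fix $a_k(0)=0$ and $a_k(1)=v_k$, and the converse is $\rho_k:=a_k/v_k$. Your version simply spells out the Lebesgue-integral details that the paper's three-line proof leaves implicit, namely a.e.\ differentiation, the fundamental theorem of calculus for absolutely continuous functions, and equality up to null sets.
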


Three extensions matter in practice, all proved in App~\ref{app:tractability}.

(i) Static colors: references of the form $Q(t)=\beta(t)\Sigma$, used in prior fixed-color work, form the strict subfamily where all modes share one $\rho$; white $I^2$SB corresponds to $\Sigma=cI$.

(ii) Aliasing blocks: if every $Q(t)$ is block-diagonal under one fixed orthogonal decomposition, the bridge decouples into independent finite-dimensional blocks without within-block commutativity. This correctly models downsampling, where each frequency couples to its foldovers.

(iii) Commuting linear drift: simultaneously diagonalizable $F(t)$ and $Q(t)$ preserve exact modal tractability under modified Gaussian formulas, covering mean-reverting references; a Hilbert-space version holds under a trace condition.

\subsection{Solvable Model and Invisibility Principle}
\label{sec:invisibility}

Theorem~\ref{thm:tractability} separates the operator problem into scalar modes, so we work per mode and omit the index $k$. For each spatial frequency, the degradation model is
\begin{equation}
    x_0 \sim \N(0,S), \qquad x_1 = h\,x_0 + n, \quad n \sim \N(0,N).
\end{equation}
The posterior is $x_0 \mid x_1 \sim \N(Wx_1,P)$, where $W={Sh}/({h^2S+N})$ is the Wiener gain and $P={SN}/({h^2S+N})>0$ is the residual variance. For color $v\ge0$ and bridge level $\rho\in[0,1]$, the reference bridge is $x_\rho = (1-\rho)x_0 + \rho x_1 + \sqrt{v\rho(1-\rho)}\,\xi$.

Let $0=\rho_0<\dots<\rho_T=1$ be the discretization grid. The \emph{plug-in-mean ancestral sampler}---standard $I^2$SB with a Bayes-optimal network---starts at $x_1$. At each level, it computes $\hat x_0=\E[x_0\mid x_{\rho_i},x_1]$ and applies the reverse kernel from Theorem~\ref{thm:tractability}, replacing $x_0$ with $\hat x_0$.

Define $\varphi(\rho) := (1-\rho)P + v\rho$ and
\begin{equation}
K(\rho) = \frac{P}{\varphi(\rho)}, \quad
M(\rho) = \frac{v\rho P}{\varphi(\rho)}, \quad
\Sigma(\rho) = (1-\rho)\varphi(\rho).
\label{eq:phi}
\end{equation}
Here $K(\rho)$ is the conditional gain, $M(\rho)=\Var(x_0\mid x_\rho,x_1)$ is the conditional variance, and $\Sigma(\rho)=\Var(x_\rho\mid x_1)$. Given \(x_1\), every sampler state remains Gaussian: $x_{\rho_i}\mid x_1\sim\N(c_i x_1,V_i)$, with \(c_i\) and \(V_i\) following an explicit affine recursion (App~\ref{app:thm0}).
\begin{theorem}[Invisibility of the reference]
\label{thm:invisibility}
(a) Fix $v>0$ and any grid whose deficit sum in Lemma~\ref{lem:deficit-closed} (App~\ref{app:thm0}) vanishes as $T\to\infty$, including uniform grids and power grids $\rho_i=(i/T)^a$ with $a\ge1$. Then the terminal law converges to the true posterior: $\N(c_0x_1,V_0)\longrightarrow\N(Wx_1,P)$.
More precisely, $D_0/P\to 0$ at rate $O(\log T/T)$ on uniform grids and $O(1/T)$ on power grids with $a>1$. Because the terminal mean is exact, $\KL = {D_0^2}/{4P^2}\,\bigl(1+o(1)\bigr)$; consequently, $\KL = O\!\left(\left({\log T}/{T}\right)^2\right)$ on uniform grids and $\KL=O(1/T^2)$ on power grids with $a>1$.
The limiting law is independent of $v>0$ and the admissible schedule; the reference becomes invisible in the infinite-step limit.
(b) At the boundary $v=0$, the terminal law is $\delta(Wx_1)$
for every finite $T$: the sampler collapses discontinuously to Wiener regression and has infinite KL divergence from the posterior.
\end{theorem}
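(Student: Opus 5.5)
The plan is to work entirely with the one-mode affine recursion for $(c_i,V_i)$ and to analyze the variance deficit $D_i := \Sigma(\rho_i) - V_i$. Here $\Sigma(\rho)=(1-\rho)\varphi(\rho)$ is the exact conditional variance $\Var(x_\rho\mid x_1)$ under the true bridge.

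\emph{Mean.} Since $\hat x_0=\E[x_0\mid x_{\rho_i},x_1]$ is exact, the tower property gives $\E[x_{\rho_{i-1}}\mid x_1]$ equal to the true conditional mean at every step. Concretely, I will verify by induction on the recursion that $c_i=(1-\rho_i)W+\rho_i$. This holds for every $v\ge0$ and every grid, and it gives $c_0=W$ exactly.

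\emph{Variance.} The plug-in sampler loses exactly the variance $\Var(x_0\mid x_{\rho_i},x_1)=M(\rho_i)$, propagated through the mean coefficient of the reverse kernel. Writing $\rho=\rho_i$ and $\rho'=\rho_{i-1}$, the coefficient multiplying $\hat x_0$ in that kernel is $(1-\rho'/\rho)$. This should yield a linear recursion of the form
\[
D_{i-1} = \Bigl(\tfrac{\rho_{i-1}}{\rho_i}\Bigr)^{2} K_i' \, D_i + \Bigl(1-\tfrac{\rho_{i-1}}{\rho_i}\Bigr)^{2} M(\rho_i),
\]
with $D_T=0$ and some contraction factor $K_i'\le1$ coming from the gain $K$. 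I will take Lemma~\ref{lem:deficit-closed} as supplying the unrolled closed form $D_0=\sum_i(\text{product of contractions})\cdot(\Delta\rho_i/\rho_i)^2 M(\rho_i)$.

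Using $M(\rho)\le \min(P,\,v\rho P/\varphi)$, each term is at most $P\,(\Delta\rho_i)^2/\rho_i^2\cdot\min(1,v\rho_i/P)$.
\begin{itemize}
\item On a uniform grid, $\Delta\rho=1/T$ and $\rho_i=i/T$. The terms near $\rho\sim P/v$ behave like $P/(iT)$, so the sum is $O(\log T/T)\,P$.
\item On power grids with $a>1$, $\Delta\rho_i/\rho_i\approx a/i$ and $M(\rho_i)\lesssim v(i/T)^a$. The sum is then dominated by $i$ near $T$ and by small-$\rho$ terms that are summable, giving $O(1/T)$.
\end{itemize}

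\emph{KL.} With equal means, $\KL(\N(W x_1,V_0)\|\N(Wx_1,P))=\tfrac12(V_0/P-1-\ln(V_0/P))$. Setting $V_0=P-D_0$ (since $\Sigma(0)=P$), a Taylor expansion gives $D_0^2/(4P^2)(1+o(1))$, from which the stated rates follow. The limit $\N(Wx_1,P)$ does not involve $v$ or the grid, which is the invisibility claim in (a).

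\emph{Part (b).} Setting $v=0$ gives $\varphi=(1-\rho)P$, $K=1/(1-\rho)$ and $M=0$, and the bridge noise $v\rho(1-\rho)$ vanishes. Each step is then a deterministic affine map of $x_1$, so $V_i=0$ for all $i$ and the terminal law is $\delta(c_0x_1)=\delta(Wx_1)$. Since $P>0$, this point mass is singular with respect to the posterior, and the KL is infinite.

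\emph{Main obstacle.} The hardest step is deriving the exact deficit recursion, in particular confirming that the contraction factors are bounded by $1$ uniformly in $v$. The log in the uniform-grid rate comes from the $1/\rho$ singularity near $\rho=0$, and I will bound that part by cutting the sum at $\rho\asymp P/v$.
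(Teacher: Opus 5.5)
Your skeleton matches the paper's: mean exactness by backward induction, the closed-form deficit of Lemma~\ref{lem:deficit-closed}, reduction to the geometric sum $\Theta_T=\sum_i(\Delta\rho_i)^2/\rho_i$, a Taylor expansion of the Gaussian KL, and the degenerate chain at $v=0$. The gap is in the bounding step. The claim you single out as the main obstacle is false, and the per-term bound you build on it does not hold.

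The one-step contraction is $A_i=r_i+(1-r_i)K(\rho_i)$ with $K(\rho_i)=P/\varphi(\rho_i)>0$. So $A_i>r_i$, and your factor $K_i'=A_i^2/r_i^2$ is larger than $1$, not at most $1$. Moreover, $A_i$ telescopes exactly to $\varphi(\rho_{i-1})/\varphi(\rho_i)$ with $\varphi(\rho)=P+(v-P)\rho$. Hence for $v<P$ every $A_i$ exceeds $1$. Likewise $M(\rho)=v\rho P/\varphi(\rho)\le v\rho$ holds only when $v\ge P$.

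Your bound $D_0\le\sum_iP(\Delta\rho_i/\rho_i)^2\min(1,v\rho_i/P)\le v\,\Theta_T$ would force $D_0\to0$ as $v\to0$ at fixed $T$. In fact the $i=T$ term of the closed form equals $P^3(1-\rho_{T-1})^2/\varphi(\rho_{T-1})^2$, which tends to $P$. This is consistent with $D_0=P$ at $v=0$ in part~(b). So no bound uniform in $v$ is possible, and the constant in $O(\log T/T)$ must depend on $v$.

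The repair is the paper's own argument. Use the telescoped product $\prod_{m<i}A_m^2=\bigl(P/\varphi(\rho_{i-1})\bigr)^2$ together with the sandwich $\min(P,v)\le\varphi\le\max(P,v)$. This gives $D_0\le vP^3\min(P,v)^{-3}\,\Theta_T$. The prefactor grows like $P^3/v^2$ as $v\to0$, but it is a harmless constant for fixed $v>0$.

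On the uniform grid, $\Theta_T=\frac1T\sum_{i\le T}\frac1i$ directly, so no cut at $\rho\asymp P/v$ is needed. On power grids, $(\Delta\rho_i)^2/\rho_i\le a^2i^{a-2}/T^a$, and $\sum_{i\le T}i^{a-2}=O(T^{a-1})$ by comparison with $\int_1^{T+1}u^{a-2}\,du$. For $1<a<2$ these terms are not summable; the $O(1/T)$ comes from the ratio $T^{a-1}/T^a$, not from summability.

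In part~(b), note that $K(1)=P/v$ is undefined at $v=0$. The first step from $\rho_T=1$ must therefore be defined directly as $\hat x_0=Wx_1$. This is legitimate because the deviation $x_{\rho_T}-\bar c_Tx_1=x_1-x_1$ vanishes identically.
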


The proof (App~\ref{app:thm0}) rests on three exact properties. First, \emph{the mean is exact at every finite $T$}: for every grid, $T$, and $v\ge 0$, $c_i=(1-\rho_i)W+\rho_i$. Replacing $x_0$ by its posterior mean removes only its conditional randomness, so finite-step error affects only variance. 

Second, \emph{the contraction telescopes exactly}: the step coefficient is $A_i={\varphi(\rho_{i-1})}/{\varphi(\rho_i)}$, so products of step coefficients reduce to exact ratios of $\varphi$. Third, the \emph{variance deficit} $D_i:=\Sigma(\rho_i)-V_i$ satisfies a linear recursion with a nonnegative source term, yielding:
\begin{corollary}[Systematic underdispersion]
\label{cor:underdispersion}
Suppose $v>0$. Then $V_i<\Sigma(\rho_i)$ for every $i<T$; in particular, $V_0<P$ for every finite $T$. The plug-in-mean sampler has a single failure mode: it produces too little variance.
\end{corollary}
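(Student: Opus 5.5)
We set up the recursion and run backward induction on the deficit $D_i=\Sigma(\rho_i)-V_i$, starting from $i=T$ and descending to $i=0$. At the start, $x_{\rho_T}=x_1$, so $V_T=0=\Sigma(1)$ and $D_T=0$.

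\textbf{Deriving the step.} One step from $\rho_i$ to $\rho_{i-1}$ uses the reverse kernel of Theorem~\ref{thm:tractability} with $x_0$ replaced by $\hat x_0=\E[x_0\mid x_{\rho_i},x_1]$. Conditioning the bridge on $(x_0,x_1)$ gives the kernel
\[
x_{\rho_{i-1}} = r\,x_{\rho_i} + (\text{affine in } x_0,x_1) + \sqrt{\tau_i}\,\xi,
\qquad r=\rho_{i-1}/\rho_i,\quad \tau_i = v\rho_{i-1}(1-r).
\]
Here $\tau_i$ is the kernel variance of the $\rho$-parameterized bridge (the paper's $a_k(s)(1-r_k(s,t))$, in the $v$-scaling). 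Given $x_1$, $\hat x_0$ is affine in $x_{\rho_i}$ with slope $K(\rho_i)$-type gain. Collecting terms, the update is affine in $x_{\rho_i}$ with slope $A_i$ plus independent noise of variance $\tau_i$, so
\[
V_{i-1}=A_i^2V_i+\tau_i .
\]

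\textbf{Exact (ideal) sampler.} Compare with the sampler that draws $x_0$ from its true conditional law $\N(\hat x_0,M(\rho_i))$ instead of plugging in the mean. That sampler preserves the marginals exactly, so
\[
\Sigma(\rho_{i-1})=A_i^2\Sigma(\rho_i)+\tau_i+\gamma_i^2M(\rho_i),
\]
where $\gamma_i$ is the coefficient multiplying $x_0$ in the kernel mean. Subtracting the two recursions gives
\[
D_{i-1}=A_i^2D_i+\gamma_i^2M(\rho_i).
\]
The source term is $\gamma_i^2M(\rho_i)\ge0$.

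\textbf{Strict positivity.} For $v>0$ and $\rho_i\in(0,1]$ we have $M(\rho_i)=v\rho_iP/\varphi(\rho_i)>0$, because $\varphi>0$. The coefficient is $\gamma_i=(1-r)\cdot(\text{positive})$, with $r<1$ since $\rho_{i-1}<\rho_i$, so $\gamma_i\neq0$. At the first step $i=T$ the source is therefore strictly positive, giving $D_{T-1}>0$. Induction then propagates positivity downward: $D_{i-1}\ge A_i^2D_i+(\text{positive})>0$, and $A_i^2\ge0$ is all that is needed. Hence $D_i>0$ for every $i<T$, and in particular $D_0=P-V_0>0$, using $\Sigma(0)=\varphi(0)=P$.

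\textbf{Main obstacle.} The key step is justifying the exact-sampler identity: that drawing $x_0\sim\N(\hat x_0,M)$ and then applying the true reverse kernel reproduces the joint law of $(x_{\rho_{i-1}},x_1)$. This holds because $(x_0,x_{\rho_{i-1}},x_{\rho_i},x_1)$ are jointly Gaussian from the reference bridge. By the Markov property of the bridge, $x_{\rho_{i-1}}$ is conditionally independent of $x_1$ given $(x_{\rho_i},x_0)$. Given that identity, the law of total variance yields exactly the extra term $\gamma_i^2M(\rho_i)$, with no dependence on the specific form of $A_i$. If this cleaner argument were unavailable, the fallback would be to verify the identity algebraically using $A_i=\varphi(\rho_{i-1})/\varphi(\rho_i)$ and the closed forms in \eqref{eq:phi}.
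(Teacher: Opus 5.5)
Your proposal is correct and follows the paper's proof essentially verbatim. Like you, the paper derives $D_{i-1}=A_i^2D_i+(1-r_i)^2M(\rho_i)$ with $D_T=0$ by subtracting the plug-in recursion from the exact ancestral chain's recursion (justified by the two-sided Markov property) and then applies backward induction; your $\gamma_i$ is exactly $1-r_i$. The only thing you leave implicit is the ``single failure mode'' clause, which also needs the mean-exactness result $c_0=W$ (Lemma~\ref{lem:mean-exact}): that is what guarantees the Gaussian terminal law can differ from $\N(Wx_1,P)$ only in its variance.
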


Theorem~\ref{thm:invisibility} is the central negative result. Any meaningful objective for reference design must assign a cost to a finite resource: solver steps (\S\ref{sec:finitestep}) or model accuracy and data (\S\ref{sec:modelerror}). Without such a cost, objectives become degenerate; for example, the integrated Bayes risk of the training target is minimized at the collapsed boundary $v=0$ 
(App~\ref{app:thm0}).

\subsection{Steps: The Exact Finite-Step Theory}
\label{sec:finitestep}
\subsubsection{The exact objective and the form of optimizer}
Unrolling the variance-deficit recursion with the telescoped products gives the
deficit in closed form (Lemma~\ref{lem:deficit-closed}, App~\ref{app:thm0}):
\begin{equation}
\begin{aligned}
D_0(v,P;\text{grid})  =
vP^3\sum_{i=1}^{T}
\frac{(\Delta\rho_i)^2}
{\rho_i\,\varphi(\rho_i)\,
 \varphi(\rho_{i-1})^2},
\\
u
=
1-{D_0}/{P},
\qquad
\mathrm{KL}
=
{1}/{2}\bigl(u-1-\ln u\bigr).
\end{aligned}
\label{eq:deficit}
\end{equation}
This is the exact $T$-step objective. Reference design minimizes $\sum_k \KL_k$ over colors $\{v_k\}$ and, optionally, per-mode schedules (Cor.~\ref{cor:bijection}).

\begin{theorem}[Scale symmetry and proportionality]
\label{thm:proportional}
Fix any shared grid. The per-mode $\KL$ depends on $(v,P)$ only through the dimensionless ratio $x=v/P$: a mode-independent profile $\Phi$ satisfies $\KL(v,P;\text{grid})=\Phi(v/P)$. Since $\Phi(x)\to\infty$ as $x\to0$ or $x\to\infty$, interior optimizers exist. The objective also separates: $\sum_k\KL_k=\sum_k\Phi(v_k/P_k)$.

Consequently:

(i) At every local optimizer, $v_k = x_k P_k$, where each $x_k$ is a
local minimizer of the \emph{same} mode-independent profile $\Phi$.

(ii) At every global optimizer, $x_k \in \operatorname{arg\,min}\Phi$
for every $k$.

(iii) Whenever $\operatorname{arg\,min}\Phi$ is a singleton, every
global optimizer is proportional to the posterior spectrum,
\begin{equation}
v_k^{*}
=
x^{*}(\mathrm{grid},T)\,P_k,
\qquad k=1,\dots,d,
\label{eq:optimal-colour}
\end{equation}
with one mode-independent $x^{*}$ for every $T$. If the minimizer is nonunique, as on some non-uniform grids (Prop.~\ref{prop:uniqueness}), modes may select different local minimizers, yielding non-proportional \emph{local} optima.
\end{theorem}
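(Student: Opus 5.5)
The plan is to read the scale symmetry directly off the closed-form deficit \eqref{eq:deficit}, and then deduce (i)--(iii) from separability. Substitute $v = xP$ into $\varphi(\rho) = (1-\rho)P + v\rho$. This gives $\varphi(\rho) = P\,\psi_x(\rho)$ with $\psi_x(\rho) := (1-\rho) + x\rho$, which depends only on $x$. Each summand of $D_0$ then becomes
\[
\frac{xP\cdot P^3\,(\Delta\rho_i)^2}{\rho_i\,P\psi_x(\rho_i)\,P^2\psi_x(\rho_{i-1})^2}
= P\,\frac{x\,(\Delta\rho_i)^2}{\rho_i\,\psi_x(\rho_i)\,\psi_x(\rho_{i-1})^2},
\]
so $D_0/P$ is a function of $x$ and the grid alone. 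Since $u = 1 - D_0/P$ and $\KL = \tfrac12(u-1-\ln u)$, we obtain $\KL(v,P;\text{grid}) = \Phi(v/P)$ with the same $\Phi$ for every mode. Here $\Phi$ is continuous on $(0,\infty)$, because each summand is continuous and positive there; the $i=1$ term needs $\rho_1>0$, which holds for a grid. Corollary~\ref{cor:underdispersion} gives $0<u\le 1$ for $x>0$, so $\Phi$ is finite.

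For the boundary behaviour I will use Theorem~\ref{thm:invisibility}(b) only as motivation and compute directly. As $x\to 0$, the terminal variance $V_0 = P u$ should tend to $0$, so $u\to0$ and $\Phi\to\infty$. To check this, note that the sampler with $v=0$ is deterministic, and the $c_i, V_i$ recursion depends continuously on $v$. A cleaner route, which I will try first, is to track $u$ directly: if the closed form $D_0/P$ tends to $1$ as $x\to0$ by itself, that settles it. As $x\to\infty$, $\psi_x(\rho_i)\approx x\rho_i$ for $i\ge1$, while $\psi_x(\rho_0)=1$. The $i=1$ term therefore behaves like $x(\Delta\rho_1)^2/(\rho_1\cdot x\rho_1\cdot 1) = 1$ to leading order, and the terms with $i\ge2$ decay like $1/x^2$. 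So I expect $D_0/P\to 1$, hence $u\to0$ and $\Phi\to\infty$, in this limit as well.

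\textbf{The main obstacle} is getting the exact limiting constants right: I must confirm the limit is exactly $1$ rather than merely bounded, using $u>0$ from underdispersion and the telescoping structure. I expect the subleading terms to be what pins this down, and if the direct closed-form limit is delicate I will fall back on the sampler picture: $V_0\to0$ because with huge $v$ the first step from $x_1$ already nearly collapses onto $\hat x_0$. Once $\Phi$ is continuous and blows up at both ends, a minimizer exists on a compact subinterval.

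Separability is immediate: each mode's KL depends only on its own $v_k$ under a shared grid, so $\sum_k\KL_k = \sum_k\Phi(v_k/P_k)$. Parts (i)--(iii) then follow because the map $v_k\mapsto x_k=v_k/P_k$ is a coordinatewise homeomorphism of $(0,\infty)^d$. A point is a local (global) minimizer of the separable sum iff each coordinate is a local (global) minimizer of $\Phi$ in $x_k$. Hence $v_k = x_kP_k$ with $x_k$ a local minimizer of $\Phi$, or with $x_k\in\operatorname{arg\,min}\Phi$ for global optimizers. When the argmin is a singleton $\{x^*\}$, this gives $v_k^* = x^*P_k$. Nonuniqueness of the argmin permits mode-dependent choices, and that is the final remark.
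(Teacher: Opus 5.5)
Your proposal is correct and follows the paper's proof essentially step for step: the substitution $\varphi=P\psi_x$ giving $D_0/P=x\,G(x)$, the separability argument with coordinatewise local and global minimizers, and the $x\to\infty$ barrier obtained from the $i=1$ term tending to $1$. The $x\to0$ barrier that you left to be checked is just as direct, and the paper handles it this way: since $\psi_x(1)=x$, the $i=T$ term equals $(\Delta\rho_T)^2/\psi_x(\rho_{T-1})^2\to(\Delta\rho_T)^2/(1-\rho_{T-1})^2=1$, while every term with $i<T$ is $O(x)$, so you do not need the fallback to continuity of the sampler recursion.
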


This proportionality follows from an exact algebraic symmetry, not an asymptotic argument. In the per-mode Gaussian problem, $P$ is the only scale and $v/P$ the only dimensionless parameter; $T$ controls only the constant.

\begin{proposition}[The proportionality constant]
\label{prop:constant}
On the uniform grid, every minimizer satisfies
\begin{equation}
    x^{*}(T) = (2\ln T)^{-1/2}\bigl(1+o(1)\bigr),
\label{eq:rule}
\end{equation}
and the optimal value satisfies $\KL(x^{*})=\frac{\ln T}{2T^2}\bigl(1+o(1)\bigr)$.

The proof (App~\ref{app:thmB}) establishes the uniform expansion
$T\,{D_0}/{P}=x\ln T+\frac{1}{2x}+1+o(1)$
on the window $x\in[x_0/3,\,3x_0]$ with $x_0 = (2\ln T)^{-1/2}$, together with
global lower bounds excluding minimizers outside the window. A numerical
certificate with exact derivatives independently corroborates the expansion;
the finite-$T$ remainder stabilizes near $1.15$ over $T=10^3$--$10^5$.
\end{proposition}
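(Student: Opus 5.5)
The plan is to work directly with the closed form \eqref{eq:deficit} on the uniform grid $\rho_i=i/T$, $\Delta\rho_i=1/T$. Using the scale symmetry of Theorem~\ref{thm:proportional}, set $P=1$ and $v=x$, so that $\varphi(\rho)=1+(x-1)\rho$ and
\[
T\,\frac{D_0}{P} \;=\; \frac{x}{T}\sum_{i=1}^{T}\frac{1}{\rho_i\,\varphi(\rho_i)\,\varphi(\rho_{i-1})^2}.
\]
Since $u=1-D_0/P$ and $\KL=\tfrac12(u-1-\ln u)=\tfrac14(D_0/P)^2(1+O(D_0/P))$, minimizing $\KL$ is equivalent to minimizing $D_0/P$ whenever $D_0/P\to0$. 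So the problem reduces to the asymptotics of the Riemann-type sum $F_T(x):=T D_0/P$ as a function of $x$.

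\textbf{Step 1: expansion on the window.} For $x\in[x_0/3,3x_0]$ with $x_0=(2\ln T)^{-1/2}$, I would split the sum at $i=1$ and at a scale $\rho\asymp x$, where $\varphi$ transitions from $\approx1-\rho$ to $\approx x\rho$. Near $\rho=1$ we have $\varphi\approx x$, and the $\varphi(\rho_{i-1})^{-2}$ factor produces a contribution $\frac{x}{T}\sum x^{-3}\cdot(\dots)$. The cleanest route is to compare the sum with the integral $x\int_{1/T}^{1}\frac{d\rho}{\rho\varphi(\rho)^3}$ and then add an Euler--Maclaurin correction. The integrand is explicit via partial fractions in $\rho$ and $\varphi$:
\[
\frac{1}{\rho\varphi^3}=\frac{1}{\rho\varphi}+\frac{(1-x)}{\varphi^2}+\frac{(1-x)}{\varphi^3},
\]
up to the exact coefficients. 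This gives the following pieces:
\begin{itemize}[leftmargin=*]
\item the logarithmic term $x\ln T$, from $\int_{1/T}d\rho/\rho$ with $\varphi\approx1$ at small $\rho$;
\item the term $1/(2x)$, from $x\int(1-x)\varphi^{-3}d\rho\approx x\cdot\frac{1}{2x^2}$ near $\rho=1$;
\item the constant $1$, from the $\varphi^{-2}$ term and the $i=1$ boundary term.
\end{itemize}
I would verify that the discretization error (the difference between $\varphi(\rho_{i-1})^2$ and $\varphi(\rho_i)^2$, and Euler--Maclaurin at $\rho=1/T$) is $o(1)$ uniformly on the window. This holds because $x\gg 1/T$, so $\varphi$ varies slowly on mesh scale: its relative change per step is $|x-1|/(T\varphi)\le 1/(Tx)\to0$.

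\textbf{Step 2: locate the minimizer.} Minimize $g(x)=x\ln T+\frac1{2x}$. Setting $g'(x)=0$ gives $x=(2\ln T)^{-1/2}=x_0$, with $g(x_0)=\sqrt{2\ln T}$. Because $g$ is strictly convex with curvature of order $(\ln T)^{3/2}$ on the window, while the remainder is $o(1)$ uniformly, any minimizer $\hat x$ of $F_T$ in the window satisfies $g(\hat x)-g(x_0)=o(1)$. This forces $|\hat x/x_0-1|=o(1)$, since $g(x_0(1+\epsilon))-g(x_0)\approx g(x_0)\epsilon^2\cdot\Theta(1)$ and $g(x_0)\to\infty$. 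The optimal value is then $D_0/P=\sqrt{2\ln T}(1+o(1))/T$, hence $\KL=\frac{2\ln T}{4T^2}(1+o(1))=\frac{\ln T}{2T^2}(1+o(1))$.

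\textbf{Step 3: exclusion outside the window.} This is the main obstacle. It needs global lower bounds $F_T(x)\ge c\,g(x)$ for $x\notin[x_0/3,3x_0]$ that exceed $g(x_0)(1+\delta)$. I would handle three regimes separately.
\begin{itemize}[leftmargin=*]
\item For $x\ge 3x_0$, keep only the terms with $\rho_i\le 1/2$. There $\varphi\le1+x$ bounds the denominator, giving $F_T\gtrsim x\ln T/(1+x)^3$. For very large $x$, instead use $u\to$ small directly: since $\KL\to\infty$ as $x\to\infty$ by Theorem~\ref{thm:proportional}, a crude bound suffices.
\item For $x\le x_0/3$, keep only the last term $i=T$. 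There $\varphi(\rho_{T-1})\approx x+1/T$, which gives $F_T\ge x/(x(x+1/T)^2T^{\,0})$-type bounds of order $1/(2x)$, and this is at least $\tfrac32 g(x_0)$.
\item For the extreme regime $x\lesssim 1/T$, where $D_0/P$ may approach $1$, argue monotonically via $u\to0$, i.e.\ $\KL\to\infty$.
\end{itemize}
Careful bookkeeping of the constants, so that the bounds beat $g(x_0)$ by a fixed factor, is where I expect most of the effort. This is also the step where a numerical certificate could corroborate the constants for moderate $T$.
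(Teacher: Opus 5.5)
Your Steps~1--2 follow the paper's route. Your partial fractions are in fact exact: $\frac{1}{\rho\varphi^3}=\frac1\rho+\frac{b}{\varphi}+\frac{b}{\varphi^2}+\frac{b}{\varphi^3}$ with $b=1-x$. The convexity argument for $x\ln T+\frac1{2x}$ is also the paper's. A few attributions need correcting:
\begin{itemize}
\item The constant $1$ comes entirely from $x\int_0^1 b\varphi^{-2}\,d\rho=1-x$; the $i=1$ term contributes only about $x=o(1)$.
\item The $O(1)$ gap between $\frac1T\sum_i\rho_i^{-1}$ and $\int_{1/T}^1\rho^{-1}d\rho$ is harmless only because it is multiplied by $x\to0$ on the window, not because $\varphi$ varies slowly.
\item Minimizing $\KL$ is \emph{globally} equivalent to minimizing $D_0/P$, since $u\in(0,1)$ and $\Phi$ is strictly decreasing in $u$. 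Step~3 needs this, because $D_0/P$ is not small there.
\end{itemize}

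The genuine gap is in Step~3 for $x\le x_0/3$. The $i=T$ term equals exactly $1/\bigl(T(x+(1-x)/T)^2\bigr)$. Your expression dropped the Riemann weight $1/T$, and the term is of order $1/(Tx^2)$, not $1/(2x)$. On $[T^{-1/2},x_0/3]$ it is at most $1$ (about $18\ln T/T$ at $x=x_0/3$), so it cannot exceed $\sqrt{2\ln T}$. This is exactly the region where the outcome is decided only by a constant ($\tfrac32$ versus $1$).

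The $\frac1{2x}$ barrier is collective: it comes from the $\asymp Tx$ terms with $1-\rho_i\lesssim x$, so you need a tail sum. For $x\le\frac12$, use $\rho_i\le1$, $\varphi(\rho_i)\le\varphi(\rho_{i-1})$, and the monotonicity of $\varphi^{-3}$:
\[
T\,\frac{D_0}{P}\;\ge\;\frac{x}{T}\sum_{i>T/2}\frac{1}{\varphi(\rho_{i-1})^{3}}\;\ge\;x\int_{1/2}^{1-1/T}\frac{d\rho}{\varphi(\rho)^{3}}\;\ge\;\frac{x}{2}\Bigl[\frac{1}{(x+1/T)^{2}}-4\Bigr].
\]
This gives $\frac{1-2T^{-1/2}}{2x}-2x\ge\tfrac32\sqrt{2\ln T}\,(1-o(1))$ on $[T^{-1/2},x_0/3]$, and $\sqrt T/8-2$ on $[T^{-1},T^{-1/2}]$. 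The last term alone is useful only for $x\le 1/T$, where it gives $T/4$.

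The same problem affects your appeals to $\KL\to\infty$ as $x\to0$ or $x\to\infty$. These limits are pointwise in $T$, so they exclude nothing uniformly; you need explicit bounds. Your $\rho_i\le\frac12$ bound $x\ln(T/2)/(1+x)^3$ beats $\sqrt{2\ln T}$ only for $x\lesssim(\ln T)^{1/4}$. Beyond that you need, for example:
\begin{itemize}
\item for $1\le x\le\sqrt T$, the terms with $\rho_i\le 1/(2x)$, where $\varphi\le\frac32$ and $\varphi(\rho_{i-1})\le\varphi(\rho_i)$, giving $\frac{8}{27}\,x\ln\frac{T}{2x}$;
\item for $x\ge\sqrt T$, the $i=1$ term $Tx/(T+x)$.
\end{itemize}
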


Practically, $x^{*}\approx0.2$--$0.6$ for realistic step counts: the optimal
reference injects a modest fraction of the destroyed-information spectrum, and
that fraction shrinks only like $(\ln T)^{-1/2}$.

\begin{proposition}[Uniqueness is grid-dependent]
\label{prop:uniqueness}
At $T=2$, $\Phi$ is strictly unimodal. On uniform grids, uniqueness is proved for $2\le T\le120$ and numerically certified for $T\le200$ and $T\in\{500,1000,5000\}$ (App~\ref{app:thmB}); extending the proof to general $T$ remains an open problem.

On general grids, uniqueness is false. An analytic two-cluster construction shows that, already at $T=3$, the grid $\rho=\bigl({1}/({1+C}),1/2,1\bigr)$ yields at least two local minima of $\Phi$ for every $C\ge100$ (App~\ref{app:thmB}). As $C\to\infty$, the profile splits into two unimodal $T=2$ profiles, one per cluster, at scales $x\asymp1$ and $x\asymp C$. A $T=48$ two-cluster grid has \emph{seven} local minima: each step cluster favors its own noise scale.
\end{proposition}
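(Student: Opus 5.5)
The plan is to reduce everything to the scalar profile $\Phi(x)=\tfrac12(u-1-\ln u)$ with $u=1-D_0/P$, using Theorem~\ref{thm:proportional}. Writing $v=xP$ in Eq.~\eqref{eq:deficit} gives $\varphi(\rho)=P\,g(\rho)$ with $g(\rho)=1-\rho+x\rho$. Hence
\[
\delta(x):=\frac{D_0}{P}=x\sum_{i=1}^{T}\frac{(\Delta\rho_i)^2}{\rho_i\,g(\rho_i)\,g(\rho_{i-1})^2}.
\]
Since $u\mapsto u-1-\ln u$ is strictly increasing for $u<1$, and $u\in(0,1)$ by Cor.~\ref{cor:underdispersion}, minimizers of $\Phi$ coincide with minimizers of $\delta$ (or of its logarithm). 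Every statement is therefore about the explicit rational function $\delta(x)$ for the given grid.

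\emph{Case $T=2$.} The grid is $\rho=(0,\rho_1,1)$ with $g(0)=1$ and $g(1)=x$, so
\[
\delta(x)=x\left[\frac{\rho_1}{g(\rho_1)}+\frac{(1-\rho_1)^2}{x\,g(\rho_1)^2}\right].
\]
With $g(\rho_1)=1-\rho_1+x\rho_1$ affine in $x$, $\delta$ is an explicit rational function of $x$ of low degree. I would compute $\delta'$ and show that its numerator is a polynomial with exactly one positive root, for instance by Descartes' rule of signs after clearing denominators. Combined with $\Phi\to\infty$ at both ends (Theorem~\ref{thm:proportional}), this gives strict unimodality.

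\emph{Uniform grids.} For $2\le T\le 120$ I would make a finite verification rigorous. For each $T$, $x\,\delta'(x)$ times the common denominator is a polynomial of bounded degree with rational coefficients. I would count its positive roots exactly with Sturm sequences in exact arithmetic and check that there is a single sign change of $\delta'$. The certifications for larger $T$ I would cite as numerical interval-arithmetic evaluations of $\delta'$, bracketed by the window and global bounds from Prop.~\ref{prop:constant}, rather than as proofs.

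\emph{Counterexample on general grids.} Take $\rho=(0,\epsilon,1/2,1)$ with $\epsilon=1/(1+C)$. I would split $\delta$ into the first-step term and the two remaining terms.
\begin{itemize}
\item The first step contributes $x\,\epsilon/g(\epsilon)$ and involves $g(0)=1$. In the rescaled variable $y=x\epsilon$, the first-step term together with the coupling to $\rho=1/2$ behaves like a $T=2$ profile in $y$, which has its minimum at $x\asymp C$.
\item The last two steps form a $T=2$-like profile at $x\asymp 1$.
\end{itemize}
I would make this quantitative by showing that, for $C\ge100$, $\delta$ is smaller at two points ($x\approx x_1$ and $x\approx x_2C$) than at an intermediate point such as $x=\sqrt C$. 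Explicit upper bounds at the two points and a lower bound at the middle are elementary. Since $\delta$ is continuous on $(0,\infty)$ and blows up at the ends, this forces at least two local minima.

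The $T=48$ seven-minimum claim I would present as a numerical observation with a verified sign pattern of $\delta'$. The main obstacle is the uniform-grid range up to $T=120$: the Sturm computation is rigorous but the degree grows like $3T$, so I would first try to factor out the $x$-independent structure. The clean analytic obstacle to a general-$T$ proof is controlling $\delta''$ outside the window of Prop.~\ref{prop:constant}; this is why it remains open.
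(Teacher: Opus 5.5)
Your plan is correct and has the same architecture as the paper's proof. You reduce to $H=D_0/P$, settle $T=2$ exactly, certify $2\le T\le120$ by an exact root count for the numerator of $H'$, refute uniqueness on the $T=3$ two-cluster grid by point evaluations, and treat $T=48$ and the larger uniform $T$ numerically. The differences lie in the exact tools.

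\emph{Case $T=2$.} The paper avoids differentiation: via the $z$-identity, $H=J(z_1(x))$, where $J(z)=z+(1-z)^2$ is strictly convex and $z_1=x/(x+\omega_1)$, $\omega_1=(1-\rho_1)/\rho_1$, is a strictly increasing bijection onto $(0,1)$. Your direct route is shorter than you expect. One finds $\delta'(x)=\rho_1(1-\rho_1)\bigl(\rho_1x-(1-\rho_1)\bigr)/g(\rho_1)^3$, whose numerator is linear with the single root $x=\omega_1$, so no Descartes step is needed.

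\emph{Uniform grids, $2\le T\le120$.} The paper applies Descartes' rule to an integer polynomial. This certificate is cheap and easy to recheck, but it is conclusive only because each coefficient sequence happens to have one sign change, which is exactly the property the paper conjectures for all $T$. Your Sturm sequences are always conclusive, but they require exact subresultant computations in degree $\sim3T$.

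\emph{Counterexample.} Writing $H=z_1+(z_2-z_1)^2/z_2+(1-z_2)^2$ with $z_1=x/(x+C)$ and $z_2=x/(x+1)$ turns your ``elementary bounds'' into one-liners. Put $u=1/(1+C)$ and $\bar u=1/(1+\sqrt C)\le\tfrac1{11}$. Then $H(1)=\tfrac34-u+2u^2<\tfrac34$, $H(C)=(3-6u+8u^2-4u^3)/(4(1-u))<\tfrac34$, and $H(\sqrt C)\ge1-3\bar u+4\bar u^2>\tfrac34$. The paper also evaluates at $x=\tfrac1{10}$ and $x=10C$, where $H>\tfrac34$, so it needs no end behaviour; your three-point version relies on the ends instead. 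Your cluster heuristic is exactly the paper's pointwise limits $H(x)\to J(x/(x+1))$ and $H(Cw)\to J(w/(w+1))$.

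A few slips to fix.
\begin{enumerate}
\item The map $u\mapsto u-1-\ln u$ is strictly \emph{decreasing} on $(0,1)$. That is precisely why $\Phi$ is increasing in $\delta=1-u$, so your conclusion about minimizers holds, but the stated reason has the sign reversed.
\item Corollary~\ref{cor:underdispersion} gives only $u<1$. The bound $u>0$ comes from $V_0>0$, since the injected variances $q_i$ are positive for $i\ge2$.
\item $\delta$ does not blow up at the ends. It tends to $1$, its supremum, both as $x\to0$ and as $x\to\infty$; only $\Phi$ blows up. Your three-point argument survives because $H(1),H(C)<\tfrac34<1$, or you can run it on $\Phi$ directly.
\item Lemma~\ref{lem:global} excludes only \emph{global} minimizers outside the window, and only for an inexplicit $T\ge T_0$. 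It therefore cannot bracket a scan whose purpose is to rule out additional local minima; such a scan must cover all of $(0,\infty)$.
\end{enumerate}
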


The counterexample carries a warning: on non-uniform grids,
\emph{schedule design and color design cannot be decoupled}.

\begin{proposition}[Budgets bend the exponent]
\label{prop:budget}
Fix $P_1,\dots,P_d$ and a total-noise budget $V$ with
$V P_{\max}/\sum_j P_j^2 \le \tfrac34$. On uniform grids, as $T\to\infty$,
every global optimizer of $\sum_k\KL_k$ subject to $\sum_k v_k = V$ satisfies
\begin{equation}
v_k^{*}
=
\frac{P_k^2}{\sum_j P_j^2}\,V\,
\left(1+O\!\left(\frac{1}{\ln T}\right)\right),
\qquad k=1,\dots,d.
\label{eq:budget-law}
\end{equation}
The fixed budget places every mode on the increasing branch of $\Phi$: the free optima $x^{*}(T)P_k$ vanish as $T\to\infty$, but the budget remains fixed. The optimal exponent therefore bends from $P_k$ to $P_k^2$. Equal-budget comparisons, common in empirical color sweeps, silently change the optimization problem and steepen the optimum. We report both conventions in \S\ref{sec:experiments} because they answer different questions.
\end{proposition}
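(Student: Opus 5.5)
The plan is to reduce the constrained problem to a small-$x$ expansion of the profile $\Phi$ and then solve a Lagrange condition. By Theorem~\ref{thm:proportional}, $\sum_k \KL_k=\sum_k\Phi(v_k/P_k)$ on the shared uniform grid, so we minimize $\sum_k\Phi(x_kP_k\cdot P_k^{-1})$ with $x_k=v_k/P_k$ subject to $\sum_k x_kP_k=V$. I would fix the constrained scale first: the budget hypothesis forces $x_k$ to stay $O(1)$ at the candidate point $x_k^{(0)}=P_kV/\sum_jP_j^2\le \tfrac34$. Meanwhile $x^{*}(T)\to0$ by Proposition~\ref{prop:constant}. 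Hence, for large $T$, every mode sits on the increasing branch $x\gg x^{*}(T)$, and the constraint is active in the sense that no mode can reach its free optimum.

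\textbf{Step 1 (expansion of $\Phi$ at fixed $x$).} From Eq.~\eqref{eq:deficit}, $\KL=\tfrac12(u-1-\ln u)=\tfrac14 (D_0/P)^2(1+O(D_0/P))$. I would extend the uniform expansion $T D_0/P=x\ln T+\tfrac1{2x}+1+o(1)$ of Proposition~\ref{prop:constant} from the shrinking window to a fixed compact range $x\in[\epsilon,1]$. Here the Riemann-sum analysis of $vP^3\sum(\Delta\rho)^2/(\rho\varphi\varphi^2)$ is simpler: the $i=1$ term and the $\int d\rho/\rho$ singularity give $x\ln T$, and the remainder is $O(1)$ uniformly in $x$. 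This gives
\[
\Phi(x)=\frac{x^2\ln^2T}{4T^2}\Bigl(1+O\bigl(\tfrac{1}{\ln T}\bigr)\Bigr),\qquad
\Phi'(x)=\frac{x\ln^2T}{2T^2}\Bigl(1+O\bigl(\tfrac{1}{\ln T}\bigr)\Bigr),
\]
uniformly on compacts away from $0$. The derivative statement follows by differentiating the sum termwise, which is legitimate because each summand is smooth in $x$ with uniformly controlled derivatives.

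\textbf{Step 2 (leading-order problem and perturbation).} To leading order the objective is proportional to $\sum_k v_k^2/P_k^2$ under $\sum v_k=V$. The Lagrange condition $v_k/P_k^2=\lambda$ gives $v_k=P_k^2V/\sum_jP_j^2$, which is the exponent bend. For the actual optimizer, the KKT condition reads $\Phi'(x_k)/P_k=\lambda$ for all $k$. Substituting the expansion gives $x_k(1+O(1/\ln T))=\lambda' P_k$. Enforcing the constraint fixes $\lambda'$ up to a $(1+O(1/\ln T))$ factor, which yields Eq.~\eqref{eq:budget-law}.

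\textbf{Step 3 (ruling out other optimizers).} This is the main obstacle: showing that every global optimizer lies in the compact region where Step 1 applies. First, I would exclude $x_k\to0$: then $\Phi(x_k)\ge\Phi(x^{*})\sim\ln T/(2T^2)$, while the candidate point has cost of order $\ln^2T/T^2$. This bound alone is not decisive, so I would instead use that shrinking one $v_k$ forces others up, and apply the global lower bound $TD_0/P\ge x\ln T\,(1-o(1))$, valid for all $x$ and obtained from the proof of Proposition~\ref{prop:constant}. This gives $\sum_k\Phi\gtrsim(\ln^2T/4T^2)\sum_k x_k^2$ with error terms, and comparison with the candidate value bounds every $x_k$ from above and away from the degenerate regime. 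Second, I would exclude large $x_k$ using $\Phi(x)\to\infty$ and monotonicity beyond $x^{*}$. Uniqueness of the critical point on the increasing branch then follows from strict convexity of the leading term. The condition $VP_{\max}/\sum P_j^2\le\tfrac34$ keeps all $x_k\le\tfrac34$, inside the range where I can control the $O(D_0/P)$ corrections in $u-1-\ln u$ and where $u$ stays bounded away from $0$.
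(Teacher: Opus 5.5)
Your proposal is correct, but it reaches the key localization step by a different route from the paper. The paper uses the benchmark $\bar x_k=VP_k/\sum_jP_j^2$ only coordinatewise ($\Phi(x_k)\le\Phi^{\mathrm{bm}}_T$), and that rules out only $x_k\le T^{-1/4}$. To push every $x_k$ above a fixed constant, the paper then needs three more ingredients: the derivative expansion $T\,H'=\ln T-\tfrac{1}{2x^2}+O(1/x)$ on the extended window $[T^{-1/4},X]$, a proof that the multiplier $\lambda$ is positive, and a contradiction argument from the Lagrange condition.

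You instead compare the \emph{summed} objective against a matching lower bound. Two facts do the work. First, $\Phi\ge H^2/4$ holds exactly. Second, $T\,H\ge x\,\mathcal{H}_T\ge x\ln T$ for $x\le1$, because $\psi\le1$ there, and $T\,H\ge x(\ln T-O(1))$ on $[1,X]$. Together these give $\sum_kx_k^2\le\sum_k\bar x_k^2\,(1+O(1/\ln T))$ at any global optimizer. You should state explicitly the step you only gesture at (``shrinking one $v_k$ forces others up''). The point $\bar x$ is the minimum-norm point of the hyperplane $\sum_kP_kx_k=V$, so on that hyperplane $\sum_kx_k^2=\sum_k\bar x_k^2+\sum_k(x_k-\bar x_k)^2$. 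Hence $|x-\bar x|=O((\ln T)^{-1/2})$, and every $x_k$ lies in a fixed compact set away from $0$. Your KKT step on that compact then upgrades the rate to $O(1/\ln T)$.

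What each route buys: yours is simpler. It needs $H$ and $H'$ only on a fixed compact away from $0$, where $\mathcal{S}'(x)=O(1)$ is immediate, and it needs no sign analysis of $\lambda$. The paper's first-order route produces a derivative expansion on a wide window that can be reused elsewhere.

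Some of your claims should be corrected, though none of them breaks the argument:
\begin{enumerate}
\item The bound $T\,D_0/P\ge x\ln T\,(1-o(1))$ is not valid for all $x$; it fails when $x$ grows with $T$. However, the constraint already gives $x_k\le V/P_k$, so only a fixed compact matters.
\item Because of that constraint, your separate exclusion of large $x_k$ via ``monotonicity beyond $x^{*}$'' is unnecessary. It is also unproved, since uniqueness of the minimizer of $\Phi$ is open for general $T$.
\item Your closing uniqueness claim is not needed for the statement.
\item The hypothesis $VP_{\max}/\sum_jP_j^2\le\tfrac34$ bounds $\bar x_k$, not the optimizer's $x_k$. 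It is also not what keeps $u$ away from $0$, since $u=1-O(\ln T/T)$ on any fixed compact. In your argument it plays no essential role.
\end{enumerate}
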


\subsubsection{The $z$-identity: color and schedule are exchangeable}
The key structure appears under the change of variables $z(\rho):={v\rho}/{\varphi(\rho)}={M(\rho)}/{P}$, the normalized conditional-MMSE level, with $z_0=0$ and $z_T=1$.
\begin{theorem}[Exact $z$-identity and schedule theory]
\label{thm:zidentity}
For every color $v>0$ and every grid,
\begin{equation}
    D_0
    =
    P\sum_{i=1}^{T}
    {(z_i-z_{i-1})^2}/{z_i}.
    \label{eq:z-identity}
\end{equation}
Consequently:

(a) Color-freeness.
For every $v>0$, the map $\rho\mapsto z(\rho)$ is a bijection of $[0,1]$. Every
color therefore achieves the same set of $z$-grids, and hence the same
schedule-optimized deficit, at every finite $T$: under per-mode schedule
adaptation, discretization error cannot prefer any color.

(b) Unique optimal schedule.
The objective $F(z)=\sum_{i=1}^{T}(\Delta z_i)^2/z_i$ is jointly convex, being
a sum of quadratic-over-linear terms, and its minimizer over pinned grids is
unique.

(c) Explicit forward recurrence.
With $a_i = z_{i-1}/z_i$, the first-order conditions become
$2a_{i+1}=1+a_i^2$, $a_1=0$, giving an explicit forward recurrence
($a_2=\tfrac12$, $a_3=\tfrac58$, $a_4=\tfrac{89}{128},\ \ldots$), with no
shooting required.

(d) The floor and the continuum limit.
The optimal deficit satisfies
$D_0^{*}=\frac{4P}{T}\bigl(1+O({\log^2 T}/{T})\bigr)$, giving a per-mode KL floor $\sim 4/T^2$. The exact optimizer converges to the
continuum schedule $z = s^2$ at the explicit rate
$z_i^{*}=(i/T)^2\exp\bigl(O({\ln(i+1)}/({i+1}))\bigr)$;
in the original variable,
$\rho^{*}(s)={Ps^2}/\bigl({Ps^2+v(1-s^2)}\bigr)$,
and the conditional MMSE is quadratic in solver time: $M(\rho^{*}(s)) = Ps^2$.
\end{theorem}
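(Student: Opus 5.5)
The identity \eqref{eq:z-identity} should follow by direct algebra from the closed-form deficit \eqref{eq:deficit} (Lemma~\ref{lem:deficit-closed}). I would first compute the increment of $z(\rho)=v\rho/\varphi(\rho)$:
\[
z_i-z_{i-1}=\frac{v\bigl(\rho_i\varphi(\rho_{i-1})-\rho_{i-1}\varphi(\rho_i)\bigr)}{\varphi(\rho_i)\varphi(\rho_{i-1})}.
\]
Since $\varphi$ is affine with $\varphi(0)=P$, the bracket collapses to $P\,\Delta\rho_i$. Hence $\Delta z_i = vP\,\Delta\rho_i/(\varphi(\rho_i)\varphi(\rho_{i-1}))$. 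Dividing its square by $z_i=v\rho_i/\varphi(\rho_i)$ gives $vP^2(\Delta\rho_i)^2/(\rho_i\varphi(\rho_i)\varphi(\rho_{i-1})^2)$. This is exactly the $i$-th summand of \eqref{eq:deficit} divided by $P$, so the identity holds termwise for every grid and every $v>0$.

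Part (a) then needs only $dz/d\rho = vP/\varphi(\rho)^2>0$, together with $z(0)=0$ and $z(1)=1$. For $v>0$ the map $\rho\mapsto z$ is therefore an increasing bijection of $[0,1]$, and any $z$-grid is realized by some $\rho$-grid. For (b), each term $(z_i-z_{i-1})^2/z_i$ is a quadratic-over-linear function, hence convex on $z_i>0$. Existence of a minimizer follows from coercivity: the objective blows up or is non-optimal as some $z_i\to0^+$ with $z_{i+1}>0$, so a compactness argument applies on the pinned set $z_0=0$, $z_T=1$. For uniqueness I would use strictness. Equality in convexity along a direction $d$ requires $(d_i-d_{i-1},d_i)\parallel(\Delta z_i,z_i)$ for every $i$. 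Propagating this from $d_0=0$ forces all $d_i$ to be proportional to $z_i$, and the pin $d_T=0$ then kills $d$.

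For (c), I would differentiate $F$ in $z_i$ for $1\le i\le T-1$. Writing $\Delta z_i/z_i=1-a_i$, the stationarity condition $2(1-a_i)-(1-a_i)^2=2(1-a_{i+1})$ simplifies to $2a_{i+1}=1+a_i^2$. The relation $a_1=z_0/z_1=0$ is forced by the pin. Because $F$ is convex, this stationary point is the unique minimizer. The grid is then recovered as $z_i=\prod_{j>i}a_j$ with $z_T=1$, so no shooting is needed.

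Part (d) is the main obstacle, because it requires quantitative error control. Set $b_i=1-a_i$, so that $b_{i+1}=b_i-b_i^2/2$ with $b_1=1$. Then $1/b_{i+1}=1/b_i+\tfrac12+O(b_i)$, which gives $1/b_i=i/2+O(\log i)$, that is, $b_i=\tfrac{2}{i}\bigl(1+O(\log i/i)\bigr)$. Summing $\ln(1-b_j)$ over $j>i$ and comparing with $2\ln(i/T)$ should give $z_i^*=(i/T)^2\exp(O(\ln(i+1)/(i+1)))$; I expect the delicate part to be keeping the $O(\log j/j^2)$ remainders summable. The optimal value is $\sum_i z_i b_i^2$, where each term is $\approx (i/T)^2\cdot 4/i^2=4/T^2$. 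Summing over $T$ terms gives $4/T$, with the $\log^2 T/T$ relative error coming from the accumulated corrections. The KL floor follows from $\KL\approx (D_0/P)^2/4$, as in Theorem~\ref{thm:invisibility}. Finally, inverting $z=v\rho/\varphi(\rho)$ at $z=s^2$ gives $\rho^*(s)=Ps^2/(Ps^2+v(1-s^2))$, and $M=Pz=Ps^2$ by definition of $z$.
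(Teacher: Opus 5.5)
Your proposal is correct and follows essentially the paper's route. The steps match in turn: termwise substitution into the closed-form deficit of Lemma~\ref{lem:deficit-closed} (you make the collapse explicit via $\Delta z_i = vP\,\Delta\rho_i/(\varphi(\rho_i)\varphi(\rho_{i-1}))$), monotonicity of $z(\rho)$, the affine-direction argument for strict convexity of the quadratic-over-linear terms, the first-order conditions $2a_{i+1}=1+a_i^2$, and the recurrence $b_{i+1}=b_i(1-b_i/2)$ bootstrapped to $b_i=\tfrac{2}{i}\bigl(1+O(\ln(i+1)/(i+1))\bigr)$, which is coarser than the paper's expansion but suffices for both the $z_i^{*}$ rate and the $4P/T$ floor. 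The only loose spot is your compactness sketch for existence: on ordered grids $F$ stays bounded at the boundary rather than blowing up. That step is unnecessary anyway, since the recurrence in (c) explicitly constructs an interior stationary point with $a_i\in(0,1)$ for $i\ge2$, and convexity makes it the global minimizer.
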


Theorem~\ref{thm:zidentity} reframes the design space: the reference affects discretization only through its $z$-schedule, while per-mode color is merely a reparameterization. Two practical consequences follow. With a \emph{shared} schedule, as in existing implementations, Theorem~\ref{thm:proportional} determines the color optimum. With frequency-adaptive schedules, discretization is exactly color-free; any color effect must instead arise from learned drift or finite data.

\subsection{Model Error: Where Color Genuinely Lives}
\label{sec:modelerror}
In the solvable model, the exact drift is affine. Any learned affine drift therefore differs by two quantities per level: relative \emph{gain error} $\eta_i$ and additive \emph{bias} $\beta_i$. Since everything is conditioned on $x_1$, the $\beta_i$ may depend on it; this absorbs errors in the drift's $x_1$ coefficient. This two-channel decomposition is exhaustive within the model, and both channels preserve affine-Gaussian structure, making all terminal laws below exact.

\begin{proposition}[Gain error cannot bias the sampler]
\label{prop:gain-unbiased}
With arbitrary gain errors $\{\eta_i\}$, the terminal conditional mean is still
exactly $Wx_1$. Gain error acts purely on the variance channel; only bias moves
the mean.
\end{proposition}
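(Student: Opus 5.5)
We track only the conditional mean of the sampler state given $x_1$ and show that gain errors leave its recursion unchanged. Write the per-step sampler as in the plug-in-mean ancestral sampler. From level $\rho_i$ to $\rho_{i-1}$ the reverse kernel of Theorem~\ref{thm:tractability} gives
\[
x_{\rho_{i-1}} = r_i\,x_{\rho_i} + (1-r_i)\,\hat x_0 + \text{noise},
\]
with $r_i=\rho_{i-1}/\rho_i$ in the modal allocation. The exact predictor is affine, $\hat x_0 = \E[x_0\mid x_{\rho_i},x_1] = m_i(x_1) + K(\rho_i)\bigl(x_{\rho_i}-\mu_{\rho_i}(x_1)\bigr)$. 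Here $\mu_{\rho}(x_1)=\E[x_\rho\mid x_1]=(1-\rho)Wx_1+\rho x_1$ and $m_i(x_1)=Wx_1$ is the posterior mean. The key structural point is that the exact drift is centered: its random part multiplies the \emph{deviation} $x_{\rho_i}-\mu_{\rho_i}(x_1)$.

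Next we formalize the gain error in this centered form. The learned predictor is $\hat x_0^{\eta}=Wx_1+(1+\eta_i)K(\rho_i)\bigl(x_{\rho_i}-\mu_{\rho_i}(x_1)\bigr)+\beta_i(x_1)$. Any change to the intercept that depends on $x_1$ is, by the paper's convention, absorbed into $\beta_i$. So "pure gain error" means $\beta_i\equiv0$ with the centering kept. Taking conditional expectations given $x_1$, let $e_i:=\E[x_{\rho_i}\mid x_1]-\mu_{\rho_i}(x_1)$. The update then gives
\[
e_{i-1}=\bigl(r_i+(1-r_i)(1+\eta_i)K(\rho_i)\bigr)e_i+\bigl[r_i\mu_{\rho_i}+(1-r_i)Wx_1-\mu_{\rho_{i-1}}\bigr].
\]
The bracket vanishes identically. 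This is the same algebra behind the exactness of the mean $c_i=(1-\rho_i)W+\rho_i$ in Theorem~\ref{thm:invisibility}, namely $r_i\bigl((1-\rho_i)W+\rho_i\bigr)+(1-r_i)W=(1-\rho_{i-1})W+\rho_{i-1}$ with $r_i=\rho_{i-1}/\rho_i$. The initial condition is $e_T=0$, since the sampler starts at $x_1=\mu_1(x_1)$. Hence $e_i=0$ for all $i$ by induction, whatever the $\eta_i$. In particular $\E[x_0^{\text{out}}\mid x_1]=\mu_0(x_1)=Wx_1$.

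For the variance statement we note that $\eta_i$ enters only through the multiplier of the deviation. It therefore changes the contraction coefficient $A_i$ of the variance recursion but not the mean. A bias $\beta_i$, by contrast, adds $(1-r_i)\beta_i$ to the $e$ recursion and moves the mean. The main obstacle is definitional: we must pin down that a "gain error" scales the centered affine term rather than the raw coefficient of $x_{\rho_i}$. Scaling the raw coefficient would shift the intercept by $\eta_iK\mu_{\rho_i}(x_1)$, and under the stated decomposition that shift is exactly an $x_1$-dependent bias. So the first thing to check is that the appendix's definition of $(\eta_i,\beta_i)$ splits the drift this way, so that the claimed exhaustiveness and this proposition are consistent.
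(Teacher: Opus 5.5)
Your proof is correct and is essentially the paper's argument. The paper runs the same backward induction on the mean coefficient $c_i$: substituting $c_i=\bar c_i$ makes the two $K(\rho_i)(1+\eta_i)\bar c_i$ terms cancel identically in $\eta_i$, which is exactly your observation that the mean error $e_i$ obeys a homogeneous recursion with $e_T=0$. Your reading of gain error as a scaling of the centered deviation, with raw-coefficient scaling absorbed into an $x_1$-dependent $\beta_i$, matches the paper's error model in App.~\ref{app:error-model}.
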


\begin{theorem}[Exact $z$-reduction of the perturbed sampler]
\label{thm:colorblind}
Work at a fixed $z$-grid with per-level error profile $(\eta(z),\beta(z))$. The
perturbed contraction is
$\hat A_i=A_i\bigl(1+\eta_i\frac{\Delta z_i}{z_i}\bigr)$.
A bias injected at level $i$ reaches the output with weight
$\frac{\Delta z_i}{z_i}\prod_{m<i}\bigl(1+\eta_m\frac{\Delta z_m}{z_m}\bigr)$, which is exactly
$\Delta z_i/z_i$ when $\eta\equiv0$. Thus, the terminal mean error is
\begin{equation}
m_0
=
\sum_{i=1}^{T}
\beta_i\,
\frac{\Delta z_i}{z_i}
\prod_{m<i}
\left(
1+\eta_m\frac{\Delta z_m}{z_m}
\right),
\label{eq:perturbed-mean}
\end{equation}
and the terminal variance has the closed form
\begin{equation}
\begin{aligned}
V_0
&=
P\sum_{j=1}^{T}
z_{j-1}\frac{\Delta z_j}{z_j}
\times
\prod_{m<j}
\left(
1+\eta_m\frac{\Delta z_m}{z_m}
\right)^2,
\end{aligned}
\label{eq:perturbed-variance}
\end{equation}
whose $\eta\equiv0$ case recovers $V_0/P = 1 - \sum_i(\Delta z_i)^2/z_i$.
Every factor depends only on the $z$-grid and error profile. Thus, the sampler's terminal law is identical for every color $v>0$: given the $z$-schedule, sampling remains color-blind under arbitrary drift error. The terminal KL is
\begin{equation}
u={V_0}/{P},
\qquad
\KL
=
{1}/{2}
\left(
u+{m_0^2}/{P}-1-\ln u
\right).
\label{eq:perturbed-kl}
\end{equation}
\end{theorem}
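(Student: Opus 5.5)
The plan is to redo the affine-Gaussian recursion behind Theorem~\ref{thm:invisibility} with the learned drift perturbed, and then change variables to $z$. Every coefficient should then collapse to a function of $\Delta z_i/z_i$ alone. All laws are conditional on $x_1$, so I treat $x_1$ as a constant and track only the coefficients of the state and of the injected noises.

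\textbf{Step 1: the perturbed one-step map.} Write the exact plug-in estimate as $\hat x_0=\E[x_0\mid x_{\rho_i},x_1]=\alpha_i(x_1)+g_i\,x_{\rho_i}$. Its gain comes from $K(\rho)$ in~\eqref{eq:phi}; from the Gaussian regression of $x_0$ on $x_\rho$ given $x_1$ one expects $g_i=K(\rho_i)(1-\rho_i)/(1-\rho_i)\cdot$(normalization), which will be pinned down explicitly. I model the learned estimate as $\alpha_i+g_i(1+\eta_i)(x_{\rho_i}-c_ix_1)+\beta_i$. Centering at $c_ix_1$ makes the gain error act only on fluctuations, which is the content of Proposition~\ref{prop:gain-unbiased}.

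Substituting into the reverse kernel of Theorem~\ref{thm:tractability}, with $r_i=\rho_{i-1}/\rho_i$ and noise variance $v\rho_{i-1}(1-r_i)$, gives a centered recursion $e_{i-1}=\hat A_i e_i+(1-r_i)\beta_i+\sqrt{v\rho_{i-1}(1-r_i)}\,\xi_i$. Here $\hat A_i=r_i+(1-r_i)g_i(1+\eta_i)$, and $e_T=0$ because $x_{\rho_T}=x_1$. At $\eta=0$ this must reduce to the known $A_i=\varphi(\rho_{i-1})/\varphi(\rho_i)$. The perturbation term is therefore $(1-r_i)g_i\eta_i$.

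\textbf{Step 2: translate to $z$.} Using $z=v\rho/\varphi$, I invert to $\rho=Pz/(Pz+v(1-z))$, so that $\varphi=Pv/(Pz+v(1-z))$. I then verify the three algebraic identities that drive everything:
\[
(1-r_i)g_i/A_i=\Delta z_i/z_i,\qquad
\frac{1-r_i}{\prod_{m\ge i}A_m}\cdot(\text{output map})=\Delta z_i/z_i,\qquad
\frac{v\rho_{i-1}(1-r_i)}{P}\prod_{m<i}A_m^2=z_{i-1}\Delta z_i/z_i.
\]
The first gives $\hat A_i=A_i(1+\eta_i\Delta z_i/z_i)$. For the second and third, the $A$-products telescope to ratios of $\varphi$ exactly as in Theorem~\ref{thm:invisibility}. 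They are then rewritten in $z$, and $v$ must cancel identically. This algebra is the main obstacle. In particular, I need the correct bookkeeping for whether the terminal output is the last state $x_{\rho_0}$ or the final estimate $\hat x_0$, since that determines which products are indexed by $m<i$. I would first check it symbolically at $T=1,2$, then prove the general case by induction on $T$.

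\textbf{Step 3: unroll and conclude.} Unrolling the linear recursion from $i=T$ down to $0$ gives $m_0=\sum_i(\text{bias weight})_i\beta_i$, which is~\eqref{eq:perturbed-mean}. The noises are independent, so $V_0=\sum_j(\text{noise variance})_j\times(\text{squared product of later contractions})$, which is~\eqref{eq:perturbed-variance} after Step~2. Setting $\eta\equiv0$ gives $V_0/P=\sum_j z_{j-1}\Delta z_j/z_j=\sum_j(\Delta z_j-(\Delta z_j)^2/z_j)$. This telescopes to $1-\sum_j(\Delta z_j)^2/z_j$, using $z_T-z_0=1$, which is consistent with Theorem~\ref{thm:zidentity} and serves as a check on Step~2.

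Since every expression is a function of the $z$-grid and $(\eta,\beta)$ only, the terminal law $\N(Wx_1+m_0,V_0)$ is color-blind. Finally, the KL between two scalar Gaussians with the posterior $\N(Wx_1,P)$ gives~\eqref{eq:perturbed-kl}.
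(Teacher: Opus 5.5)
Your proposal is correct and follows the paper's proof essentially step for step. Both use the centered affine recursion, the key identity $(1-r_i)K(\rho_i)/A_i=\Delta z_i/z_i$ to obtain $\hat A_i$, the telescoped product $\prod_{m<i}A_m=P/\varphi(\rho_{i-1})$ to convert the bias and noise weights to $z$ (your third identity is exactly what the paper uses implicitly), and (S6) for the KL. Your bookkeeping worry is moot: $r_1=0$ and $q_1=v\rho_0(1-r_1)=0$ make the last state $x_{\rho_0}$ equal to the final perturbed estimate, and the gain is simply $g_i=K(\rho_i)$. Unrolling $e_{i-1}=\hat A_ie_i+(1-r_i)\beta_i+\sqrt{q_i}\,\xi_i$ from $e_T=0$ then gives the weights $(1-r_i)\prod_{m<i}\hat A_m$ directly, with no separate ``output map'' factor.
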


\begin{proposition}[The learning problem is also color-blind, per mode]
\label{prop:learning-null}
At level $z$, the per-mode joint law of $(x_0, x_t)$ given $x_1$ is determined
up to scale by $z$ alone: $\mathrm{corr}^2(x_0,\,\text{deviation}\mid x_1)
= 1 - z$ exactly. Hence any scale-equivariant learner has a color-free
relative-error law at each $z$-level; for example, zero-intercept OLS from
$n$ pairs has $\Var(\hat K/K - 1) = z/\big(n(1-z)\big)$ on a normalized
design $\sum_i X_i^2 = n\tau^2$ (random design replaces $n$ by $n-2$).
With Theorem~\ref{thm:colorblind}, color is therefore invisible end to end in the per-mode scalar model, given the $z$-schedule.
\end{proposition}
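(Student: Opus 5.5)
The plan is to work conditionally on $x_1$ and reduce everything to a centered bivariate Gaussian whose only scale-free parameter is $z$. Given $x_1$, write $x_0 = Wx_1 + e$ with $e\sim\N(0,P)$, and define the deviation $d := x_\rho - \E[x_\rho\mid x_1]$. From the reference bridge $x_\rho=(1-\rho)x_0+\rho x_1+\sqrt{v\rho(1-\rho)}\,\xi$ we get
\[
d=(1-\rho)e+\sqrt{v\rho(1-\rho)}\,\xi .
\]
Hence $(e,d)\mid x_1$ is a centered bivariate Gaussian with the following moments:
\begin{itemize}
\item $\Var(e)=P$;
\item $\Var(d)=(1-\rho)^2P+v\rho(1-\rho)=(1-\rho)\varphi(\rho)=\Sigma(\rho)$;
\item $\Cov(e,d)=(1-\rho)P$.
\end{itemize}
The squared correlation is therefore $(1-\rho)^2P^2/\bigl(P(1-\rho)\varphi\bigr)=(1-\rho)P/\varphi$. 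On the other side, $1-z=1-v\rho/\varphi=\bigl((1-\rho)P+v\rho-v\rho\bigr)/\varphi=(1-\rho)P/\varphi$, which gives the identity $\corr^2=1-z$. Since a centered bivariate Gaussian is fixed by its two marginal scales and its correlation, the rescaled pair $(e/\sqrt P,\ d/\sqrt{\Sigma})$ has a law depending on $z$ alone. This is the claimed ``determined up to scale by $z$'' statement.

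Next I would formalize scale-equivariance. The per-mode learning target is the regression slope $K=\Cov(e,d)/\Var(d)=(1-\rho)P/\Sigma$, together with the $x_1$-coefficient, which is absorbed into the bias channel as in \S\ref{sec:modelerror}. A learner $\hat K$ is scale-equivariant if replacing the data $(d_i,e_i)$ by $(\alpha d_i,\gamma e_i)$ maps $\hat K$ to $(\gamma/\alpha)\hat K$. The true slope transforms the same way. Thus $\hat K/K$ is a function of the normalized data only, and its law depends only on $z$. The relative gain error $\eta=\hat K/K-1$ therefore has a color-free distribution at each $z$-level.

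For the OLS example, write $e_i=KX_i+\varepsilon_i$ with $X_i=d_i$. Here $\varepsilon_i$ is independent of $X_i$ with variance $\Var(e\mid d)=P(1-\corr^2)=Pz$. Then $\hat K-K=\sum_iX_i\varepsilon_i/\sum_iX_i^2$. On the design $\sum_i X_i^2=n\tau^2$ with $\tau^2=\Sigma$, this has variance $Pz/(n\Sigma)$. Dividing by $K^2=(1-\rho)^2P^2/\Sigma^2$ gives
\[
\frac{z\,\Sigma}{n(1-\rho)^2P}=\frac{z}{n\,\corr^2}=\frac{z}{n(1-z)} .
\]
For random design, $\sum_i X_i^2\sim\Sigma\,\chi^2_n$, and $\E[1/\chi^2_n]=1/(n-2)$ replaces $n$ by $n-2$.

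Finally, I would combine this with Theorem~\ref{thm:colorblind}. Given the $z$-grid, the terminal law depends only on $(\eta(z),\beta(z))$. We have just shown that the law of $\eta$ at each level depends only on $z$ (and $n$). The bias channel similarly reduces, after normalizing by $\sqrt P$, to normalized quantities. Hence color is invisible end to end.

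The only delicate point I expect is making the bias channel's scale-equivariance precise, i.e., checking that $\beta/\sqrt P$ also has a color-free law. I would handle it by the same normalization argument applied to the intercept estimator.
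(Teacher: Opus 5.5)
Your proposal is correct and follows essentially the paper's route: compute the conditional second moments of $(x_0,\text{deviation})\mid x_1$ to get $\corr^2=1-z$, invoke scale-equivariance, evaluate the OLS variance as $M/(n\Sigma K^2)=z/(n(1-z))$ with the $n-2$ random-design correction, and then combine with Theorem~\ref{thm:colorblind}. Where the paper only asserts the correlation identity (``symbolic residual $0$'') and leaves scale-equivariance undefined, you supply the explicit moment calculation and a precise definition; you also flag that $\beta/\sqrt{P}$ must be color-free, a point the paper's proof passes over, and handling it as you plan makes the ``end to end'' claim more complete.
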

\begin{figure*}[!htb]
    \centering
    \includegraphics[width=\columnwidth]{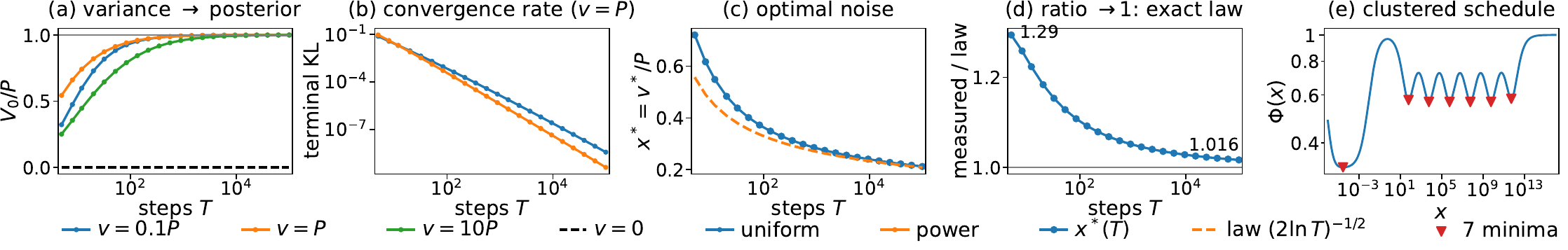}
    \caption{\textbf{Exact finite-step predictions.} (a) Invisibility
(Theorem~\ref{thm:invisibility}). (b) Terminal KL rates, uniform vs.\ power
grid. (c,d) The constant (Prop.~\ref{prop:constant}): measured $x^{*}(T)$
against $(2\ln T)^{-1/2}$, and their ratio. (e) Reference multimodality
(Prop.~\ref{prop:uniqueness}): seven local minima of $\Phi$ on a
clustered $T{=}48$ schedule, certified in 50-digit arithmetic.}
    \label{fig:exact}
\end{figure*}
Color therefore matters only through what the per-mode scalar model cannot see:
(a) scale-non-equivariant learning, and (b) cross-mode
coupling. Mechanism (a) is present in every trained network (weight decay,
ridge penalties, finite initialization scales), and its minimal model already
breaks proportionality in a definite direction:

\begin{proposition}[Ridge whitens the optimal reference]
\label{prop:ridge}
A shared ridge penalty $\lambda$ induces relative shrinkage $\eta_k(z)=-{\lambda}/({\lambda+n\Sigma_k(z)})$ (same convention), a \emph{dimensionful} bias that breaks the scale symmetry protecting $v\propto P$. Exact-chain optimization yields a sharply decreasing $x^{*}(nP)$: at $T{=}100$ and $\lambda{=}1$, it falls from $11.92$ at $nP{=}10$ to $0.363$ as $nP\to\infty$. Weakly observed frequencies therefore need disproportionately more reference noise. The law bends sub-proportionally, $v_k^{*}=x^{*}(nP_k)P_k$, so \emph{regularization whitens the optimal reference}; pure $v\propto P$ returns only in the equivariant, infinite-data limit.
\end{proposition}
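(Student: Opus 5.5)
We prove Proposition~\ref{prop:ridge} in three steps: derive the shrinkage law from the ridge estimator, feed it into Theorem~\ref{thm:colorblind} as a gain-error profile, and then analyse the resulting one-dimensional color objective per mode.

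\textbf{Step 1: the shrinkage law.} At level $z$, mode $k$ regresses $x_0$ on the centred deviation $X=x_\rho-\E[x_\rho\mid x_1]$, whose variance is $\Sigma_k(z)$. With $n$ pairs, the ridge estimator is $\hat K=\sum X_iY_i/(\sum X_i^2+\lambda)$. On the normalized design $\sum X_i^2=n\Sigma_k$ of Proposition~\ref{prop:learning-null}, this has mean $K\,n\Sigma_k/(n\Sigma_k+\lambda)$. The relative gain error is therefore $\eta_k(z)=-\lambda/(\lambda+n\Sigma_k(z))$. This is deterministic shrinkage, and we first ignore the variance term, which Proposition~\ref{prop:learning-null} shows is color-free anyway.

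\textbf{Step 2: where the scale enters.} Express $\Sigma_k$ in the $z$ variable. Since $\varphi(\rho)=vP/(v-(v-P)z)$ and $\rho=Pz/(v-(v-P)z)$, we get
\[
\Sigma=(1-\rho)\varphi=\frac{vP(v-P)(1-z)}{(v-(v-P)z)^2}\cdot\frac{1}{?}
\]
up to routine algebra. The expected form is $\Sigma = P\,g(x,z)$ with $x=v/P$. Hence
\[
\eta_k(z)=-\frac{1}{1+(nP_k/\lambda)\,g(x_k,z)}.
\]
This depends on the dimensionful combination $nP_k/\lambda$ as well as on $(x_k,z)$.

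Plugging this into \eqref{eq:perturbed-variance} with $\beta\equiv0$ (Proposition~\ref{prop:gain-unbiased} keeps the mean exact) gives a per-mode KL of the form $\Psi(x_k;\,nP_k/\lambda,\,\text{grid})$. The objective still separates across modes, so the argument of Theorem~\ref{thm:proportional} applies verbatim with $\Phi$ replaced by $\Psi(\cdot\,;nP_k)$. This yields the law $v_k^*=x^*(nP_k)P_k$.

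\textbf{Step 3: the limits and monotonicity.} As $nP\to\infty$ we have $\eta\to0$ uniformly on $z$ bounded away from the endpoints, and $\Psi\to\Phi$. The minimizer therefore converges to the shared-grid $x^*(T)$ of Theorem~\ref{thm:proportional}, which at $T=100$ is the quoted $0.363$.

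For small $nP$, shrinkage is strongest where $\Sigma/P$ is small, and $\Sigma/P$ grows with $x$. Larger $x$ therefore restores the gain, which drives $x^*$ up. The values $11.92$ and $0.363$ are exact-chain numerical evaluations, which we would certify with the same high-precision minimization used for Proposition~\ref{prop:uniqueness}.

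Whitening follows from this monotonicity. If $x^*(\cdot)$ is decreasing, then $v_k^*/P_k$ is larger for small $P_k$, so the spectrum is flattened toward white noise.

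\textbf{Main obstacle.} The hard part is proving that $x^*(nP)$ is decreasing in general rather than only numerically. My first attempt would use the implicit function theorem on $\partial_x\Psi=0$. The goal is to show $\partial^2_{x,\,nP}\Psi>0$, i.e.\ increasing $nP$ raises the marginal cost of noise, by differentiating \eqref{eq:perturbed-variance} through the products $\prod(1+\eta_m\Delta z_m/z_m)^2$. If the sign cannot be controlled for all $T$, I would state monotonicity as numerically certified and prove only the two limits, as the paper does for uniqueness.
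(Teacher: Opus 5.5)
This is essentially the paper's argument. The appendix likewise takes the ridge shrinkage $\eta_k(z)=-\lambda/(\lambda+n\Sigma_k(z))$ and notes that the dimensionful $\Sigma_k$ makes each mode's exact-chain KL depend on $nP_k/\lambda$ while the modes remain separable, whence $v_k^{*}=x^{*}(nP_k)P_k$. It obtains $11.92$, $0.363$ and the decrease of $x^{*}(nP)$ purely from exact-chain numerics, with no analytic monotonicity proof, so your fallback is exactly where the paper stops.

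Two slips need fixing. First, your displayed $\Sigma$ is wrong; it should read $\Sigma=(1-\rho)\varphi(\rho)=P(1-\rho)\bigl(1+(x-1)\rho\bigr)=P\,x^{2}(1-z)/\bigl(x(1-z)+z\bigr)^{2}$, which is the $g$ you want. Second, the ridge sampling variance is not color-free: on the normalized design $\Var(\hat K/K)=\frac{z}{n(1-z)}(1+\eta)^{2}$. Dropping it is therefore the proposition's deterministic-shrinkage modeling choice, not a consequence of Proposition~\ref{prop:learning-null}.
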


Two further results follow from this machinery (proofs in
App~\ref{app:thmC}). \textbf{Distortion--perception trade-off:} the
posterior-mean estimator has MSE exactly $P$, independent of the
reference, while the sampled output has MSE $= P + V_0 \in [P, 2P)$; the
classical factor-2 cost of posterior sampling thus emerges as a theorem, with
the reference tracing the entire trade-off curve. \textbf{Principled
underdispersion correction:} since the sampler's main failure is
underdispersion (Corollary~\ref{cor:underdispersion}), deliberate gain
inflation is a calibrated fix, with an exact condition: choose $\eta(z)$ so
that the reweighted sum in \eqref{eq:perturbed-variance} reaches $P$, the
closed-form counterpart of temperature and churn heuristics. Theorem~\ref{thm:colorblind} also yields a \textbf{measurement protocol} for
real networks: probe the trained drift's Jacobian along the path to estimate
$\hat\eta(z)$, then \emph{predict} the sampler's terminal variance via
\eqref{eq:perturbed-variance}, without retraining, just one forward analysis. We use
this protocol in \S\ref{sec:experiments}.
\section{Experiments}
\label{sec:experiments}
\subsection{Exact Numerical Evaluation}
\label{sec:exp-exact}
The deterministic affine recursions of
Sections~\ref{sec:invisibility}--\ref{sec:modelerror} isolate
finite-step effects from sampling and training error; every quantity is
computed in closed form and cross-checked to machine precision
(App~\ref{app:exact-details}).

Figure~\ref{fig:exact} confirms each prediction. For $P{=}v{=}4$, the
terminal variance reaches $3.9995$ at $T{=}10^5$, while $v=0$ collapses
(Theorem~\ref{thm:invisibility}). The measured optimum tracks the
$(2\ln T)^{-1/2}$ law, with the ratio falling from $1.29$ at $T{=}5$ to
$1.017$ at $T{=}10^5$ (Prop.~\ref{prop:constant}), and the clustered
grid exhibits the predicted multimodality
(Prop.~\ref{prop:uniqueness}). In a two-mode problem, the matched
allocation minimizes KL at every tested $T$; at $T{=}200$, the
Bayes-risk vertex has $294\times$ larger KL
(Table~\ref{tab:twomode} in App~\ref{app:exact-details}). Optimized $z$-schedules remove
reference-color dependence to numerical precision
(Theorem~\ref{thm:zidentity}(a)), and the budgeted optimum moves toward the
predicted $P^2$ allocation (Prop.~\ref{prop:budget}).

\subsection{Learned Gaussian Models}
\label{sec:exp-gaussian}
We train per-mode predictors on $64$--$256$ independent modes with
$S_k\sim k^{-2}$, a Gaussian modulation transfer function, and flat
observation noise, comparing white, matched ($v\propto P$),
anti-matched ($v\propto1/P$), and prior-colored ($v\propto S$)
references at equal total budget ($3$ seeds, common random numbers).
\begin{figure}[!htb]
    \centering
    \includegraphics[width=.6\columnwidth]{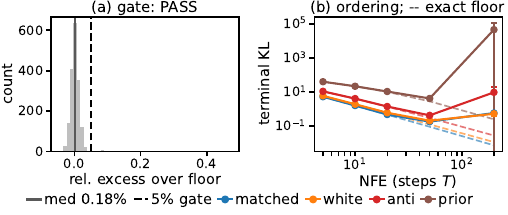}
    \caption{\textbf{Learned Gaussian models.} (a) Relative excess of
    the converged loss over the closed-form Bayes floor for
    $\rho\in[0.1,0.95]$: median $0.18\%$ against the pre-registered
    $5\%$ gate. (b) NFE-vs-KL by reference ($3$ seeds, mean$\pm$std;
    dashed lines are exact-drift floors).}
    \label{fig:gaussian}
\end{figure}
The learned models reach the theory's floor and reproduce its ordering
(Fig.~\ref{fig:gaussian}). After $12$k steps, the median relative
excess over the closed-form Bayes floor is $0.18\%$, well inside the
pre-registered $5\%$ gate. 
At NFE $50$, total KL runs from $0.19{\pm}0.03$ (matched) to
$4.3{\pm}0.3$ (prior-colored), with white and anti-matched
between---the exact-drift ordering, at learned drifts.
At NFE $200$, gain error (Prop.~\ref{prop:gain-unbiased}) dominates
discretization error: matched and white overlap within seed variation, while the colored references become unstable
(App~\ref{app:gaussian-details}).

The measured optimal scale agrees with theory where the model is
accurate: $0.575$ vs.\ $0.577$ at $T{=}10$ and $0.402$ vs.\ $0.404$ at
$T{=}50$. At $T{=}200$ it sits $8\%$ above the exact-drift value, the
direction predicted for finite-data regularization by
Proposition~\ref{prop:ridge}; the ridge sweep, the Jacobian
probe (median $0.6\%$ relative error in predicted terminal variance,
with no retraining), and the per-mode-schedule null are in
App~\ref{app:gaussian-details}.

\begin{table}[!htb]
\small
    \centering
    \caption{\textbf{FFHQ restoration at NFE $50$} (original recipe;
    $\sigma_{\mathrm{blur}}{=}2.0$, $\sigma_n{=}0.05$). FID from $50$k samples; white and matched use three
    seeds (mean$\pm$std), others one, so bold gaps ${<}0.1$ are within
    seed noise.}
    \label{tab:ffhq}
    \setlength{\tabcolsep}{3.5pt}
    \begin{tabular}{lcccc}
        \toprule
        reference & PSNR$\uparrow$ & SSIM$\uparrow$ & LPIPS$\downarrow$ & FID$\downarrow$ \\
        \midrule
        white & $25.01$ & $0.836$ & $\mathbf{0.0337}$ & $\mathbf{7.87}{\pm}0.06$ \\
        matched $v\propto P$ & $24.98$ & $0.836$ & $0.0344$ & $8.45{\pm}0.13$ \\
        matched, equal budget & $24.97$ & $0.836$ & $0.0370$ & $9.12$ \\
        anti-matched & $\mathbf{25.75}$ & $\mathbf{0.854}$ & $0.0426$ & $14.14$ \\
        fixed color ($\alpha{=}1$) & $25.04$ & $0.837$ & $0.0338$ & $7.93$ \\
        fixed color ($\alpha{=}2$) & $25.21$ & $0.840$ & $0.0354$ & $8.46$ \\
        \bottomrule
    \end{tabular}
\end{table}

\begin{figure*}[!htb]
    \centering
    \includegraphics[width=\columnwidth]{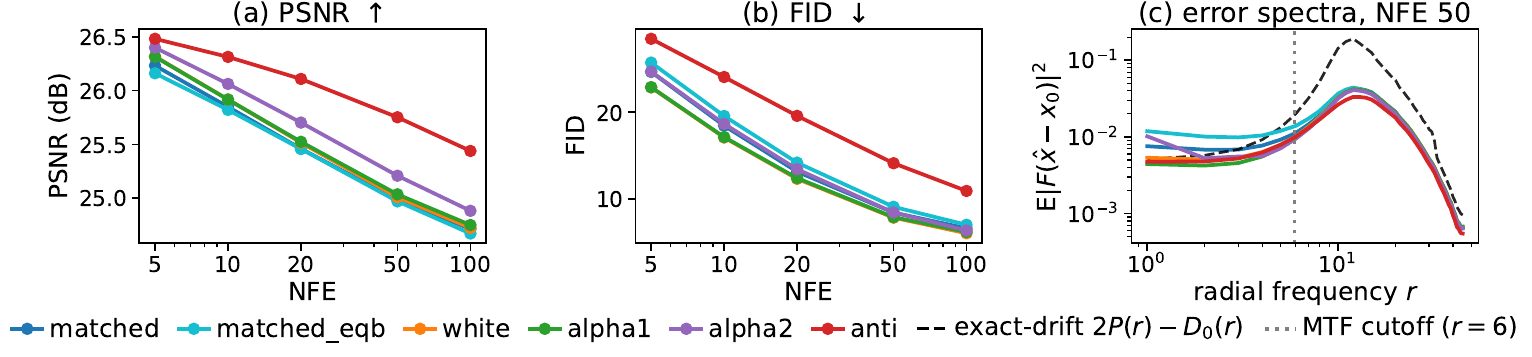}
    \caption{\textbf{FFHQ under the original recipe} (common random numbers;
    matched and white mean$\pm$std over $3$ seeds). (a) PSNR and (b) FID
    against NFE: anti-matched attains the best distortion and the worst
    perception at every NFE---the distortion--perception trade-off of
    App~\ref{app:thmC}. (c) Radially averaged error spectra at NFE
    $50$ against the exact-drift prediction $2P(r)-D_0(r)$ (dashed);
    reference differences concentrate near the MTF cutoff (SSIM, LPIPS,
    error-ratio view, training fingerprint: App~\ref{app:ffhq-details}).}
    \label{fig:ffhqcurves}
\end{figure*}

\subsection{FFHQ Under a Known Degradation}
\label{sec:exp-ffhq}

We train bridge models on FFHQ at $64{\times}64$ with known Gaussian blur and flat observation noise; $P_k$ and the matched reference are
therefore fixed by the theory rather than tuned. All references share one training recipe, one solver, and common evaluation randomness (App~\ref{app:ffhq-details}).

Two predictions transfer exactly. First, the
distortion--perception trade-off: anti-matched gives the best PSNR and SSIM and the worst LPIPS and FID at every NFE
(Table~\ref{tab:ffhq}, Fig.~\ref{fig:ffhqcurves}), as the theory
predicts for a reference that suppresses variance where the sensor is blind: the posterior mean sharpens while sampled detail is lost (App~\ref{app:thmC}). Second, the predicted spectral localization:
reference-dependent error differences concentrate near the MTF cutoff, where $P(k)$ transitions (Fig.~\ref{fig:ffhqcurves}(c)).

The fine-grained matched-first ordering does not transfer. White and
the fixed $\alpha{=}1$ reference attain lower FID and LPIPS than
matched over the tested NFE range (Table~\ref{tab:ffhq}). Re-evaluating every trained network
on the $z$-optimal schedule of Theorem~\ref{thm:zidentity} improves all
references by a similar amount and leaves the white--matched gap nearly
unchanged ($0.58\to0.53$; App~\ref{app:ffhq-details}), so shared
discretization error does not explain the gap. By
Theorem~\ref{thm:colorblind}, what remains is the learned drift: either
scale-non-equivariant learning (Prop.~\ref{prop:ridge}) or cross-mode
coupling. The next two studies separate them.

\begin{table}[!htb]
\small
    \centering
    \caption{\textbf{FFHQ under the converged recipe}
    (dropout $0$, batch $512$, $300$k steps; $v\propto P_k^{\theta}$ at
    fixed budget). FID from $50$k samples; endpoints mean$\pm$std over
    $5$ seeds, $\theta$ one seed.}
    \label{tab:conv}
    \setlength{\tabcolsep}{4pt}
    \begin{tabular}{lccc}
        \toprule
        reference & FID@$10\downarrow$ & FID@$50\downarrow$ & LPIPS@$50\downarrow$ \\
        \midrule
        white ($\theta{=}0$) & $9.62{\pm}0.06$ & $\mathbf{3.53}{\pm}0.02$ & $\mathbf{0.0323}$ \\
        $\theta{=}0.25$ & $\mathbf{9.42}$ & $3.58$ & $0.0326$ \\
        $\theta{=}0.5$ & $9.75$ & $3.78$ & $0.0332$ \\
        $\theta{=}0.75$ & $10.77$ & $4.15$ & $0.0343$ \\
        matched ($\theta{=}1$) & $12.05{\pm}0.15$ & $4.71{\pm}0.09$ & $0.0355$ \\
        \bottomrule
    \end{tabular}
\end{table}

\subsection{Training Regime and Reference Exponent}
\label{sec:exp-regime}
Ridge whitening (Prop.~\ref{prop:ridge}) predicts that improved optimization should shift the optimal reference toward $v\propto P_k$. We test it with a pre-registered
intervention and a
sweep $v\propto P_k^{\theta}$, $\theta\in\{0,\tfrac14,\tfrac12,\tfrac34,1\}$. 
The converged recipe improves both matched and white in absolute terms, but it increases the matched--white FID gap on the same $50$k evaluation set: the gap grows from $1.36$ to $2.43$ at NFE~$10$ and from $0.58$ to $1.18$ at NFE~$50$. The endpoint prediction is therefore refuted.

Stronger blur ($\sigma_{\mathrm{blur}}{=}4.0$) widens the gap, and replication on CelebA \cite{celebA} under the same experimental protocol reproduces both the white-first ordering and the distortion--perception trade-off (App~\ref{app:celeba}).
The sweep reveals budget dependence. At low NFE, mild spectral coloring is beneficial: $\theta{=}0.25$ achieves the lowest observed FID for NFE~$\le20$ (Table~\ref{tab:conv}). As the budget increases, however, the optimum shifts toward white, which performs best on all reported metrics for NFE~$\ge50$. Thus, the experiments support partial $P_k$-coloring in the few-step regime.
They leave open whether the practical failure of the per-mode theory arises from dependence among Fourier modes, non-Gaussian per-mode marginals, or both.
\subsection{Mechanism Study: Data vs. Coupling}
\label{sec:exp-mech}
\textbf{Non-Gaussianity, rather than mode coupling, explains the inversion.}
Our $2{\times}2$ study separates model coupling from data statistics. The white-over-matched inversion persists whenever real FFHQ statistics are retained, even when coupling is impossible. On a Gaussian corpus with the same spectrum but no higher-order cross-mode structure, the matched reference regains its low-NFE advantage: FID improves from $19.3$ to $13.5$ at NFE~$5$ and from $7.9$ to $5.3$ at NFE~$10$. At NFE~$50$, the two references are nearly tied. The coupling probe shows substantial off-diagonal Fourier response in FFHQ-trained U-Nets, roughly tenfold less in Gaussian-trained models, and numerical zero in per-mode models.
The inversion tracks the data, not
the coupling: it persists in per-mode models where coupling is
numerically zero, and it vanishes when a coupling-capable U-Net is
trained on Gaussian data. Coupling in FFHQ-trained networks is
therefore a symptom of non-Gaussian image statistics, not the operative
cause; the per-mode Gaussianity assumption, not the decoupling
assumption alone, is what fails on real images. The near-tie at NFE~$50$ is consistent with the exact-drift invisibility
limit of Theorem~\ref{thm:invisibility}, in which reference gaps vanish as
the step budget grows. The theorem does not guarantee this for learned,
non-Gaussian models, where drift error need not vanish with NFE; indeed,
in \S\ref{sec:exp-gaussian} the colored references destabilize at NFE~$200$. 

\section{Conclusion}
We have developed a theory of reference design for Schr\"odinger bridge models. With exact drift and unlimited steps the reference is invisible; under a finite budget every optimum is proportional to the destroyed-information spectrum $P_k$ with color and schedule exactly exchangeable.
On real images this ordering inverts, and controlled studies trace the inversion to non-Gaussian per-mode statistics. The resulting recipe is budget-dependent: mild $P_k$ coloring in the few-step regime, white noise otherwise, with the $z$-optimal schedule in both cases. Extending the theory beyond Gaussian conditional modes is therefore the main direction for future work.

\newpage

\section{Appendix}
\subsection{Overview}
\label{app:overview}

Apart from the six textbook facts collected in Table~\ref{tab:standard}, every argument is proved here from first principles. Sections~\ref{app:tractability}--\ref{app:thmC} present the proofs in the order in which the results appear in the main text.
Section~\ref{app:experiments} provides the experimental protocols and the additional results.

Table~\ref{tab:notation} collects the symbols. Three conventions are
in force everywhere below.

\begin{enumerate}[leftmargin=1.4em,itemsep=1pt,topsep=2pt]
\item \emph{Conditioning on the observation.} All statements about the
  sampler are conditional on $x_1$. Expectations, variances and KL
  divergences are conditional on $x_1$ unless said otherwise.
\item \emph{Per-mode reduction.} After Theorem~\ref{thm:tractability}
  we work in the diagonalizing basis and drop the mode index $k$
  whenever a statement is about a single mode. Sums over $k$ reappear
  only when modes are coupled by a budget
  (App~\ref{app:budget}) or by a shared learner
  (App~\ref{app:ridge}).
\item \emph{Grids.} A grid is $0=\rho_0<\rho_1<\dots<\rho_T=1$, and
  $i$ runs from $T$ (the observation end) down to $0$ (the clean end),
  which is the direction the sampler travels.
\end{enumerate}

\begin{table}[!htbp]
\small\centering
\caption{\textbf{Notation.} Deliberate overloads are marked
$\dagger$; they never occur in the same argument.}
\label{tab:notation}
\begin{tabularx}{\columnwidth}{@{}l>{\raggedright\arraybackslash}X@{}}
\toprule
\multicolumn{2}{@{}l}{\emph{Signals, spectra, modes}}\\
$x_0,\,x_1$ & clean signal; observation \\
$X_t$ & bridge state, $t\in[0,1]$ \\
$U$, $k$ & orthogonal diagonalizer; mode index \\
$S_k,\,h_k,\,N_k$ & prior power, transfer function, observation noise \\
$P_k$ & posterior variance of mode $k$ given $x_1$ \\
$W$ & posterior-mean (Wiener) operator, $\E[x_0\mid x_1]=Wx_1$ \\
$v_k$ & reference variance of mode $k$ --- the \emph{color} \\
$V$ & total budget $\sum_kv_k$ (App~\ref{app:budget}) $\dagger$ \\
\midrule
\multicolumn{2}{@{}l}{\emph{Reference process}}\\
$Q(t)$ & instantaneous increment covariance \\
$A(t)$ & $\int_0^tQ$; $V=A(1)$ $\dagger$ \\
$q_k,a_k$ & modal versions of $Q,A$; $v_k=a_k(1)$ \\
$\rho_k(t)$ & schedule $a_k(t)/v_k$, increasing from $0$ to $1$ \\
$\beta(t)$ & rate of a static-color reference (Cor.~\ref{cor:prism}) \\
\midrule
\multicolumn{2}{@{}l}{\emph{Grids and coordinates}}\\
$T$ & number of sampler steps (NFE) \\
$\rho_i,\Delta\rho_i$ & grid level; $\rho_i-\rho_{i-1}$ \\
$r_i$ & $\rho_{i-1}/\rho_i$ \\
$x$ & $v/P$, the only dimensionless color parameter \\
$\varphi(\rho)$ & $(1-\rho)P+v\rho=P\psi(\rho)$ \\
$\psi(\rho)$ & $1+(x-1)\rho$; $b:=1-x$ \\
$M(\rho)$ & conditional MMSE at level $\rho$ \\
$z(\rho)$ & $v\rho/\varphi(\rho)\in[0,1]$, the solver coordinate \\
$z_i,\Delta z_i$ & $z(\rho_i)$; $z_i-z_{i-1}$ \\
$a_i,\varepsilon_i$ & $z_{i-1}/z_i$; $1-a_i$ (App~\ref{app:zidentity}) \\
$s$ & continuum solver time, $z=z(s)$ \\
\midrule
\multicolumn{2}{@{}l}{\emph{Sampler, deficit, objective}}\\
$a(\xi)$ & frontier launch point, root of $\ln\tfrac1a-(1-a)=\xi$ (Theorem~\ref{thm:frontier}) \\
$x_{\rho_i}$ & sampler state at level $\rho_i$, conditional on $x_1$ \\
$\xi,\,\xi_i$ & standard Gaussian noise (bridge; sampler step $i$) $\dagger$ \\
$K(\rho)$ & regression gain of the plug-in drift \\
$A_i,B_i,q_i$ & affine-recursion coefficients $\dagger$ \\
$c_i,V_i$ & mean and variance coefficients of the plug-in chain $\dagger$ \\
$\Sigma(\rho)$ & variance of the exact ancestral chain \\
$D_i,\,D_0$ & variance deficit $\Sigma(\rho_i)-V_i$; its terminal value \\
$H(x)$ & $D_0/P$ \\
$F(z)$ & $\sum_i(\Delta z_i)^2/z_i$; equals $H$ under $z=z(\rho)$ \\
$\Phi(x)$ & terminal KL, $\tfrac12(u-1-\ln u)$ with $u=1-H$ \\
$\Har_T$ & harmonic number $\sum_{i\le T}1/i$ \\
$\chi_T$ & $(2\ln T)^{-1/2}$, the target point \\
$\Wind$ & localization window $[\chi_T/3,3\chi_T]$ \\
$\eta_i,\beta_i$ & relative gain error; additive bias at step $i$ \\
$\Xi,\,\xi$ & gain-error susceptibility; its budget $\dagger$ \\
$\lambda$ & ridge penalty (App~\ref{app:ridge}); Lagrange multiplier (App~\ref{app:budget}) \\
\bottomrule
\end{tabularx}
\end{table}

Table~\ref{tab:standard} lists the six textbook facts we use, with the
tags (S1)--(S6) by which they are cited throughout. Cauchy--Schwarz
(Theorem~\ref{thm:frontier}(ii)) and the mean value theorem
(Theorem~\ref{thm:invisibility}(a)) are used only in their elementary
forms and are not tabulated.

\begin{table*}[!htbp]
\small\centering
\caption{\textbf{Standard facts, used without proof.} These are the
only external ingredients in Apps.~\ref{app:tractability}--\ref{app:thmC}.}
\label{tab:standard}
\begin{tabularx}{\textwidth}{@{}c>{\raggedright\arraybackslash}X>{\raggedright\arraybackslash}p{0.245\textwidth}@{}}
\toprule
& Fact & Used in \\
\midrule
(S1) &
\emph{Gaussian conditioning.} If $(X,Y)$ is jointly Gaussian with
$\Cov(X,Y)=C$ and $\Var(Y)=\Sigma_Y\succ0$, then
$X\mid Y{=}y\sim\N\big(\E X+C\Sigma_Y^{-1}(y-\E Y),\;
\Var(X)-C\Sigma_Y^{-1}C^{\top}\big)$. If $\Sigma_Y$ is singular the
same formula holds on $\ran\Sigma_Y$ with the Moore--Penrose
pseudoinverse in place of $\Sigma_Y^{-1}$
\cite[Theorem~2.5.1]{anderson2003introduction}. &
Lemmas~\ref{lem:pinned}--\ref{lem:reverse},
Cor.~\ref{cor:drift} \\[2pt]
(S2) &
\emph{Simultaneous diagonalization.} A family of real symmetric
matrices is simultaneously orthogonally diagonalizable if and only if
its members commute pairwise \cite[Ch.~4.5]{horn2012matrix}. &
Theorem~\ref{thm:tractability}, step (1)$\Rightarrow$(2) \\[2pt]
(S3) &
\emph{Linear SDEs; two-sided Markov property.} A linear SDE with
deterministic coefficients is solved by variation of constants and has
Gaussian marginals with deterministic covariance; and for a Markov
process, conditionally on $X_t$ the past $\sigma(X_s:s\le t)$ is
independent of the future $\sigma(X_u:u\ge t)$
\cite{karatzas1991brownian,oksendal2010stochastic}. &
Lemma~\ref{lem:reverse}, Cor.~\ref{cor:drift},
Lemma~\ref{lem:deficit-rec} \\[2pt]
(S4) &
\emph{Riemann-sum error via total variation.} For $f$ of bounded
variation on $[c,d]$ and any partition, the sum of the oscillations of
$f$ over the subintervals is at most $\TV_{[c,d]}(f)$; for $f$
monotone on each of two subintervals, $\TV(f)$ is bounded by the sum
of its endpoint values \cite[Ch.~6]{rudin1976principles}. &
Lemmas~\ref{lem:riemann}, \ref{lem:H-deriv} \\[2pt]
(S5) &
\emph{Descartes' rule of signs.} The number of positive real roots of
a nonzero real polynomial, counted with multiplicity, is at most the
number of sign changes in its coefficient sequence, and differs from it
by an even number; see \cite[Ch.~2]{basu2006algorithms} for its use in
exact certificates. &
the exact integer uniqueness certificates for $2\le T\le120$,
App~\ref{app:uniqueness} \\[2pt]
(S6) &
\emph{Gaussian KL.} For $p=\N(\mu_1,\sigma_1^2)$ and
$q=\N(\mu_2,\sigma_2^2)$,
$\KL(p\,\|\,q)=\tfrac12(u-1-\ln u)+\tfrac{(\mu_1-\mu_2)^2}{2\sigma_2^2}$
with $u=\sigma_1^2/\sigma_2^2$ \cite[Ch.~8]{cover2012elements}. &
Theorem~\ref{thm:invisibility}, Prop.~\ref{prop:constant},
Theorem~\ref{thm:colorblind}, and the definition of $\Phi$ throughout
App~\ref{app:thmB} \\
\bottomrule
\end{tabularx}
\end{table*}

\subsection{Proofs for Section~\ref{sec:tractability} (Tractability)}
\label{app:tractability}
\subsubsection{The general Gaussian bridge}
\label{app:bridge}

We first show that \emph{every} deterministic matrix-valued covariance
gives a tractable bridge at the multivariate level.
Theorem~\ref{thm:tractability} then identifies exactly when that
tractability separates into scalar modes.

\begin{lemma}[Exact pinned marginal]
\label{lem:pinned}
For every $t\in(0,1)$,
\begin{equation}
X_t\mid(X_0{=}x_0,\,X_1{=}x_1)\;\sim\;\N\bigl(\mu_t,\,\Gamma_t\bigr),
\end{equation}
where
\begin{align}
\mu_t    &:= x_0+A(t)V^{-1}(x_1-x_0),\label{eq:pinned-mean}\\
\Gamma_t &:= A(t)-A(t)V^{-1}A(t).\label{eq:pinned-var}
\end{align}
\end{lemma}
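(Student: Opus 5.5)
We plan to prove Lemma~\ref{lem:pinned} by writing the joint law of $(X_t,X_1)$ given $X_0=x_0$ explicitly and then applying Gaussian conditioning (S1). Since $\dd X_s=L(s)\,\dd B_s$ has deterministic coefficients, we have $X_t=x_0+\int_0^t L\,\dd B$ and $X_1=X_t+\int_t^1 L\,\dd B$. The two stochastic integrals are over disjoint intervals, so they are independent centered Gaussians by (S3). By the Itô isometry their covariances are $A(t)=\int_0^tQ$ and $V-A(t)$, where $V=A(1)$.

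The joint Gaussian law therefore has
\[
\E X_t=\E X_1=x_0,\qquad \Var(X_t)=A(t),\qquad \Cov(X_t,X_1)=A(t),\qquad \Var(X_1)=V.
\]
The cross-covariance identity holds because the increment $X_1-X_t$ is independent of $X_t$.

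Applying (S1) with $X=X_t$, $Y=X_1$ and $C=A(t)$ gives the conditional mean $x_0+A(t)V^{-1}(x_1-x_0)$ and the conditional covariance $A(t)-A(t)V^{-1}A(t)^\top$. Since $Q$ is symmetric positive semidefinite, $A(t)$ is symmetric, so $A(t)^\top=A(t)$ and the covariance reduces to \eqref{eq:pinned-var}. This yields \eqref{eq:pinned-mean}--\eqref{eq:pinned-var}.

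The one point needing care is invertibility of $V$, which we assume here, in line with the way the statement writes $V^{-1}$. If $V$ is singular, the same computation goes through with the Moore--Penrose pseudoinverse, on the event $x_1-x_0\in\ran V$. This event has probability one, because $X_1-x_0\sim\N(0,V)$ is supported on $\ran V$. We would also check that $\ran A(t)\subseteq\ran V$, which follows from $V-A(t)\succeq0$, so that the pseudoinverse formula is consistent with the invertible case.

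We do not expect any real obstacle; the lemma is Gaussian conditioning applied to independent increments. The only steps needing justification are the independence of the increments and the symmetry of $A(t)$.
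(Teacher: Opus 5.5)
Your proof is correct and matches the paper's argument: condition on $X_0=x_0$, obtain $\Var(X_t)=A(t)$, $\Var(X_1)=V$ and $\Cov(X_t,X_1)=A(t)$ from independent Brownian increments, then apply Gaussian conditioning (S1). Your treatment of singular $V$ is a correct addition that the paper covers only through the pseudoinverse clause of (S1): you show $\ran A(t)\subseteq\ran V$ from $V-A(t)\succeq0$ and that $x_1-x_0\in\ran V$ almost surely.
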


\begin{proof}
Condition on $X_0=x_0$. Then
\begin{equation}
X_t=x_0+\int_0^tL\dd B,
\qquad
X_1=x_0+\int_0^1L\dd B,
\end{equation}
so $\Var(X_t\mid X_0)=A(t)$ and $\Var(X_1\mid X_0)=V$. Independence of
Brownian increments gives $\Cov(X_t,X_1\mid X_0)=A(t)$. Applying
Gaussian conditioning (S1) to the jointly Gaussian pair $(X_t,X_1)$
proves the claim.
\end{proof}

\begin{lemma}[Exact reverse sub-bridge kernel]
\label{lem:reverse}
For $0<s<t\le1$ with $A(t)\succ0$,
\begin{equation}
X_s\mid(X_t{=}x_t,\,X_0{=}x_0)\;\sim\;
\N\bigl(\mu_{s\mid t},\,\Gamma_{s\mid t}\bigr),
\end{equation}
where
\begin{align}
\mu_{s\mid t}    &:= x_0+A(s)A(t)^{-1}(x_t-x_0),\\
\Gamma_{s\mid t} &:= A(s)-A(s)A(t)^{-1}A(s),
\end{align}
and moreover
$\mathcal{L}(X_s\mid X_t,X_0,X_1)=\mathcal{L}(X_s\mid X_t,X_0)$.
\end{lemma}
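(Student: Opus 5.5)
The proof splits into two parts: the Gaussian formulas for $X_s$ given $(X_t,X_0)$, and the conditional-independence claim. For both we use the representation from the proof of Lemma~\ref{lem:pinned}. Conditioning on $X_0=x_0$, we write $X_u=x_0+\int_0^u L\dd B$. Then $(X_s,X_t)$ is jointly Gaussian given $X_0$, with
\[
\Var(X_s\mid X_0)=A(s),\qquad \Var(X_t\mid X_0)=A(t).
\]
Because Brownian increments over $[0,s]$ and $(s,t]$ are independent,
\[
\Cov(X_s,X_t\mid X_0)=\int_0^{s}Q=A(s).
\]
Since $A(t)\succ0$, Gaussian conditioning (S1) applies with an honest inverse. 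It gives mean $x_0+A(s)A(t)^{-1}(x_t-x_0)$ and covariance $A(s)-A(s)A(t)^{-1}A(s)$, where we use that $A(s)$ is symmetric. This is exactly the stated $\mu_{s\mid t}$ and $\Gamma_{s\mid t}$.

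For the conditional-independence claim, we use that $X$ is Markov: it has independent increments given $X_0$. By the two-sided Markov property (S3), conditionally on $X_t$ the past $\sigma(X_u:u\le t)$ is independent of the future $\sigma(X_u:u\ge t)$. Both $X_0$ and $X_s$ lie in the past and $X_1$ lies in the future. Hence, given $(X_t,X_0)$, the variable $X_s$ is independent of $X_1$, which is the statement $\mathcal{L}(X_s\mid X_t,X_0,X_1)=\mathcal{L}(X_s\mid X_t,X_0)$.

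If one prefers to avoid (S3), there is a direct check. Given $X_0$, write $X_1-X_t=\int_t^1 L\dd B$; this increment is independent of $(X_s,X_t)$. Conditioning additionally on $X_1$ is therefore the same as conditioning on an independent variable, and it leaves the conditional law of $X_s$ unchanged.

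The main point needing care is the case $t=1$. Then $X_t=X_1$ and the statement is trivial, but it requires $V=A(1)\succ0$, which is exactly the hypothesis $A(t)\succ0$. The other subtlety is only notational: $Q$ is defined almost everywhere, and this affects nothing because only the integrals $A(\cdot)$ enter. I expect no real obstacle here. The essential step is the covariance identity $\Cov(X_s,X_t\mid X_0)=A(s)$ for $s<t$.
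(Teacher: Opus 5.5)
Your proposal is correct and follows the paper's proof essentially step for step. You condition on $X_0$, use $\Var(X_s\mid X_0)=A(s)$, $\Var(X_t\mid X_0)=A(t)$ and $\Cov(X_s,X_t\mid X_0)=A(s)$, apply Gaussian conditioning (S1), and obtain the final equality from the two-sided Markov property (S3). Your extra direct check, that $X_1-X_t=\int_t^1 L\,\mathrm{d}B$ is independent of $(X_s,X_t)$ given $X_0$, is a valid self-contained substitute for (S3) that the paper does not spell out.
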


\begin{proof}
Condition on $X_0$. The pair $(X_s,X_t)$ is jointly Gaussian with
$\Var(X_s\mid X_0)=A(s)$, $\Var(X_t\mid X_0)=A(t)$ and
$\Cov(X_s,X_t\mid X_0)=A(s)$, so Gaussian conditioning (S1) gives the
stated kernel. The final equality is the two-sided Markov property
(S3): after conditioning on $X_t$, the past is independent of the
future endpoint.

If $A(t)$ is singular, the directions in $\ker A(t)$ have accumulated
no noise by time $t$ and remain deterministically equal to $x_0$. Apply
the kernel on $\ran A(t)$, or equivalently replace $A(t)^{-1}$ by the
pseudoinverse. In the modal setting this is the convention for modes
with $a_k(t)=0$. On the sampler's grids only $\rho_0=0$ is degenerate,
and no reverse step is taken there.
\end{proof}

\subsubsection{Proof of Theorem~\ref{thm:tractability}}
\label{app:tract-thm}

\begin{proof}
\textbf{(1)$\Rightarrow$(2).} Each $Q(t)$ is real symmetric, hence
orthogonally diagonalizable. The linear span of $\{Q(t):t\notin N\}$ is
a finite-dimensional subspace of the symmetric matrices, so finitely
many members $Q(t_1),\dots,Q(t_m)$ of the family span it. By~(1) these
commute pairwise, and a finite commuting family of real symmetric
matrices is simultaneously orthogonally diagonalizable (S2):
diagonalize one matrix, observe that every other matrix preserves its
eigenspaces, and recurse on the restrictions. The resulting $U$
diagonalizes $Q(t_1),\dots,Q(t_m)$, hence every matrix in their span,
hence every $Q(t)$ outside $N$. Redefining $Q$ on a null set does not
change the law of the SDE. Finally $Q\succeq0$ forces all diagonal
entries to be nonnegative.

\textbf{(2)$\Rightarrow$(3).} In the coordinates $Y=U^\top X$ the
increment covariance is
$U^\top Q(t)U\dd t=\diag(q_k(t))\dd t$, so we may choose a
deterministic diagonal square root. The coordinates are jointly
Gaussian with zero cross-covariance at all times, hence independent
scalar Gaussian processes, and each is a Brownian motion under the
clock $a_k(t)$.

\textbf{(3)$\Rightarrow$(1).} Independent scalar modes in a fixed basis
have diagonal instantaneous covariance in that basis, so all $Q(t)$
share the diagonalizer $U$; diagonal matrices commute.

\textbf{The pinned formulas.} Substitute the diagonalizations
$A(t)=U\diag(a_k(t))U^\top$ and $V=U\diag(v_k)U^\top$
into Lemma~\ref{lem:pinned}. In mode $k$ the mean is
\begin{equation}
y_{k,0}+\frac{a_k(t)}{v_k}\bigl(y_{k,1}-y_{k,0}\bigr)
=(1-\rho_k)y_{k,0}+\rho_ky_{k,1},
\end{equation}
and the variance is
\begin{equation}
a_k(t)-\frac{a_k(t)^2}{v_k}=v_k\,\rho_k(1-\rho_k).
\end{equation}
The conditional covariance is diagonal, so the modes remain
independent. Substituting into Lemma~\ref{lem:reverse} gives the
reverse mean coefficient $a_k(s)/a_k(t)=r_k(s,t)$ and the variance
$a_k(s)\bigl(1-r_k(s,t)\bigr)$.
\end{proof}

\subsubsection{Proof of Corollary~\ref{cor:bijection}}
\label{app:tract-bij}

\begin{proof}
Absolute continuity gives $a_k(t)=\int_0^tq_k=v_k\rho_k(t)$.
Nonnegativity of $q_k$ is equivalent to monotonicity of $\rho_k$, and
the endpoint conditions give $a_k(0)=0$ and $a_k(1)=v_k$. Conversely,
define $\rho_k:=a_k/v_k$.
\end{proof}

\paragraph{Four consequences stated only here.}
\begin{corollary}[Static color is a strict subfamily]
\label{cor:prism}
Suppose $Q(t)=\beta(t)\Sigma$ with $\Sigma=U\diag(v_k)U^\top$ and
$\int_0^1\beta=1$. Then Theorem~\ref{thm:tractability} applies with
the \emph{shared} schedule
$\rho_k\equiv\rho(t)=\int_0^t\beta$ and mode-dependent totals $v_k$.
White $I^2$SB is the special case $\Sigma=cI$. A fixed color therefore
allows different $v_k$ but forces every mode to use the same $\rho$.
\end{corollary}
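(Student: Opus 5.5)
The argument checks that $Q(t)=\beta(t)\Sigma$ satisfies condition (1) of Theorem~\ref{thm:tractability}, reads off the modal quantities, and then gives an explicit time-varying commuting reference that is not of static form, which establishes strictness.

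\textbf{Step 1 (membership).} For a.e.\ $s,t$ we have $Q(s)Q(t)=\beta(s)\beta(t)\Sigma^2=Q(t)Q(s)$, so condition (1) holds. Positive semidefiniteness needs $\beta(t)\ge0$ a.e., which is implicit in $Q$ being a covariance. Condition (2) holds with the given $U$ and $q_k(t)=\beta(t)v_k$, so Theorem~\ref{thm:tractability} applies.

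\textbf{Step 2 (modal quantities).} Integrating the modal rate gives $a_k(t)=v_k\int_0^t\beta$. Since $\int_0^1\beta=1$, the total variance is $a_k(1)=v_k$, which is consistent with the notation. The allocation is therefore $\rho_k(t)=a_k(t)/v_k=\int_0^t\beta=:\rho(t)$, which is independent of $k$. Modes with $v_k=0$ are degenerate, and any schedule may be assigned to them, so we take $\rho$ as well. The case $\Sigma=cI$ gives $Q(t)=c\beta(t)I$, the isotropic $I^2$SB reference with scalar schedule $\beta$.

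\textbf{Step 3 (strictness).} By Corollary~\ref{cor:bijection}, the commuting family contains every choice of per-mode $(v_k,\rho_k)$. Take $d\ge2$, $U=I$, and $q_1(t)=2t$, $q_2(t)=2(1-t)$. Then $\rho_1(t)=t^2$ and $\rho_2(t)=2t-t^2$, and these differ.

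Suppose this reference had the form $\beta(t)\Sigma$. Since $U=I$ diagonalizes $Q(t)$ for $t\in(0,1)$ with distinct eigenvalues, we would need $\beta(t)\Sigma_{11}=2t$ and $\beta(t)\Sigma_{22}=2(1-t)$. The ratio $\Sigma_{11}/\Sigma_{22}=t/(1-t)$ would then be constant in $t$, which is a contradiction.

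More generally, a static color forces $q_k(t)/q_j(t)$ to be constant in $t$ whenever $q_j(t)\neq0$. Any two modes with distinct $\rho_k$ violate this, so such references are outside the static family.

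\textbf{Main obstacle.} There is no substantive obstacle; the only care needed is bookkeeping. Modes with $v_k=0$ must be handled, and the normalization $\int_0^1\beta=1$ is what makes $v_k$ the total variance rather than a rescaled one.
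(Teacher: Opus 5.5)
Your proof is correct and follows the route the paper takes implicitly. The paper states this corollary without a written proof, as a direct substitution of $q_k(t)=\beta(t)v_k$ into Theorem~\ref{thm:tractability}, giving $a_k(t)=v_k\int_0^t\beta$ and $\rho_k=\int_0^t\beta$ for every $k$. Your Step~3 usefully supplies the strictness witness that the paper only asserts, and you correctly restrict it to $d\ge2$ (for $d=1$ every reference is static); the appeal to distinct eigenvalues there is unnecessary, since the a.e.\ matrix identity $\beta(t)\Sigma=\diag(2t,2(1-t))$ already gives the entrywise equations.
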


\begin{corollary}[Aliasing blocks]
\label{cor:blocks}
Suppose $\R^d=E_1\oplus\dots\oplus E_m$ is a fixed orthogonal
decomposition and every $Q(t)$ is block diagonal with respect to it.
Then Lemmas~\ref{lem:pinned}--\ref{lem:reverse} decompose the bridge
into independent finite-dimensional blocks, \emph{even when the
matrices inside a block do not commute}; each block uses the matrix
formulas in $A(t)$ and $V$. This is the correct treatment of
downsampling aliasing: a frequency and its foldovers form one coupled
block, and small matrix-valued block schedules replace scalar
schedules.
\end{corollary}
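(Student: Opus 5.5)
The proof reduces everything to the two general matrix formulas, Lemmas~\ref{lem:pinned} and~\ref{lem:reverse}, which hold for any deterministic covariance schedule. The only new ingredient is block structure, which replaces the commutativity argument in the proof of Theorem~\ref{thm:tractability}.

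\textbf{Block structure of the path quantities.} Let $\Pi_j$ be the orthogonal projector onto $E_j$. The hypothesis says $Q(t)=\sum_j \Pi_j Q(t)\Pi_j$ for almost every $t$. Block diagonality is a closed linear condition, so integration preserves it. Hence $A(t)=\int_0^t Q$ and $V=A(1)$ are block diagonal with blocks $A_j(t)=\Pi_jA(t)\Pi_j$ and $V_j$. Inverses and Moore--Penrose pseudoinverses of block-diagonal matrices are computed blockwise. Products of block-diagonal matrices are also block diagonal. No commutation is used here: $A_j(s)$ and $A_j(t)$ need not commute. The step only needs that the block pattern is preserved under sums, products, and (pseudo)inverses.

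\textbf{Independence of the blocks.} Choose a block-diagonal square root $L(t)$, for instance the symmetric square root of $Q(t)$, computed blockwise; this does not change the law. Then the components $X^{(j)}=\Pi_jX$ are driven by $\Pi_j L\,\dd B$. For $j\ne j'$ these have zero cross-covariance, since $\Pi_jQ\Pi_{j'}=0$. The processes are jointly Gaussian, so they are independent Gaussian processes.

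\textbf{Substituting into the lemmas.} In Lemma~\ref{lem:pinned}, the pinned mean $x_0+A(t)V^{-1}(x_1-x_0)$ splits blockwise into $x_0^{(j)}+A_j(t)V_j^{-1}(x_1^{(j)}-x_0^{(j)})$. The covariance $\Gamma_t=A(t)-A(t)V^{-1}A(t)$ is block diagonal with blocks $A_j-A_jV_j^{-1}A_j$. Zero off-block covariance of a Gaussian vector gives conditional independence across blocks. Lemma~\ref{lem:reverse} splits the same way, with blocks $A_j(s)A_j(t)^{-1}$ and $A_j(s)-A_j(s)A_j(t)^{-1}A_j(s)$. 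The two-sided Markov property holds blockwise as well, because the blocks are independent processes. Singular blocks are handled with the pseudoinverse convention already stated after Lemma~\ref{lem:reverse}.

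\textbf{Aliasing interpretation.} Take a downsampling operator $D$ in the Fourier basis. Then $D^\top D$ couples each frequency only with its foldover set, and these sets partition the frequencies. Let $E_j$ be the span of one foldover set. The Wiener quantities, and hence any natural reference built from them, are then block diagonal with respect to this decomposition. This is an interpretive remark rather than part of the formal claim.

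\textbf{Main obstacle.} The main obstacle is mild. One must check that no step secretly relies on commutativity, which mainly matters for the reverse kernel $A_j(s)A_j(t)^{-1}$. It does not, because the lemmas were proved for general matrices. One must also handle null sets: redefining $Q$ on a null set so that block diagonality holds everywhere leaves the law unchanged, exactly as in the proof of Theorem~\ref{thm:tractability}.
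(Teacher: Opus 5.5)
Your proposal is correct and follows essentially the same route as the paper: block-diagonality of $Q$ passes to $A(t)$ and $V$, zero cross-block covariance of the jointly Gaussian block processes gives their independence, and Lemmas~\ref{lem:pinned}--\ref{lem:reverse} then factor blockwise with no within-block commutativity. Your remarks on blockwise pseudoinverses, the block-diagonal square root, and null sets are harmless refinements that the paper leaves implicit.
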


\begin{proof}
Let $\Pi_j$ be the orthogonal projection onto $E_j$. Since every $Q(t)$
is block diagonal, so are $A(t)=\int_0^tQ$ and $V=A(1)$. Writing
$X^{(j)}_t:=\Pi_jX_t$, for $j\ne l$ the increments satisfy
\begin{equation}
\Cov\bigl(X^{(j)}_t-X^{(j)}_s,\;X^{(l)}_t-X^{(l)}_s\bigr)
=\Pi_j\bigl(A(t)-A(s)\bigr)\Pi_l^\top=0,
\end{equation}
so the jointly Gaussian block processes $(X^{(j)})_j$ are mutually
independent. Consequently the pinned marginal and the reverse kernel
factor over blocks, with the matrix formulas applied blockwise to
$A^{(j)}(t):=\Pi_jA(t)\Pi_j^\top$ and $V^{(j)}:=\Pi_jV\Pi_j^\top$. No
commutativity within a block is used.
\end{proof}

\begin{corollary}[Simultaneously diagonal linear drift and diffusion]
\label{cor:drift}
Consider $\dd X_t=F(t)X_t\dd t+L(t)\dd B_t$ and suppose a single
orthogonal $U$ diagonalizes both coefficients,
$U^\top F(t)U=\diag(f_k(t))$ and $U^\top Q(t)U=\diag(q_k(t))$ a.e.
Put
\begin{equation}
g_k(t,s):=\exp\!\int_s^tf_k,
\qquad
\sigma^2_k(t):=\int_0^tg_k(t,u)^2q_k(u)\dd u.
\end{equation}
Then the reference and all pinned bridges decompose exactly into scalar
modes, with
\begin{align}
\E[Y_{k,t}\mid y_{k,0},y_{k,1}]
&=g_k(t,0)\,y_{k,0}\notag\\
&\quad+\frac{\sigma^2_k(t)\,g_k(1,t)}{\sigma^2_k(1)}
  \bigl(y_{k,1}-g_k(1,0)y_{k,0}\bigr),\\
\Var(Y_{k,t}\mid y_{k,0},y_{k,1})
&=\sigma^2_k(t)\notag\\
&\quad-\frac{\sigma^2_k(t)^2\,g_k(1,t)^2}{\sigma^2_k(1)},
\end{align}
and for $0<s<t$ the reverse kernel has mean
\begin{equation}
g_k(s,0)y_{k,0}
+\frac{\sigma^2_k(s)\,g_k(t,s)}{\sigma^2_k(t)}
 \bigl(y_{k,t}-g_k(t,0)y_{k,0}\bigr)
\end{equation}
and variance
$\sigma^2_k(s)-\sigma^2_k(s)^2g_k(t,s)^2/\sigma^2_k(t)$.
\end{corollary}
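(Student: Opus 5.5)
The argument reduces to independent scalar linear SDEs and then reuses the Gaussian-conditioning argument of Lemmas~\ref{lem:pinned}--\ref{lem:reverse}, with variation of constants (S3) replacing the pure-integral representation. First I pass to the coordinates $Y=U^\top X$. Since $U$ is constant and orthogonal, $\dd Y_t=U^\top F(t)U\,Y_t\dd t+U^\top L(t)\dd B_t$. Here $U^\top F U=\diag(f_k)$, and the increment covariance $U^\top Q U=\diag(q_k)$ is diagonal. As in step (2)$\Rightarrow$(3) of Theorem~\ref{thm:tractability}, we may replace $U^\top L$ by the diagonal square root $\diag(\sqrt{q_k})$ driving an independent Brownian motion $U^\top B$-type process, without changing the law. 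Each coordinate then solves the scalar equation $\dd Y_{k,t}=f_k(t)Y_{k,t}\dd t+\sqrt{q_k(t)}\dd B_{k,t}$ with drivers independent across $k$. Hence the modes are independent Gaussian processes, given independent initial values or conditionally on $Y_0$.

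Next I solve each mode by variation of constants:
\[
Y_{k,t}=g_k(t,0)y_{k,0}+\int_0^t g_k(t,u)\sqrt{q_k(u)}\dd B_{k,u}.
\]
From this, conditional on $y_{k,0}$, the mean is $g_k(t,0)y_{k,0}$ and $\Var(Y_{k,t})=\sigma_k^2(t)$ by the Itô isometry. For $s<t$, the semigroup identity $g_k(t,u)=g_k(t,s)g_k(s,u)$ and independence of increments give
\[
\Cov(Y_{k,s},Y_{k,t}\mid y_{k,0})=g_k(t,s)\,\sigma_k^2(s).
\]
Taking $s=t$, $t=1$ yields $\Cov(Y_{k,t},Y_{k,1}\mid y_{k,0})=g_k(1,t)\sigma_k^2(t)$.

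The pinned law then follows from Gaussian conditioning (S1) on the pair $(Y_{k,t},Y_{k,1})$. The centered observation is $y_{k,1}-g_k(1,0)y_{k,0}$ with variance $\sigma_k^2(1)$ and covariance $g_k(1,t)\sigma_k^2(t)$, which gives exactly the stated mean and variance. For the reverse kernel I apply (S1) to $(Y_{k,s},Y_{k,t})$ with covariance $g_k(t,s)\sigma_k^2(s)$ and $\Var Y_{k,t}=\sigma_k^2(t)$. The two-sided Markov property (S3) justifies dropping $Y_{k,1}$ from the conditioning, as in Lemma~\ref{lem:reverse}.

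The main obstacle is bookkeeping rather than substance. I must check that the covariance factorization uses $g_k(t,s)$ rather than $g_k(s,t)$, so that the reverse-mean coefficient comes out as $\sigma_k^2(s)g_k(t,s)/\sigma_k^2(t)$. I also need the degenerate case $\sigma_k^2(t)=0$, which I handle with the pseudoinverse convention of Lemma~\ref{lem:reverse}. Finally, I note that independence across modes is preserved under conditioning, because the joint covariance is block diagonal over $k$ at all times.
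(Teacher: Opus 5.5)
Your proposal is correct and follows the paper's own proof: variation of constants yields the three conditional moments $\sigma_k^2(t)$, $\sigma_k^2(t)\,g_k(1,t)$ and $\sigma_k^2(s)\,g_k(t,s)$, Gaussian conditioning (S1) gives the stated pinned and reverse formulas, and the common diagonalization gives independence across modes. Your additional details fill in steps the paper leaves implicit: the diagonal square root, the semigroup identity $g_k(t,u)=g_k(t,s)g_k(s,u)$, the two-sided Markov property, and the pseudoinverse convention for degenerate modes.
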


\begin{proof}
Variation of constants (S3) gives
$Y_{k,t}=g_k(t,0)y_{k,0}+\int_0^tg_k(t,u)\sqrt{q_k(u)}\dd B_{k,u}$,
whence
\begin{align}
\Var(Y_{k,t}\mid y_{k,0})&=\sigma^2_k(t),\\
\Cov(Y_{k,t},Y_{k,1}\mid y_{k,0})&=\sigma^2_k(t)\,g_k(1,t),\\
\Cov(Y_{k,s},Y_{k,t}\mid y_{k,0})&=\sigma^2_k(s)\,g_k(t,s).
\end{align}
Gaussian conditioning (S1) gives the stated formulas, and independence
across modes follows from the common diagonalization.
\end{proof}

\begin{remark}[Hilbert space]
\label{rem:hilbert}
Assume $\Hil$ is separable, the covariance operators share an
orthonormal eigenbasis with $Q(t)e_k=q_k(t)e_k$, and
$\sum_kv_k<\infty$ (trace class). Then the reference is an
$\Hil$-valued Gaussian process and Theorem~\ref{thm:tractability}
applies coordinatewise, the Gaussian series converging in
$L^2(\Omega;\Hil)$. For cylindrical noise the modal identities remain
formal in a larger distribution space. For discrete images the
finite-dimensional theorem suffices. Complex Fourier coefficients are
handled either by a unitary/Hermitian version or by grouping conjugate
pairs into two-dimensional real blocks
(Corollary~\ref{cor:blocks}).
\end{remark}

\subsection{Proofs for Section~\ref{sec:invisibility} (Invisibility)}
\label{app:thm0}

\subsubsection{The sampler as an affine recursion}
\label{app:affine}

Write $\bar c_i:=(1-\rho_i)W+\rho_i$. The plug-in step from level
$\rho_i$ to $\rho_{i-1}$ is
\begin{align}
\hat x_0 &= Wx_1+K(\rho_i)\bigl(x_{\rho_i}-\bar c_ix_1\bigr),\\
x_{\rho_{i-1}} &= \hat x_0+r_i\bigl(x_{\rho_i}-\hat x_0\bigr)+\sqrt{q_i}\,\xi_i,
\end{align}
with $r_i=\rho_{i-1}/\rho_i$ and $q_i=v\rho_{i-1}(1-r_i)$. Every
operation is affine in $(x,x_1)$ and adds independent Gaussian noise.
Conditional on $x_1$ the state is therefore \emph{exactly}
$\N(c_ix_1,V_i)$, where
\begin{align}
A_i     &= r_i+(1-r_i)K(\rho_i),\label{eq:Ai}\\
B_i     &= (1-r_i)\bigl(W-K(\rho_i)\bar c_i\bigr),\label{eq:Bi}\\
c_{i-1} &= A_ic_i+B_i, &&c_T=1,\label{eq:crec}\\
V_{i-1} &= A_i^2V_i+q_i, &&V_T=0.\label{eq:Vrec}
\end{align}

\begin{lemma}[Mean exactness at every $T$]
\label{lem:mean-exact}
$c_i=(1-\rho_i)W+\rho_i$ for all $i$, for any grid, any $T\ge1$ and any
$v\ge0$. In particular $c_0=W$ exactly.
\end{lemma}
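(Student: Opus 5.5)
Because the recursion \eqref{eq:crec} runs backward from $c_T=1$, the natural proof is backward induction on $i$. The claim to carry along is $c_i=\bar c_i$, where $\bar c_i:=(1-\rho_i)W+\rho_i$. At $i=T$ we have $\rho_T=1$, so $\bar c_T=1=c_T$ and the base case is immediate.

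For the inductive step, assume $c_i=\bar c_i$ and substitute into $c_{i-1}=A_ic_i+B_i$ using \eqref{eq:Ai}--\eqref{eq:Bi}:
\[
c_{i-1}=\bigl(r_i+(1-r_i)K(\rho_i)\bigr)\bar c_i+(1-r_i)\bigl(W-K(\rho_i)\bar c_i\bigr).
\]
The two $K(\rho_i)\bar c_i$ terms cancel identically, leaving $c_{i-1}=r_i\bar c_i+(1-r_i)W$. It then remains to check that $r_i\bar c_i+(1-r_i)W=\bar c_{i-1}$. With $r_i=\rho_{i-1}/\rho_i$ we get $r_i\bar c_i=r_iW+\rho_{i-1}(1-W)$. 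Hence
\[
c_{i-1}=W+\rho_{i-1}(1-W)=(1-\rho_{i-1})W+\rho_{i-1}=\bar c_{i-1}.
\]
Evaluating at $i=0$ gives $\rho_0=0$ and therefore $c_0=W$.

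The cancellation of $K$ means the value of $v$ never enters, so the identity holds for every $v\ge0$ and every grid; that is the whole content of the statement. I would add a brief remark explaining why the plug-in estimate is structurally mean-preserving. Given $x_1$, the variable $\hat x_0$ has mean $Wx_1+K(\rho_i)(c_i-\bar c_i)x_1$, so it is unbiased exactly when $c_i=\bar c_i$, and the reverse kernel then interpolates the means linearly. This is also why the same argument will later survive the gain errors of Prop.~\ref{prop:gain-unbiased}.

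The only point needing care is the degenerate case. At $v=0$, $K(\rho)=P/\varphi(\rho)$ is still defined, since $\varphi(\rho)=(1-\rho)P>0$ for $\rho<1$; at $\rho_T=1$ it may be undefined. In that case I would note that the step from $\rho_T$ has $r_T\bar c_T+(1-r_T)W$ independent of $K$ because of the cancellation above. So one can take $A_T,B_T$ with any finite convention, or argue directly with the affine formula, and the result is unchanged. I expect this boundary bookkeeping, rather than the algebra, to be the only obstacle.
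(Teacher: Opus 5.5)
Your proof is correct and is the paper's argument: backward induction from $c_T=1$, cancellation of the two $K(\rho_i)\bar c_i$ terms, and $r_i\rho_i=\rho_{i-1}$ to close the step. Your treatment of the $v=0$ step from $\rho_T=1$, where $K(1)=P/v$ is undefined but the cancellation makes the step independent of $K$, agrees with the convention the paper adopts in the proof of Theorem~\ref{thm:invisibility}(b).
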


\begin{proof}
Backward induction. Base case: $c_T=1=(1-\rho_T)W+\rho_T$. Step:
substitute $c_i=\bar c_i$ into \eqref{eq:crec}. The two terms
containing $K(\rho_i)\bar c_i$ cancel exactly, leaving
\begin{align}
c_{i-1}&=r_i\bar c_i+(1-r_i)W\notag\\
       &=W(1-r_i\rho_i)+r_i\rho_i\notag\\
       &=(1-\rho_{i-1})W+\rho_{i-1},
\end{align}
where we used $r_i\rho_i=\rho_{i-1}$.
\end{proof}

\begin{lemma}[Exact telescoping]
\label{lem:telescope}
$A_i=\varphi(\rho_{i-1})/\varphi(\rho_i)$, and hence
\begin{equation}
\prod_{m=j+1}^{i}A_m=\frac{\varphi(\rho_j)}{\varphi(\rho_i)},
\qquad
\prod_{m=1}^{i}A_m=\frac{P}{\varphi(\rho_i)}.
\label{eq:telescope}
\end{equation}
\end{lemma}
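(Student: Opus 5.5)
The proof is a direct algebraic verification of the single-step identity, after which the products telescope. From \eqref{eq:Ai}, $A_i=r_i+(1-r_i)K(\rho_i)$ with $r_i=\rho_{i-1}/\rho_i$ and $K(\rho_i)=P/\varphi(\rho_i)$. Put everything over the common denominator $\rho_i\varphi(\rho_i)$:
\begin{equation*}
A_i=\frac{\rho_{i-1}\varphi(\rho_i)+(\rho_i-\rho_{i-1})P}{\rho_i\,\varphi(\rho_i)}.
\end{equation*}

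The main step is to show that the numerator equals $\rho_i\varphi(\rho_{i-1})$. Since $\varphi$ is affine, $\varphi(\rho)=P+(v-P)\rho$. Expanding the numerator gives
\begin{equation*}
\rho_{i-1}P+(v-P)\rho_{i-1}\rho_i+\rho_iP-\rho_{i-1}P=\rho_i\bigl(P+(v-P)\rho_{i-1}\bigr)=\rho_i\varphi(\rho_{i-1}).
\end{equation*}
The factor $\rho_i$ cancels, which yields $A_i=\varphi(\rho_{i-1})/\varphi(\rho_i)$. This holds for every $i\ge1$ (note $\rho_i>0$ there) and for every $v\ge0$.

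Next, the product $\prod_{m=j+1}^{i}A_m$ telescopes to $\varphi(\rho_j)/\varphi(\rho_i)$. Setting $j=0$ and using $\varphi(\rho_0)=\varphi(0)=P$ gives the second identity in \eqref{eq:telescope}.

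The only point needing care is that the denominators are nonzero. Because $P>0$, we have $\varphi(\rho)=(1-\rho)P+v\rho>0$ for $\rho\in[0,1)$. At $\rho=1$, $\varphi(1)=v$, which is positive when $v>0$. In the boundary case $v=0$ the quantity $\varphi(\rho_T)$ vanishes, so for that case I would state the identity for indices $i<T$ only, or interpret it via the limit. Beyond this positivity check there is no real obstacle; the argument is pure algebra relying on the affine form of $\varphi$.
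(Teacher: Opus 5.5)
Your proof is correct and follows the paper's argument: the paper likewise shows that the numerator $r_i\varphi(\rho_i)+(1-r_i)P$ equals $\varphi(\rho_{i-1})$ using $r_i\rho_i=\rho_{i-1}$, which is your computation after clearing the factor $\rho_i$, and then telescopes with $\varphi(0)=P$. Your caveat that $\varphi(1)=v$ vanishes when $v=0$ is a sensible addition; the paper treats that case separately in Theorem~\ref{thm:invisibility}(b), where it defines the first step directly by $\hat x_0=Wx_1$.
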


\begin{proof}
By \eqref{eq:Ai},
$A_i=\bigl[r_i\varphi(\rho_i)+(1-r_i)P\bigr]/\varphi(\rho_i)$, and the
numerator telescopes:
\begin{align}
r_i\varphi(\rho_i)+(1-r_i)P
&=r_i\bigl[(1-\rho_i)P+v\rho_i\bigr]+(1-r_i)P\notag\\
&=P(1-r_i\rho_i)+v\,r_i\rho_i\notag\\
&=P(1-\rho_{i-1})+v\rho_{i-1}\notag\\
&=\varphi(\rho_{i-1}).
\end{align}
\end{proof}


\begin{lemma}[One-step identity and deficit recursion]
\label{lem:deficit-rec}
Let $D_i:=\Sigma(\rho_i)-V_i$ be the gap between the exact ancestral
chain and the plug-in chain. Then
\begin{equation}
D_{i-1}=A_i^2D_i+(1-r_i)^2M(\rho_i),
\qquad D_T=0.
\label{eq:deficit-rec}
\end{equation}
Every term is nonnegative and the source term is strictly positive when
$v>0$; this proves Corollary~\ref{cor:underdispersion}.
\end{lemma}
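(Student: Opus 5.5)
The plan is to compare the plug-in chain with the \emph{exact} ancestral chain step by step, with both chains conditional on $x_1$. The exact chain draws $x_0$ from the posterior, sets $x_{\rho_T}=x_1$, and applies the reverse kernel of Theorem~\ref{thm:tractability} using the true $x_0$. By the two-sided Markov property (S3) and Lemma~\ref{lem:reverse}, this reproduces the reference bridge marginals. Hence $x_{\rho_i}\mid x_1$ has variance $\Sigma(\rho_i)=\Var(x_\rho\mid x_1)$ at every level, and the exact chain supplies the variance target.

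The first step is a one-step variance identity for the exact chain. Write the exact step as
\begin{equation*}
x_{\rho_{i-1}}=r_ix_{\rho_i}+(1-r_i)x_0+\sqrt{q_i}\,\xi_i,
\end{equation*}
and split $x_0=\hat x_0+e$ with $\hat x_0=\E[x_0\mid x_{\rho_i},x_1]$. The error $e$ is independent of $(x_{\rho_i},x_1)$ by Gaussianity and has variance $M(\rho_i)$. Substituting the affine form $\hat x_0=Wx_1+K(\rho_i)(x_{\rho_i}-\bar c_ix_1)$ gives
\begin{equation*}
x_{\rho_{i-1}}=A_ix_{\rho_i}+(\text{function of }x_1)+(1-r_i)e+\sqrt{q_i}\,\xi_i,
\end{equation*}
with the same $A_i$ as in \eqref{eq:Ai}. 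Taking variances given $x_1$ then yields
\begin{equation*}
\Sigma(\rho_{i-1})=A_i^2\Sigma(\rho_i)+(1-r_i)^2M(\rho_i)+q_i .
\end{equation*}
The one thing to check is that the $x_1$-coefficient is deterministic, so it contributes nothing to the conditional variance.

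The second step subtracts the plug-in recursion \eqref{eq:Vrec}, $V_{i-1}=A_i^2V_i+q_i$. The $q_i$ terms cancel, which gives \eqref{eq:deficit-rec}. At the start, $\Sigma(\rho_T)=\Sigma(1)=0=V_T$, so $D_T=0$.

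For the sign claims, $A_i^2\ge0$ and $M(\rho_i)=v\rho_iP/\varphi(\rho_i)\ge0$. The source is strictly positive when $v>0$: for $i\ge1$ we have $\rho_i>0$ and $r_i<1$, and $\varphi(\rho_i)>0$ because $P>0$. Unrolling from $D_T=0$, each $D_{i-1}$ is bounded below by the positive source at step $i$, since the term $A_i^2D_i$ is nonnegative by induction. Therefore $D_i>0$ for all $i<T$. At $i=0$ this reads $V_0<\Sigma(0)=P$, which is Corollary~\ref{cor:underdispersion}.

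The main obstacle is justifying that the exact chain's marginal at $\rho_i$ is indeed $\Sigma(\rho_i)$ and that $e$ is independent of the state. This needs care in the conditioning order, because the exact reverse kernel conditions on $(x_t,x_0)$ while the sampler only sees $(x_t,x_1)$. I would handle it by the Markov property of Lemma~\ref{lem:reverse}. An alternative is to verify the variance identity algebraically from $\Sigma(\rho)=(1-\rho)\varphi(\rho)$ and Lemma~\ref{lem:telescope}, which serves as a direct check.
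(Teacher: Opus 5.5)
Your proposal is correct and follows the paper's proof. Both compare against the exact ancestral chain, justified by the two-sided Markov property, to get $\Sigma(\rho_{i-1})=A_i^2\Sigma(\rho_i)+(1-r_i)^2M(\rho_i)+q_i$, and then subtract the plug-in recursion $V_{i-1}=A_i^2V_i+q_i$. Your explicit split $x_0=\hat x_0+e$ and your induction giving $D_i\ge(1-r_{i+1})^2M(\rho_{i+1})>0$ for all $i<T$ supply details the paper leaves implicit (it states only nonnegativity by backward induction); that strict inequality is exactly what Corollary~\ref{cor:underdispersion} requires.
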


\begin{proof}
By the two-sided Markov property (S3) the exact ancestral chain
reproduces the true conditional law at every level. That chain draws
$x_0\sim\N(\cdot,M(\rho_i))$ rather than replacing the variable by its
conditional mean, so its variance obeys
\begin{equation}
\Sigma(\rho_{i-1})=A_i^2\Sigma(\rho_i)+(1-r_i)^2M(\rho_i)+q_i.
\end{equation}
The plug-in chain obeys the same recursion with the middle term
omitted, namely \eqref{eq:Vrec}. Subtracting gives
\eqref{eq:deficit-rec}. Nonnegativity of $D_0$ is immediate by
backward induction from $D_T=0$.
\end{proof}

\begin{lemma}[Deficit in closed form]
\label{lem:deficit-closed}
For every grid and every $T\ge1$,
\begin{equation}
D_0=vP^3\sum_{i=1}^{T}
\frac{(\Delta\rho_i)^2}{\rho_i\,\varphi(\rho_i)\,\varphi(\rho_{i-1})^2}.
\label{eq:deficit-closed}
\end{equation}
For $P=v$ on the uniform grid this reduces to $D_0=(P/T)\Har_T$ with
$\Har_T$ the $T$-th harmonic number.
\end{lemma}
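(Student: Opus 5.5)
Unroll the linear recursion of Lemma~\ref{lem:deficit-rec} from $D_T=0$ down to $D_0$, and collapse the resulting products with Lemma~\ref{lem:telescope}. Iterating $D_{i-1}=A_i^2D_i+(1-r_i)^2M(\rho_i)$ gives
\begin{equation*}
D_0=\sum_{i=1}^{T}\Bigl(\prod_{m=1}^{i-1}A_m\Bigr)^{2}(1-r_i)^2M(\rho_i).
\end{equation*}
By \eqref{eq:telescope}, the partial product is $\prod_{m=1}^{i-1}A_m=P/\varphi(\rho_{i-1})$. This is the empty product $1=P/\varphi(\rho_0)$ when $i=1$, since $\varphi(0)=P$. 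The $i$-th term is therefore $P^2(1-r_i)^2M(\rho_i)/\varphi(\rho_{i-1})^2$.

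Next substitute the explicit ingredients. We have $1-r_i=(\rho_i-\rho_{i-1})/\rho_i=\Delta\rho_i/\rho_i$, and by \eqref{eq:phi}, $M(\rho_i)=v\rho_iP/\varphi(\rho_i)$. Multiplying out, one factor of $\rho_i$ cancels, and the term becomes
\begin{equation*}
\frac{vP^3(\Delta\rho_i)^2}{\rho_i\,\varphi(\rho_i)\,\varphi(\rho_{i-1})^2},
\end{equation*}
which is \eqref{eq:deficit-closed}.

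Before this, I would check that $M(\rho)=v\rho P/\varphi(\rho)$ really is the conditional variance used in the source term. This follows from (S1) applied to $x_\rho=(1-\rho)x_0+\rho x_1+\sqrt{v\rho(1-\rho)}\,\xi$ given $x_1$. Here $\Var(x_\rho\mid x_1)=(1-\rho)^2P+v\rho(1-\rho)=(1-\rho)\varphi(\rho)$ and $\Cov(x_0,x_\rho\mid x_1)=(1-\rho)P$, so
\begin{equation*}
M=P-\frac{(1-\rho)P^2}{\varphi(\rho)}=\frac{P\bigl(\varphi(\rho)-(1-\rho)P\bigr)}{\varphi(\rho)}=\frac{v\rho P}{\varphi(\rho)}.
\end{equation*}
The main obstacle is only bookkeeping: the index direction. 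The sampler runs from $i=T$ down to $0$, so the weight multiplying the source injected at step $i$ is the product of the contractions applied afterwards, namely $A_{i-1},\dots,A_1$. It is not $A_{i+1},\dots,A_T$. I would confirm this by a two-step check with $T=2$.

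For the special case $v=P$ we have $\varphi\equiv P$. On the uniform grid, $\Delta\rho_i=1/T$ and $\rho_i=i/T$, so each term is $P^4T^{-2}/(P^3\,i/T)=P/(Ti)$. Summing gives $D_0=(P/T)\Har_T$.
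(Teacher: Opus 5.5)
Your proposal is correct and follows the paper's proof: unroll the deficit recursion from $D_T=0$, collapse the accumulated contractions with the telescoping identity to $\bigl(P/\varphi(\rho_{i-1})\bigr)^2$, and substitute the one-step identity $(1-r_i)^2M(\rho_i)=vP(\Delta\rho_i)^2/\bigl(\rho_i\varphi(\rho_i)\bigr)$, with the $v=P$ uniform-grid reduction handled the same way. Your extra checks, which the paper leaves implicit, are all correct: the $i=1$ empty product via $\varphi(0)=P$, the Gaussian-conditioning derivation of $M(\rho)$, and the index direction of the unrolled weights.
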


\begin{proof}
Unroll \eqref{eq:deficit-rec} from $D_T=0$ and use
\eqref{eq:telescope} for the accumulated contractions, together with
the one-step identity
\begin{equation}
(1-r_i)^2M(\rho_i)=\frac{vP\,(\Delta\rho_i)^2}{\rho_i\,\varphi(\rho_i)}.
\end{equation}
Each surviving term acquires the factor
$\bigl(P/\varphi(\rho_{i-1})\bigr)^2$, which produces the stated
$\varphi(\rho_{i-1})^2$ in the denominator and the overall $P^3$. When
$P=v$ we have $\varphi\equiv P$, so the sum collapses to
$\sum_i(\Delta\rho_i)^2/\rho_i=(1/T)\sum_{i\le T}1/i$.
\end{proof}

\subsubsection{Proof of Theorem~\ref{thm:invisibility}}
\label{app:thm0-proof}

\begin{proof}
\textbf{(a) $v>0$.} Put $\varphi_{\min}=\min(P,v)>0$,
$\varphi_{\max}=\max(P,v)$ and let
\begin{equation}
\Theta_T:=\sum_{i=1}^T\frac{(\Delta\rho_i)^2}{\rho_i}
\end{equation}
be the purely geometric part of \eqref{eq:deficit-closed}. Since
$\varphi_{\min}\le\varphi(\rho)\le\varphi_{\max}$ on $[0,1]$,
\begin{equation}
\frac{vP^3}{\varphi_{\max}^3}\,\Theta_T
\;\le\;D_0\;\le\;
\frac{vP^3}{\varphi_{\min}^3}\,\Theta_T .
\label{eq:sandwich}
\end{equation}
It therefore suffices to show $\Theta_T\to0$.

\emph{Uniform grid.}
$\Theta_T=(1/T)\Har_T=(\log T)/T+O(1/T)$.

\emph{Power grid $\rho_i=(i/T)^a$, $a>1$.} The mean value theorem
gives $\Delta\rho_i\le a\,i^{a-1}/T^a$, hence
\begin{equation}
\frac{(\Delta\rho_i)^2}{\rho_i}\le\frac{a^2\,i^{a-2}}{T^a}.
\end{equation}
The summand is \emph{not} uniformly of order $T^{-2}$: the $i=1$ term
is $T^{-a}$. Now $u\mapsto u^{a-2}$ is monotone on $[1,\infty)$
(nonincreasing for $1<a\le2$, nondecreasing for $a\ge2$), so comparing
the sum with the integral term by term --- adding the first term
separately in the nonincreasing case --- gives
\begin{align}
\sum_{i=1}^Ti^{a-2}
&\le1+\int_1^{T+1}u^{a-2}\dd u\notag\\
&\le\Bigl(1+\tfrac{2^{a-1}}{a-1}\Bigr)T^{a-1},
\end{align}
where we used $(T+1)^{a-1}\le(2T)^{a-1}$ and $1\le T^{a-1}$.
Therefore
\begin{equation}
\Theta_T\le a^2\Bigl(1+\tfrac{2^{a-1}}{a-1}\Bigr)\frac1T=O(1/T).
\end{equation}

In both cases $D_0\to0$ by \eqref{eq:sandwich}, so $V_0\to P$.
Lemma~\ref{lem:mean-exact} shows the mean is already exact, so the
terminal law converges to $\N(Wx_1,P)$. By (S6) the terminal KL is
\begin{equation}
\tfrac12\Bigl(\tfrac{V_0}{P}-1-\log\tfrac{V_0}{P}\Bigr)
=\frac{D_0^2}{4P^2}+O\bigl((D_0/P)^3\bigr),
\label{eq:KLexpand}
\end{equation}
and the limiting law depends on neither $v$ nor the schedule.

\textbf{(b) $v=0$.} All $q_i=0$, so $V_i\equiv0$ and the terminal law
is the point mass at $c_0x_1=Wx_1$; Lemma~\ref{lem:mean-exact} remains
valid. The first step from $\rho=1$ is defined directly by
$\hat x_0=Wx_1$, because the scaled deviation is identically zero.
Hence $\KL\bigl(\delta(Wx_1)\,\|\,\N(Wx_1,P)\bigr)=+\infty$. The map
from $v$ to the limiting law is constant on $v>0$ and jumps at $v=0$.
\end{proof}

\subsubsection{Why resource-free objectives are degenerate}
\label{app:deadroute}

The integrated-Bayes-risk objective
\begin{equation}
J=\sum_k\int w(t)\,\MMSE_k(t)\dd t
\end{equation}
is concave and increasing in every $v_k$. Under a budget the matched point does satisfy the Lagrange
condition --- but it is the \emph{maximum}; the minima sit at vertex
allocations. An explicit counterexample is $\rho=0.5$, $P=(1,4)$ and
budget $5$: the matched value is $2.5$, whereas a vertex allocation
gives $0.833$.

Minimizing target Bayes risk rewards a bridge
state that is maximally informative about $x_0$, and without a resource
constraint the optimum is $v=0$. That choice makes the interpolation
deterministic, reduces the method to one-shot Wiener regression, and
destroys distributional correctness because the terminal variance tends
to $0$. An objective whose optimum lies on the $v=0$ boundary is
unsuitable for a \emph{bridge}, whose purpose is distributional. 
\subsection{Proofs for Section~\ref{sec:finitestep} (Finite-Step)}
\label{app:thmB}

\subsubsection{Proof of Theorem~\ref{thm:proportional}}
\label{app:prop-proof}

\begin{proof}
\textbf{Step 1 (scale symmetry).} Write $\varphi(\rho)=P\psi(\rho)$
with $\psi(\rho)=1+(x-1)\rho$ and $x=v/P$. In
\eqref{eq:deficit-closed} the factor $P^3$ cancels the three factors of
$P$ contributed by the $\varphi$ terms, and the remaining factor is
$v=xP$. Thus $x$ survives in the dimensionless expression while $P$
normalizes $D_0$:
\begin{equation}
\frac{D_0}{P}=x\,G(x),
\qquad
G(x)=\sum_i
\frac{(\Delta\rho_i)^2}
     {\rho_i\,\psi(\rho_i)\,\psi(\rho_{i-1})^2}.
\label{eq:HxG}
\end{equation}
Hence $\KL=\Phi(x)$ depends on the reference only through $x$, and
otherwise only on the grid. Structurally, $P$ is the only scale in the
per-mode Gaussian problem and $v/P$ is its only dimensionless
parameter.

\textbf{Step 2 (form of every optimizer).} By Step~1,
$\sum_k\KL_k=\sum_k\Phi(v_k/P_k)$ with the \emph{same} function $\Phi$
in every mode, so the objective is separable. At any \emph{local}
minimizer each coordinate $x_k=v_k/P_k$ must be a local minimizer of
$\Phi$; different coordinates may, however, select different local
minima, so a common mode-independent $x$ is not forced. At any
\emph{global} minimizer $x_k\in\argmin\Phi$ for every $k$. If
$\argmin\Phi$ is the singleton $\{x^{*}\}$ then
$v^{*}_k=x^{*}P_k$ for every mode, with $x^{*}$ mode-independent. If
$\Phi$ has several local minimizers --- which does occur on general
grids, by Proposition~\ref{prop:uniqueness} --- then assigning
different modes to different valleys produces genuinely
non-proportional \emph{local} optimizers of the sum.

\textbf{Step 3 (barriers, hence an interior optimum).} As $x\to0$ the
observation-end term ($i{=}T$) of $xG(x)$ tends to $1$: the sampler
cannot create posterior variance without reference noise, exactly as in
Theorem~\ref{thm:invisibility}(b). Hence $u\to0$ and $\Phi\to\infty$.
As $x\to\infty$ the clean-end term ($i{=}1$) of $xG(x)$ tends to $1$:
excess noise entering the final Bayes step cannot be fully removed, so
again $\Phi\to\infty$. Since $\Phi$ is continuous on $(0,\infty)$ it
attains an interior minimum. Both barriers are visible in the left
panel of Figure~\ref{fig:app-H}.
\end{proof}

\subsubsection{Proof of Proposition~\ref{prop:constant}: the constant $x^{*}(T)$}
\label{app:constant-proof}

Two conventions are in force in this subsection. First, the window
center denoted $x_0$ in the statement of
Proposition~\ref{prop:constant} is the quantity $\chi_T=(2\ln T)^{-1/2}$
here; the symbol $x_0$ is reserved for the clean signal. Second,
\emph{minimizer} means global minimizer of $\Phi$ on $(0,\infty)$;
Proposition~\ref{prop:constant} is proved in this sense.

Throughout this subsection the grid is uniform, $\rho_i=i/T$, and we
write
\begin{align}
H(x)&:=\frac{D_0}{P}=\frac{x\,\Sm(x)}{T},\label{eq:Hdef}\\
\Sm(x)&:=\frac1T\sum_{i=1}^{T}
        \frac{1}{\rho_i\,\psi_i\,\psi_{i-1}^{2}},\label{eq:Sdef}\\
\psi_i&:=\psi(i/T)=1-b\,\tfrac{i}{T},
        \qquad b:=1-x.\label{eq:psii}
\end{align}
We also set
\begin{align}
f(\rho)&:=\frac{1}{\rho\,\psi(\rho)^3},
&\Sm^{\circ}(x)&:=\frac1T\sum_{i=1}^Tf(\rho_i),\\
h(x)&:=x\ln T+\frac{1}{2x},
&\chi_T&:=(2\ln T)^{-1/2},
\end{align}
and $\Wind:=[\chi_T/3,\,3\chi_T]$. The comparison function $h$ is
strictly convex on $(0,\infty)$ with unique minimizer $\chi_T$, minimum
value $h(\chi_T)=\sqrt{2\ln T}$ and $h''(x)=x^{-3}$. Finally,
$\Phi(x)=\tfrac12(u-1-\ln u)$ with $u=1-H(x)\in(0,1)$ is strictly
decreasing in $u$, so \emph{minimizing $\Phi$ is equivalent to
minimizing $H$}. We work throughout with the scaled quantity $T\,H$.

The proof compares $T\,H$ with $h$ in four steps: replace the shifted
grid factor (Lemma~\ref{lem:gridshift}), replace the sum by an integral
(Lemma~\ref{lem:riemann}), evaluate that integral in closed form
(Lemma~\ref{lem:integral}), and combine
(Lemma~\ref{lem:expansion}). Lemma~\ref{lem:global} then confines the
minimizer to $\Wind$, where the comparison is uniform.

\begin{figure*}[!htb]
\centering
\includegraphics[width=\textwidth]{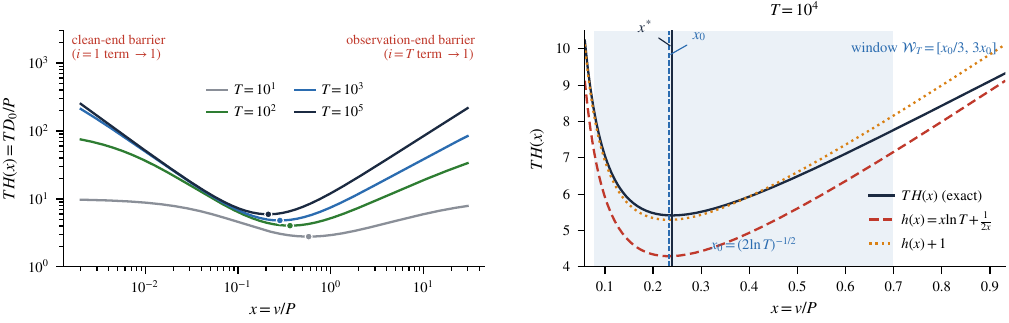}
\caption{\textbf{The landscape of $T\,H(x)$, and why the minimizer is
localized.} \emph{Left:} the two barriers of
Theorem~\ref{thm:proportional}, Step~3 --- $\Phi\to\infty$ at both ends
--- with the exact minimizer marked for each $T$. \emph{Right:} at
$T=10^4$, the exact $T\,H$ against the comparison function
$h(x)=x\ln T+\tfrac1{2x}$ of Lemma~\ref{lem:expansion}; the additive
gap is $1+o(1)$, the shaded band is the window
$\Wind=[\chi_T/3,3\chi_T]$ of Lemma~\ref{lem:global}, and $x^{*}$ has
already almost reached $\chi_T=(2\ln T)^{-1/2}$. Computed from
\eqref{eq:deficit-closed}; no simulation.}
\label{fig:app-H}
\end{figure*}

\begin{lemma}[Grid-shift replacement]
\label{lem:gridshift}
For $0<x<1$ and $T\ge2$,
\begin{equation}
\Bigl(1+\tfrac{1}{Tx}\Bigr)^{-2}\Sm^{\circ}(x)
\;\le\;\Sm(x)\;\le\;\Sm^{\circ}(x).
\end{equation}
\end{lemma}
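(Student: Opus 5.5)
Both bounds follow from a termwise comparison of the two sums. In $\Sm(x)$ the $i$-th summand is $1/(\rho_i\psi_i\psi_{i-1}^2)$, while in $\Sm^{\circ}(x)$ it is $f(\rho_i)=1/(\rho_i\psi_i^3)$. So it suffices to control the single ratio
\begin{equation}
\frac{\psi_i^2}{\psi_{i-1}^2}
\end{equation}
uniformly in $i$. Since $0<x<1$ we have $b=1-x\in(0,1)$, so $\psi(\rho)=1-b\rho$ is positive and strictly decreasing on $[0,1]$. Its values lie in $[x,1]$, with $\psi(1)=x$.

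\emph{Upper bound.} Monotonicity of $\psi$ gives $\psi_{i-1}\ge\psi_i>0$, hence $\psi_{i-1}^{-2}\le\psi_i^{-2}$. Each summand of $\Sm$ is therefore at most the corresponding summand of $\Sm^{\circ}$. Summing with the common weight $1/T$ gives $\Sm(x)\le\Sm^{\circ}(x)$.

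\emph{Lower bound.} Here I bound $\psi_{i-1}/\psi_i$ from above. We have
\begin{equation}
\psi_{i-1}=\psi_i+\frac{b}{T},
\qquad\text{so}\qquad
\frac{\psi_{i-1}}{\psi_i}=1+\frac{b}{T\psi_i}.
\end{equation}
Now use $b<1$ and $\psi_i\ge\psi(1)=x$; the minimum is attained at $i=T$. This gives $\psi_{i-1}/\psi_i\le 1+1/(Tx)$ for every $i$. Hence
\begin{equation}
\frac{1}{\psi_{i-1}^2}\ge\Bigl(1+\tfrac{1}{Tx}\Bigr)^{-2}\frac{1}{\psi_i^2},
\end{equation}
and multiplying by $1/(\rho_i\psi_i)>0$ and summing yields the left inequality.

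The only point needing care is using the worst-case endpoint $\psi_T=x$ uniformly, rather than a per-term bound. That endpoint is what produces the factor $1/(Tx)$. The factor is small exactly on the window $\Wind$, where $Tx\gg1$; this is presumably why the lemma is stated in this form. The hypothesis $T\ge2$ plays no essential role beyond keeping the grid nondegenerate.
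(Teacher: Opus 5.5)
Your proof is correct and is essentially the paper's own argument. The paper likewise uses $\psi_{i-1}=\psi_i+b/T$ with $0<b<1$ and $\psi_i\ge\psi_T=x$ to obtain $\psi_i\le\psi_{i-1}\le\psi_i\bigl(1+\tfrac{1}{Tx}\bigr)$, and then squares and inverts termwise.
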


\begin{proof}
$\psi_{i-1}=\psi_i+b/T$ with $0<b<1$, and $\psi_i\ge\psi_T=x$, so
$\psi_i\le\psi_{i-1}\le\psi_i\bigl(1+\tfrac{1}{Tx}\bigr)$. Square and
invert termwise.
\end{proof}

\begin{lemma}[Riemann comparison]
\label{lem:riemann}
For $0<x\le\tfrac34$ and $T\ge8$,
\begin{equation}
\Bigl|\Sm^{\circ}(x)-1-\int_{1/T}^{1}f(\rho)\dd\rho\Bigr|
\;\le\;1+\frac{12}{T}+\frac{1}{Tx^{3}}.
\end{equation}
\end{lemma}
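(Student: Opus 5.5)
The plan is to split off the $i=1$ term of $\Sm^{\circ}$, which is the source of the constant $1$ in the statement, and then to treat the remaining sum as a right-endpoint Riemann sum for $\int_{1/T}^1 f$. The error of that Riemann sum will be controlled by the total-variation bound (S4).

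\textbf{Step 1: the $i=1$ term.} The first term of $\Sm^{\circ}$ is $\frac1T f(1/T)=\psi_1^{-3}$ with $\psi_1=1-b/T$. Since $b=1-x\in[\tfrac14,1)$ and $T\ge8$, we have $1-\tfrac1T\le\psi_1\le1$. The mean value theorem applied to $y\mapsto y^{-3}$ then gives
\[
0\le\psi_1^{-3}-1\le \tfrac3T\,(1-\tfrac1T)^{-4}\le \tfrac3T(8/7)^4<\tfrac{5.2}{T}.
\]
Hence
\[
\Sm^{\circ}(x)-1=\Bigl(\psi_1^{-3}-1\Bigr)+\frac1T\sum_{i=2}^T f(i/T),
\]
and what remains is to compare $\frac1T\sum_{i\ge2}f(i/T)$ with $\int_{1/T}^1 f$.

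\textbf{Step 2: shape of $f$.} On $[1/T,1]$ the function $f(\rho)=\rho^{-1}\psi(\rho)^{-3}$ is positive and smooth, because $\psi\ge x>0$ there. Its logarithmic derivative is $-1/\rho+3b/\psi(\rho)$. This vanishes only at $\rho^\dagger=1/(4b)$, where it changes sign from negative to positive. So $f$ is decreasing on $[1/T,\rho^\dagger]$ and increasing on $[\rho^\dagger,1]$; if $\rho^\dagger$ falls outside the interval, $f$ is simply monotone there. In either case $f$ is monotone on each of at most two subintervals. By the second clause of (S4),
\[
\TV_{[1/T,1]}(f)\le f(1/T)+f(1)=T\psi_1^{-3}+x^{-3}.
\]

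\textbf{Step 3: Riemann error.} For $i=2,\dots,T$, the difference between $\frac1T f(i/T)$ and $\int_{(i-1)/T}^{i/T}f$ is at most $\frac1T\osc_{[(i-1)/T,\,i/T]}f$. Summing over $i$ and using the first clause of (S4), the total error is at most
\[
\frac1T\TV(f)\le \psi_1^{-3}+\frac{1}{Tx^3}\le 1+\frac{5.2}{T}+\frac1{Tx^3},
\]
where the last inequality reuses the Step~1 bound on $\psi_1^{-3}$. Adding the Step~1 contribution gives
\[
\Bigl|\Sm^{\circ}(x)-1-\int_{1/T}^1 f\Bigr|\le 1+\frac{10.4}{T}+\frac{1}{Tx^3},
\]
which is within the stated $1+\frac{12}{T}+\frac{1}{Tx^3}$.

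The main obstacle is that the Riemann error is dominated by the steep $1/\rho$ singularity near $\rho=1/T$. There $f(1/T)/T\approx1$, which is why the error bound contains an $O(1)$ term and not an $o(1)$ one. The accounting therefore has to be kept sharp: the bound uses only endpoint values through the unimodality of $f$, and the $(1-1/T)^{-3}$ corrections are carried explicitly so that they fit inside $12/T$. This is where the hypotheses $T\ge8$ and $x\le\tfrac34$ enter. The condition $x\le\tfrac34$ gives $b\ge\tfrac14$, which keeps $\rho^\dagger\le1$; the two-piece monotonicity argument would also cover $\rho^\dagger>1$, where $f$ is simply monotone.
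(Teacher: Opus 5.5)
Your proof is correct and follows the paper's argument essentially step for step. You split off the $i=1$ term, use the decreasing-then-increasing shape of $f$ (turning point $1/(4b)$) to get $\TV(f)\le f(1/T)+f(1)$, and sum the oscillations. The only difference is that you bound $\psi(1/T)^{-3}-1$ by the mean value theorem, giving $5.2/T$, where the paper uses $(1-u)^{-3}\le 1+6u$ on $[0,\tfrac18]$, giving $6/T$; this does not matter for the stated bound.
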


\begin{proof}
The logarithmic derivative
$(\ln f)'(\rho)=-1/\rho+3b/\psi(\rho)$ vanishes only at
$\rho^{*}=1/(4b)\le1$, using $b\ge\tfrac14$. Hence $f$ strictly
decreases on $(0,\rho^{*}]$ and strictly increases on $[\rho^{*},1]$.

Separate the initial term first. For $T\ge8$,
\begin{equation}
\tfrac1Tf(1/T)=\psi(1/T)^{-3}\in\bigl[1,\,1+\tfrac6T\bigr],
\end{equation}
because $\psi(1/T)\ge1-1/T$ and $(1-u)^{-3}\le1+6u$ on
$[0,\tfrac18]$. For each remaining interval,
$\bigl|\tfrac1Tf(\rho_i)-\int_{\rho_{i-1}}^{\rho_i}f\bigr|
\le\tfrac1T\osc_{[\rho_{i-1},\rho_i]}f$, and by (S4) the sum of those
oscillations is at most $\TV(f)$. Because $f$ first decreases and then
increases, $\TV(f)\le f(1/T)+f(1)$. Therefore
\begin{align}
\Bigl|\sum_{i=2}^{T}\tfrac1Tf(\rho_i)-\int_{1/T}^{1}f\Bigr|
&\le\tfrac1T\bigl(f(1/T)+f(1)\bigr)\notag\\
&\le\psi(1/T)^{-3}+\tfrac{1}{Tx^{3}}\notag\\
&\le1+\tfrac{6}{T}+\tfrac{1}{Tx^{3}}.
\end{align}
Combining the two displays gives the claim.
\end{proof}

\begin{lemma}[The integral in closed form]
\label{lem:integral}
For $0<x<1$ and $T\ge3$,
\begin{equation}
x\int_{1/T}^{1}f(\rho)\dd\rho
=x\ln T+\frac{1}{2x}+1+x\ln\frac1x-\frac{3x}{2}+\theta,
\end{equation}
with $|\theta|\le 6x/T$.
\end{lemma}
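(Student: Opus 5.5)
The plan is to compute $\int f$ exactly by partial fractions and then expand the result in the small parameters $1/T$ and $x$. Here $f(\rho)=1/(\rho\psi^3)$, with $\psi=1-b\rho$ and $b=1-x\in(0,1)$. The partial-fraction decomposition I will verify is
\begin{equation*}
\frac{1}{\rho(1-b\rho)^3}=\frac1\rho+\frac{b}{1-b\rho}+\frac{b}{(1-b\rho)^2}+\frac{b}{(1-b\rho)^3}.
\end{equation*}
To check it, set $y=1-b\rho$, so that $b\rho=1-y$. The identity then reads $\frac{1}{(1-y)y^3}=\frac{1}{1-y}+\frac1y+\frac1{y^2}+\frac1{y^3}$. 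This holds because $\frac1{1-y}-\frac1{1-y}\bigl(1-y^3\bigr)\cdot\frac{1}{y^3}$ telescopes via the geometric sum $1+y+y^2=(1-y^3)/(1-y)$. Integrating term by term gives the antiderivative
\begin{equation*}
\ln\rho-\ln\psi+\frac1\psi+\frac1{2\psi^2}.
\end{equation*}

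Next, evaluate between $\rho=1/T$ and $\rho=1$, using $\psi(1)=x$ and $\psi(1/T)=1-b/T=:\psi_1$. The upper endpoint contributes $\ln\frac1x+\frac1x+\frac1{2x^2}$. The lower endpoint contributes $-\ln T-\ln\psi_1+\frac1{\psi_1}+\frac1{2\psi_1^2}$. Multiplying by $x$, the exact result is
\begin{equation*}
x\ln T+x\ln\frac1x+1+\frac1{2x}+x\Bigl(\ln\psi_1-\frac1{\psi_1}-\frac1{2\psi_1^2}\Bigr).
\end{equation*}
At $\psi_1=1$ the bracket equals $-\tfrac32$, which produces the $-\tfrac{3x}{2}$ term. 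So $\theta=x\,[g(\psi_1)-g(1)]$, where $g(\psi)=\ln\psi-\psi^{-1}-\tfrac12\psi^{-2}$.

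The only real step is bounding this remainder, and the plan is the mean value theorem. We have $g'(\psi)=\psi^{-1}+\psi^{-2}+\psi^{-3}$, and $|1-\psi_1|=b/T<1/T$. For $T\ge3$ we get $\psi_1\ge 2/3$, so on $[\psi_1,1]$ the derivative satisfies $g'\le \tfrac32+\tfrac94+\tfrac{27}8<7$. That crude bound only gives $|\theta|\le 7x/T$, so it must be sharpened to reach $6x/T$. Two options: restrict the bound on $g'$ using $\psi_1\ge 1-1/T$ with $T$ large, or split off the exact first-order term, whose coefficient is $g'(1)=3$, and bound the second-order remainder. With either choice, $|\theta|\le 3x/T+O(x/T^2)\le 6x/T$ for all $T\ge3$; the key input is that $g'$ is decreasing, so its maximum on $[\psi_1,1]$ is at $\psi_1$. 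I expect no obstacle beyond this constant bookkeeping. The partial-fraction identity is a direct algebraic check and should be verified first.
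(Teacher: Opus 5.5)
Your route is the paper's: the same partial-fraction identity, which the paper also verifies via $(1-u)(1+u+u^2)=1-u^3$. Your displayed check is garbled; the intended line is $\frac{1}{(1-y)y^3}-\frac{1}{1-y}=\frac{1+y+y^2}{y^3}$. The antiderivative and the endpoint evaluation are also the same as the paper's, and your exact formula $\theta=x\,[g(\psi_1)-g(1)]$ is correct. The paper only asserts that the endpoint remainder is at most $6/T$ for $T\ge3$. You try to prove it, and as you describe it that last step does not close at the boundary case $T=3$.

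First, $\tfrac32+\tfrac94+\tfrac{27}{8}=\tfrac{57}{8}$, which exceeds $7$. Since $g$ is increasing, the worst case is $b\to1$, i.e. $\psi_1\to1-\tfrac1T$.

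\emph{Option 1 (MVT).} With $g'$ maximized at $\psi_1$, it yields only $T\,[g(1)-g(\psi_1)]\le g'(\tfrac23)=\tfrac{57}{8}>6$ at $T=3$. Restricting to larger $T$ works only from $T\ge4$ on, where $g'(\tfrac34)=\tfrac{148}{27}<6$.

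\emph{Option 2 (second order).} Take $g'(1)=3$ and control the Lagrange remainder by $|g''|\le|g''(\tfrac23)|=\tfrac94+\tfrac{27}{4}+\tfrac{243}{16}\approx24.2$. This gives $g(1)-g(\tfrac23)\le1+\tfrac{24.2}{18}\approx2.34$ at $T=3$, which is above $6/T=2$.

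So your claim that $3x/T+O(x/T^2)\le6x/T$ for all $T\ge3$ rests on an explicit constant you have not computed, and the natural constants are too large at $T=3$. The margin is genuinely thin: at $T=2$ the inequality is false, as the paper remarks.

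The repair is one line. Either of these works:
\begin{enumerate}
\item Evaluate the worst case exactly, $g(1)-g(\tfrac23)=\tfrac98+\ln\tfrac32\approx1.53<2$, and keep your MVT bound for $T\ge4$.
\item Use $g'''(\psi)=2\psi^{-3}+6\psi^{-4}+12\psi^{-5}>0$. Then $g'$ is convex and lies below its chord on $[\psi_1,1]$, so $g(1)-g(\psi_1)=\int_{\psi_1}^{1}g'\le\tfrac{1-\psi_1}{2}\bigl(g'(\psi_1)+g'(1)\bigr)\le\tfrac{1}{2T}\bigl(\tfrac{57}{8}+3\bigr)=\tfrac{81}{16T}<\tfrac{6}{T}$, uniformly for every $T\ge3$.
\end{enumerate}
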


\begin{proof}
Clearing denominators in $(1-u)(1+u+u^2)=1-u^3$ with $u=1-b\rho$ gives
the partial-fraction identity
\begin{equation}
\frac{1}{\rho\psi^3}
=\frac1\rho+\frac{b}{\psi}+\frac{b}{\psi^2}+\frac{b}{\psi^3},
\end{equation}
whose antiderivative is
\begin{equation}
F(\rho)=\ln\frac{\rho}{\psi(\rho)}+\frac{1}{\psi(\rho)}
        +\frac{1}{2\psi(\rho)^2},
\qquad F'=f.
\end{equation}
Hence $\int_{1/T}^1f=F(1)-F(1/T)$ with
$F(1)=\ln\tfrac1x+\tfrac1x+\tfrac{1}{2x^2}$. Also
$\psi(1/T)=1-b/T\in[1-\tfrac1T,1]$, so
\begin{align}
F(1/T)&=-\ln T-\ln\psi(1/T)+\frac{1}{\psi(1/T)}
        +\frac{1}{2\psi(1/T)^2}\notag\\
      &=-\ln T+\tfrac32+O\bigl(\tfrac1T\bigr).
\end{align}
For $T\ge3$ that remainder is at most $6/T$ in absolute value.
Multiplying by $x$ completes the proof. (At $T=2$ the constant $6$
fails narrowly as $x\to0$; this is harmless, because
Lemma~\ref{lem:expansion} is used only for large $T$ and
Proposition~\ref{prop:uniqueness} handles $T=2$ exactly.)
\end{proof}

\begin{lemma}[Uniform expansion on the window]
\label{lem:expansion}
Let $\delta_T:=\sup_{x\in\Wind}\bigl|T\,H(x)-h(x)-1\bigr|$. Then
$\delta_T\to0$; explicitly
\begin{equation}
\delta_T=O\Bigl(\chi_T\ln\tfrac1{\chi_T}+\tfrac{\ln T}{T}\Bigr).
\end{equation}
\end{lemma}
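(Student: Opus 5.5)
We chain the three preceding lemmas on the window $\Wind=[\chi_T/3,3\chi_T]$. For large $T$ the window satisfies $x\le 3\chi_T\le\tfrac34$ and $Tx\ge T\chi_T/3\to\infty$. All three lemmas therefore apply with $0<x\le\tfrac34$ and $T\ge8$. Throughout we use the exact relation $T\,H(x)=x\,\Sm(x)$ from \eqref{eq:Hdef}.

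\textbf{Step 1 (grid shift).} By Lemma~\ref{lem:gridshift}, $x\Sm(x)$ and $x\Sm^{\circ}(x)$ differ by at most
\[
\bigl(1-(1+\tfrac{1}{Tx})^{-2}\bigr)\,x\Sm^{\circ}(x)\le \tfrac{2}{Tx}\,x\Sm^{\circ}(x).
\]
We bound $x\Sm^{\circ}(x)$ a priori using Steps 2--3. It is $O(h(x)+1)$, and on the window
\[
h(x)\le 3\chi_T\ln T+\tfrac{3}{2\chi_T}=O(\sqrt{\ln T}).
\]
The shift error is therefore $O\bigl(\sqrt{\ln T}/(T\chi_T)\bigr)=O(\ln T/T)$.

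\textbf{Step 2 (Riemann comparison).} Multiplying Lemma~\ref{lem:riemann} by $x$ gives
\[
x\Sm^{\circ}(x)=x+x\int_{1/T}^1 f+O\Bigl(x+\tfrac{x}{T}+\tfrac{1}{Tx^2}\Bigr).
\]
On $\Wind$ the last term is $O(\ln T/T)$, since $1/x^2\le 9/\chi_T^2=18\ln T$. The remaining error is $O(x)=O(\chi_T)$.

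\textbf{Step 3 (closed-form integral).} By Lemma~\ref{lem:integral},
\[
x\int_{1/T}^1 f=h(x)+1+x\ln\tfrac1x-\tfrac{3x}{2}+O(x/T).
\]
On $\Wind$ the function $x\ln(1/x)$ is increasing for $x<1/e$, so it is bounded by $3\chi_T\ln\bigl(3/\chi_T\bigr)=O\bigl(\chi_T\ln(1/\chi_T)\bigr)$. The term $3x/2$ and the extra $x$ from Step 2 are both $O(\chi_T)$.

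\textbf{Combination.} Collecting Steps 1--3 gives, uniformly on $\Wind$,
\[
|T H(x)-h(x)-1|=O\bigl(\chi_T\ln(1/\chi_T)+\ln T/T\bigr).
\]
This is the claimed $\delta_T$, and it tends to $0$ since $\chi_T\ln(1/\chi_T)\asymp(\ln\ln T)/\sqrt{\ln T}$.

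\textbf{Main obstacle: the $O(x)$ term in Step 2.} Lemma~\ref{lem:riemann} only gives an absolute error of order $1$ in $\Sm^{\circ}$, and an $O(1)$ error would not vanish after multiplying by anything other than a vanishing factor. The argument needs the lemma's error to enter multiplied by $x=O(\chi_T)$, and that holds only because $TH=x\Sm$ carries the prefactor $x$. The second delicate point is making the a priori bound in Step 1 non-circular. We obtain it from Steps 2--3 applied to $\Sm^{\circ}$ alone, which do not use Step 1. The remaining checks are routine: uniformity of all constants on the window, and the requirement $T$ large enough that $3\chi_T\le\tfrac34$.
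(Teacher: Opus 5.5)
Your proposal is correct and follows essentially the same route as the paper: chain Lemmas~\ref{lem:gridshift}, \ref{lem:riemann} and \ref{lem:integral} through $T\,H=x\Sm(x)$, bound the grid-shift error by $\tfrac{2}{Tx}\,x\Sm^{\circ}$ using a non-circular a priori bound on $\Sm^{\circ}$ from the Riemann and integral lemmas, and check each remainder ($x$, $x\ln\tfrac1x$, $\tfrac{1}{Tx^{2}}$, $\theta$) uniformly on $\Wind$. Your bound $x\Sm^{\circ}=O(\sqrt{\ln T})$ is equivalent to the paper's $\Sm^{\circ}=O(\ln T)$ on the window, and you arrive at the same rate $O\bigl(\chi_T\ln\tfrac1{\chi_T}+\tfrac{\ln T}{T}\bigr)$.
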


\begin{proof}
Take $T$ large enough that $3\chi_T\le\tfrac34$, so that
Lemmas~\ref{lem:gridshift}--\ref{lem:integral} all apply on $\Wind$.
Chaining the three lemmas,
\begin{align}
T\,H(x)
&=x\,\Sm(x)=x\,\Sm^{\circ}(x)+E_1\notag\\
&=x+x\!\int_{1/T}^{1}\!f+xE_2+E_1\notag\\
&=h(x)+1+r(T,x),
\label{eq:chain}
\end{align}
where the remainder collects the six small terms,
\begin{equation}
r(T,x):=x+x\ln\tfrac1x-\tfrac{3x}{2}+\theta+xE_2+E_1 .
\end{equation}
Here $|E_1|\le\tfrac{2}{Tx}\cdot x\Sm^{\circ}=\tfrac{2\Sm^{\circ}}{T}$
by Lemma~\ref{lem:gridshift},
$|E_2|\le1+\tfrac{12}{T}+\tfrac{1}{Tx^{3}}$ by
Lemma~\ref{lem:riemann}, and $|\theta|\le\tfrac{6x}{T}$ by
Lemma~\ref{lem:integral}.

We check each contribution uniformly on $\Wind$. First
$x\le3\chi_T\to0$ and
$x\ln\tfrac1x\le3\chi_T\ln\tfrac{3}{\chi_T}\to0$. Next
$x\cdot\tfrac{1}{Tx^{3}}=\tfrac{1}{Tx^{2}}\le\tfrac{18\ln T}{T}\to0$.
Finally Lemmas~\ref{lem:riemann}--\ref{lem:integral} together with
$\tfrac{1}{2x^{2}}\le9\ln T$ give $\Sm^{\circ}=O(\ln T)$ on the
window, so $\tfrac{2\Sm^{\circ}}{T}=O(\tfrac{\ln T}{T})\to0$. Every
contribution to $r$ therefore vanishes uniformly on $\Wind$.
\end{proof}

\begin{lemma}[No minimizer outside the window]
\label{lem:global}
There is a $T_0$ such that for every $T\ge T_0$ every global minimizer
of $H$ on $(0,\infty)$ lies in $\Wind=[\chi_T/3,\,3\chi_T]$.
\end{lemma}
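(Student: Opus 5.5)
Since $h$ attains $\min h=h(\chi_T)=\sqrt{2\ln T}$, Lemma~\ref{lem:expansion} at $x=\chi_T$ gives $\min_{x>0}T\,H(x)\le T\,H(\chi_T)\le\sqrt{2\ln T}+1+\delta_T$. The plan is to show that $T\,H(x)$ strictly exceeds this value for every $x\notin\Wind$ once $T$ is large, so no global minimizer can lie outside the window. I will split the complement of $\Wind$ into three regions: the small region $(0,\chi_T/3)$, the moderate region $(3\chi_T,\tfrac34]$, and the large region $(\tfrac34,\infty)$. On each I need a lower bound on $T\,H$ that beats $\sqrt{2\ln T}$ by a margin growing with $T$.

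\emph{Moderate and small regions.} Here I reuse Lemmas~\ref{lem:gridshift}--\ref{lem:integral} as one-sided bounds, not as uniform expansions. For $0<x\le\tfrac34$ and $T\ge8$ the chain \eqref{eq:chain} gives
\begin{equation*}
T\,H(x)\ge\bigl(1+\tfrac{1}{Tx}\bigr)^{-2}\Bigl(x\ln T+\tfrac{1}{2x}+1+x\ln\tfrac1x-\tfrac{3x}{2}-\tfrac{6x}{T}-x-\tfrac{12x}{T}-\tfrac{1}{Tx^2}\Bigr).
\end{equation*}
The error terms $x\ln\tfrac1x$, $\tfrac{3x}{2}$ and $x$ are $O(1)$ on $(0,\tfrac34]$. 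The bad terms are $\tfrac{1}{Tx^2}$ and the prefactor when $x\lesssim 1/T$. I will therefore treat $x\le T^{-1/2}$ separately.

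For $x\ge T^{-1/2}$, the prefactor is $1-O(T^{-1/2})$ and $\tfrac{1}{Tx^2}\le1$. So $T\,H\ge(1-o(1))h(x)-C$. Convexity of $h$ with $h''=x^{-3}$ and the window's ratio $3$ give $h(3\chi_T)=(\tfrac32+\tfrac16)\sqrt{2\ln T}$ and $h(\chi_T/3)=(\tfrac16+\tfrac32)\sqrt{2\ln T}$. Hence outside the window $h\ge\tfrac53\sqrt{2\ln T}$, and this exceeds $\sqrt{2\ln T}+1+\delta_T$ plus the $O(1)$ errors for large $T$.

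For $x<T^{-1/2}$ I use the crude bound that the $i=T$ term alone gives $T\,H\ge x\cdot\frac{T}{\psi_T\psi_{T-1}^2}$. With $\psi_T=x$ and $\psi_{T-1}=x+b/T\le x+1/T$, this yields $T\,H\ge\frac{T}{(x+1/T)^2}\ge\frac{T}{(2T^{-1/2})^2}=\tfrac{T^2}{4}$, which is enormous. The same $i=T$ term shows $H\to1$ as $x\to0$, consistent with Step~3 of Theorem~\ref{thm:proportional}.

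\emph{Large region $x>\tfrac34$.} Here I bound $H$ below directly from \eqref{eq:Sdef}. For $x\in(\tfrac34,1]$ we have $\psi\le1$, so every term is at least $1/\rho_i$ and $T\,H\ge x\Har_T\ge\tfrac34\ln T$. For $x>1$, $\psi$ increases up to $x$, and the clean-end term $i=1$ contributes $\frac{x}{\psi_1\psi_0^2}=\frac{x}{1+(x-1)/T}$, which is at least of order $\min(x,T)$. I also keep the terms with $\rho_i\le1/x$, where $\psi\le2$. Their sum gives $T\,H\gtrsim \tfrac{x}{8}\sum_{i\le T/x}\tfrac{T}{i}\cdot\tfrac1T$, roughly $\tfrac{x}{8}\ln(T/x)$ when $x\le T^{1/2}$. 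When $x\ge T^{1/2}$, the single $i=1$ term is already $\gtrsim T^{1/2}$. Either way $T\,H\gg\sqrt{2\ln T}$.

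\emph{Main obstacle.} I expect the hardest part to be making these large-$x$ and small-$x$ bounds clean and uniform, especially the bookkeeping in the transition regime $x\in(\tfrac34,T^{1/2})$. The first thing I would try there is the split at $\rho=1/x$ described above. Combining the three regions then yields $T_0$.
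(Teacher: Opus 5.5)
Your overall architecture matches the paper's. You take the benchmark $\min T\,H\le\sqrt{2\ln T}+1+\delta_T$ from Lemma~\ref{lem:expansion} at $\chi_T$ and then prove regionwise lower bounds. Your one-sided use of the chain \eqref{eq:chain} on $[T^{-1/2},\chi_T/3)\cup(3\chi_T,\tfrac34]$ is valid. Your large-$x$ bounds (full harmonic sum for $x\le1$, the $i=1$ term, and the partial harmonic sum over $\rho_i\le1/x$) are essentially the paper's regions (iv)--(vi).

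\textbf{Where it breaks: the case $x<T^{-1/2}$.} The $i=T$ summand of $T\,H=x\,\Sm(x)=\frac{x}{T}\sum_i(\rho_i\psi_i\psi_{i-1}^2)^{-1}$ is
\[
\frac{x}{T}\cdot\frac{1}{x\,\psi_{T-1}^2}=\frac{1}{T\,\psi_{T-1}^2},
\]
not $\frac{T}{\psi_{T-1}^2}$, so your bound is off by a factor $T^2$. A quick sanity check confirms this: $H=D_0/P=1-V_0/P<1$, so $T\,H<T$, and a lower bound of $T^2/4$ is impossible. The correct endpoint bound is $T\,H\ge\frac{1}{T(x+1/T)^2}$. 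It is of order $T$ only when $x\lesssim1/T$. At $x\approx T^{-1/2}$ it gives about $1$, far below $\sqrt{2\ln T}$. So the range $1/T\ll x<T^{-1/2}$ is not covered by your argument. There the true size $T\,H\approx\frac{1}{2x}$ comes from the whole block of terms near $\rho=1$, not from the single last term.

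\textbf{How to repair it.} Your chain bound never needed $x\ge T^{-1/2}$, only $Tx$ bounded below. For $x\ge4/T$:
\begin{itemize}
\item the prefactor $(1+\frac{1}{Tx})^{-2}$ is at least $\frac{16}{25}$;
\item $\frac{1}{Tx^2}\le\frac{1}{4x}$;
\item you may drop the nonnegative terms $x\ln T$, $1$ and $x\ln\frac1x$.
\end{itemize}
This gives $T\,H\ge\frac{4}{25x}-O(x)\ge\frac{4}{25}\sqrt T-o(1)$ on $[4/T,T^{-1/2}]$. For $x\le4/T$, the corrected endpoint bound gives $T\,H\ge\frac{1}{T(5/T)^2}=\frac{T}{25}$.

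Alternatively, follow the paper. Keep all terms with $i>T/2$, use $\rho_i\le1$, $\psi_i\le\psi_{i-1}$ and the monotonicity of $\psi^{-3}$, and compare with $\int_{1/2}^{1-1/T}\psi^{-3}$. This yields \eqref{eq:tailbound}, and with $x+1/T\le2x$ it gives $T\,H\ge\frac{1}{8x}-2x\ge\frac{\sqrt T}{8}-2$ on $[T^{-1},T^{-1/2}]$.

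With either fix, every region beats $\sqrt{2\ln T}+1+\delta_T$ for large $T$, and the rest of your proof goes through as written.
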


\begin{proof}
By Lemma~\ref{lem:expansion},
$\min_{x\in\Wind}T\,H(x)\le\Lambda_T$ where
\begin{equation}
\Lambda_T:=\sqrt{2\ln T}+1+\delta_T .
\end{equation}
We show $T\,H(x)>\Lambda_T$ outside $\Wind$ for all large $T$.

\emph{Two lower bounds.} For $0<x\le\tfrac12$ and $T\ge8$, keep only
the terms with $i>T/2$. Since $\rho_i\le1$, $\psi_i\le\psi_{i-1}$ and
$\psi^{-3}$ is increasing there,
\begin{align}
T\,H(x)
&\ge x\sum_{i>T/2}\frac{1/T}{\psi_{i-1}^{3}}
 \;\ge\;x\int_{1/2}^{1-1/T}\frac{\dd\rho}{\psi(\rho)^3}\notag\\
&=\frac{x}{2b}
  \left[\frac{1}{\psi(1-\tfrac1T)^2}-\frac{1}{\psi(\tfrac12)^2}\right]
  \notag\\
&\ge\frac{x}{2}
  \left[\frac{1}{(x+\tfrac1T)^2}-4\right],
\label{eq:tailbound}
\end{align}
where we used $b\le1$,
$\psi(1-\tfrac1T)=x+\tfrac bT\le x+\tfrac1T$ and
$\psi(\tfrac12)\ge\tfrac12$. Separately, for $0<x\le1$ the $i=T$ term
alone gives
\begin{equation}
T\,H(x)\ge\frac{1}{T\bigl(x+\tfrac1T\bigr)^2}.
\label{eq:last-term-bound}
\end{equation}

\emph{The six regions.} Table~\ref{tab:regions} lists them; empty
regions may be ignored. Region~(i) uses
\eqref{eq:last-term-bound} with $x+\tfrac1T\le\tfrac2T$. Regions
(ii)--(iii) use \eqref{eq:tailbound} with, respectively,
$x+T^{-1}\le2x$ and $x+T^{-1}\le x(1+T^{-1/2})$. Region~(iv) uses
$\psi\le1$ on $[0,1]$, so $T\,H\ge x\sum_i1/i\ge x\ln T$. Region~(v)
uses $\psi_{i-1}\le\psi_i\le\tfrac32$ for $i\le T/(2x)$, giving
$T\,H\ge\tfrac{8}{27}x\ln\tfrac{T}{2x}$ and hence
$\tfrac{8}{81}\ln T$ for $T\ge64$. Region~(vi) uses the $i=1$ term,
$T\,H\ge x/\psi(1/T)\ge Tx/(T+x)$.

Comparing the last column of Table~\ref{tab:regions} with
$\Lambda_T=\sqrt{2\ln T}(1+o(1))$: in region~(iii) the leading
coefficient is $\tfrac32\sqrt2>\sqrt2$, in region~(iv) it is
$\tfrac{3}{\sqrt2}>\sqrt2$, and in the remaining regions $T$,
$\sqrt T$ or $\ln T$ grows faster than $\sqrt{\ln T}$. Hence no global
minimizer lies outside $\Wind$ once $T$ is large enough.
\end{proof}

\begin{center}
\small
\captionof{table}{\textbf{The six regions of Lemma~\ref{lem:global}.}
Each row gives a lower bound on $T\,H$, obtained from the tool named
in the proof, that already exceeds the benchmark $\Lambda_T=\sqrt{2\ln T}(1+o(1))$ attained inside the window.
Only regions (iii) and (iv) are decided by a constant, and in both the
constant is $>\sqrt2$.}
\label{tab:regions}
\begin{tabular}{@{}cll@{}}
\toprule
& region & lower bound on $T\,H$ \\
\midrule
(i)   & $x\le T^{-1}$              & $T/4$ \\
(ii)  & $T^{-1}\le x\le T^{-1/2}$  & $\sqrt T/8-2$ \\
(iii) & $T^{-1/2}\le x\le\chi_T/3$ & $\tfrac32\sqrt{2\ln T}(1-2T^{-1/2})-1$ \\
(iv)  & $3\chi_T\le x\le1$         & $\tfrac{3}{\sqrt2}\sqrt{\ln T}$ \\
(v)   & $1\le x\le\sqrt T$         & $\tfrac{8}{81}\ln T$ \\
(vi)  & $x\ge\sqrt T$              & $\sqrt T/2$ \\
\bottomrule
\end{tabular}
\end{center}

\begin{proof}[Proof of Proposition~\ref{prop:constant}]
Theorem~\ref{thm:proportional} guarantees a global minimizer $x^{*}$,
and for $T\ge T_0$ Lemma~\ref{lem:global} places it in $\Wind$. On
$\Wind$ we have $h''\ge(3\chi_T)^{-3}$, hence the quadratic lower bound
\begin{equation}
h(x)\ge h(\chi_T)+\frac{(x-\chi_T)^2}{54\,\chi_T^{3}} .
\end{equation}
Lemma~\ref{lem:expansion} gives both
$T\,H(x^{*})\le T\,H(\chi_T)\le h(\chi_T)+1+\delta_T$ and
$T\,H(x^{*})\ge h(x^{*})+1-\delta_T$. Subtracting,
\begin{equation}
\frac{(x^{*}-\chi_T)^2}{54\,\chi_T^{3}}\le2\delta_T
\quad\Longrightarrow\quad
\Bigl|\frac{x^{*}}{\chi_T}-1\Bigr|
\le\sqrt{108\,\delta_T\,\chi_T}\longrightarrow0 .
\end{equation}
Therefore $x^{*}(T)=(2\ln T)^{-1/2}(1+o(1))$, which is the claim.

For the optimal value,
$H(x^{*})=\tfrac1T\bigl(\sqrt{2\ln T}+1+O(\delta_T)\bigr)$, and
expanding $\Phi=\tfrac12(u-1-\ln u)$ at $u=1-H$ as in
\eqref{eq:KLexpand} gives
\begin{equation}
\Phi(x^{*})=\frac{H(x^{*})^2}{4}\bigl(1+O(H(x^{*}))\bigr)
=\frac{\ln T}{2T^{2}}
 \Bigl(1+O\bigl(\tfrac{1}{\sqrt{\ln T}}\bigr)\Bigr).
\end{equation}
\end{proof}

\subsubsection{Numerical certificates}
\label{app:certificates}

\begin{remark}[Numerical certificate]
\label{rem:certificate}
An independent certificate, using exact derivatives and no finite
differences, supports the finite-$T$ expansion in three ways.
\emph{(i)} On $\Wind$, $\sup|T\,H(x)-h(x)|$ stabilizes near $1.146$ for
$T=10^3$--$10^5$, consistent with the limiting constant $1$ of
Lemma~\ref{lem:expansion} plus the slowly decaying terms
$x\ln\tfrac1x-\tfrac{3x}{2}$. \emph{(ii)} The derivative certificate
$\sup|T\,H'(x)-h'(x)|/\ln T$ decreases as
$0.648$, $0.246$, $0.170$ and $0.126$ over $T=10^2$--$10^5$.
\emph{(iii)} The measured argmin tracks $\chi_T$: $0.211$ against
$0.208$ at $T=10^5$. Table~\ref{tab:xstar} gives the full sweep and
Figure~\ref{fig:app-xstar} plots it.
\end{remark}

\begin{remark}
\label{rem:refuted}
Our initial asymptotic ansatz was $x^{*}\asymp(2/(T\ln T))^{1/3}$,
which balanced the logarithmic bulk against the \emph{discrete}
endpoint term. Exact numerics refuted it decisively: the ratio drifted
from $1.1$ to $17.6$ over $T=5$--$10^5$. Re-examining the continuum
integral revealed the $\tfrac{1}{2x}$ term that
Lemma~\ref{lem:integral} makes rigorous. This episode is the reason for
the protocol behind every numerically certified claim in this paper:
derive the prediction first, then check it against the exact discrete
objective.
\end{remark}

\begin{table}[!htb]
\small\centering
\caption{\textbf{The constant $x^{*}(T)$ against
Proposition~\ref{prop:constant}.} Exact minimization of $\Phi$ on the
uniform grid via \eqref{eq:deficit-closed}. The ratio approaches $1$
slowly, at the proved rate $O(\delta_T^{1/2}\chi_T^{1/2})$; the
$T\le50$ rows are the values quoted in
App~\ref{app:gaussian-details}.}
\label{tab:xstar}
\begin{tabular}{@{}rccc@{}}
\toprule
$T$ & $x^{*}(T)$ & $\chi_T=(2\ln T)^{-1/2}$ & ratio \\
\midrule
$5$      & $0.721$ & $0.557$ & $1.294$ \\
$10$     & $0.577$ & $0.466$ & $1.239$ \\
$50$     & $0.404$ & $0.357$ & $1.130$ \\
$200$    & $0.332$ & $0.307$ & $1.081$ \\
$217$    & $0.329$ & $0.305$ & $1.079$ \\
$10^3$   & $0.283$ & $0.269$ & $1.050$ \\
$10^4$   & $0.240$ & $0.233$ & $1.028$ \\
$10^5$   & $0.212$ & $0.208$ & $1.017$ \\
\bottomrule
\end{tabular}
\end{table}

\begin{figure}[!htb]
\centering
\includegraphics[width=\columnwidth]{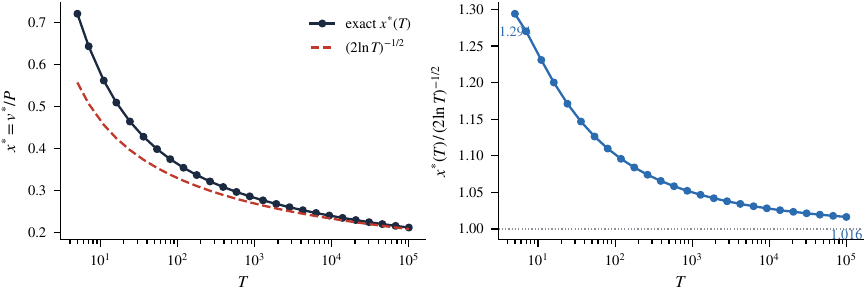}
\caption{\textbf{$x^{*}(T)$ follows $(2\ln T)^{-1/2}$.} \emph{Left:}
exact optimizer against the law of
Proposition~\ref{prop:constant}. \emph{Right:} their ratio, which
decreases from $1.29$ at $T=5$ to $1.017$ at $T=10^5$. The approach is
slow because the proved error is $O(\sqrt{\delta_T\chi_T})$, not
$O(1/T)$.}
\label{fig:app-xstar}
\end{figure}

\subsubsection{Uniqueness (Proposition~\ref{prop:uniqueness})}
\label{app:uniqueness}

\paragraph{Reduction.}
Since $\Phi(x)=\tfrac12(u-1-\ln u)$ is strictly decreasing in
$u\in(0,1)$, minimizing $\Phi$ is equivalent to minimizing
$H(x)=D_0/P$. Put $\omega_i:=(1-\rho_i)/\rho_i$. Then the color path in
the solver coordinate is
\begin{equation}
z_i(x)=\frac{x}{x+\omega_i},
\qquad
H(x)=F\bigl(z(x)\bigr),
\label{eq:colorpath}
\end{equation}
with $F$ as in \eqref{eq:z-deficit}. Uniqueness of $x^{*}$ is thus a
question about how the one-parameter curve $x\mapsto z(x)$ meets the
convex function $F$.

\paragraph{$T=2$: proved.}
Here $H(x)=z_1(x)+\bigl(1-z_1(x)\bigr)^2$. The map $z_1$ is strictly
increasing onto $(0,1)$ and $J(z):=z+(1-z)^2$ is strictly convex, so
$H$ is strictly unimodal and $x^{*}$ is unique.

\paragraph{Uniform grids: proved for $2\le T\le120$.}
For each integer $2\le T\le120$ we derives an explicit
polynomial $P_T\in\mathbb{Z}[x]$ whose positive roots are exactly the
critical points of $H$. The sequence of nonzero coefficients of $P_T$
has exactly one sign change, so by Descartes' rule (S5) $P_T$ has at
most one positive root \cite[Sec.~2.2.1]{basu2006algorithms}. Combined
with the interior-existence argument of
Theorem~\ref{thm:proportional}, Step~3 (the barriers $\Phi\to\infty$ at
both ends), this proves that $H$ has a unique critical point, hence a
unique minimum, for every $2\le T\le120$.

Beyond that certified range, a tolerance-robust numerical scan finds
exactly one local minimum for every $T=2,\dots,200$ and for
$T\in\{500,1000,5000\}$. A general analytic proof remains open. It reduces it either to the conjecture that the relevant
coefficient sequence has one sign change for every $T$, or to verifying
$\Var_p(z)<1/12$ at each critical point, where $1/12$ is the limiting
value in the continuum.

\paragraph{General grids: false, by explicit construction.}
Non-uniqueness already occurs at $T=3$ on a two-cluster grid. Take
$\rho=\bigl(\tfrac{1}{1+C},\tfrac12,1\bigr)$, equivalently
$\omega=(C,1,0)$. Then $z_1=\tfrac{x}{x+C}$, $z_2=\tfrac{x}{x+1}$,
$z_3=1$ and
\begin{equation}
H(x)=z_1+\frac{(z_2-z_1)^2}{z_2}+(1-z_2)^2 .
\label{eq:H3}
\end{equation}
Table~\ref{tab:fivepoints} evaluates \eqref{eq:H3} at five points and
shows that the value $\tfrac34$ is crossed four times, for every
$C\ge100$; By continuity $H$ --- and therefore $\Phi$ --- has at least two local
minima, one in $(\tfrac1{10},\sqrt C)$ and one in $(\sqrt C,10C)$.

For fixed $x$,
$H(x)\to J\bigl(\tfrac{x}{x+1}\bigr)$ with $J$ the strictly unimodal
$T{=}2$ profile above; for $x=Cw$ with $w$ fixed,
$H\to J\bigl(\tfrac{w}{w+1}\bigr)$. Each cluster therefore creates one
$T{=}2$ valley, with minima approaching $\tfrac34$ near $x\approx1$ and
$x\approx C$, and the valleys are separated by the cluster-scale ratio.

\begin{center}
\small
\captionof{table}{\textbf{Five evaluations that force two valleys}
(Eq.~\eqref{eq:H3}, any $C\ge100$; the table reads $C=100$). With
$u:=\tfrac{1}{1+C}$ and $\bar u:=\tfrac{1}{1+\sqrt C}\le\tfrac1{11}$.
Each outer bound keeps a single term of \eqref{eq:H3}; the two
interior identities are exact substitutions. All five were also checked
in exact rational arithmetic.}
\label{tab:fivepoints}
\begin{tabular}{@{}lll@{}}
\toprule
point & value or bound & vs.\ $\tfrac34$ \\
\midrule
$x=\tfrac1{10}$ & $H\ge(1-z_2)^2=\tfrac{100}{121}$        & $>\tfrac34$ \\
$x=1$           & $H=\tfrac34-u+2u^2$                     & $<\tfrac34$ \\
$x=\sqrt C$     & $H\ge z_1+(z_2-z_1)^2=1-3\bar u+4\bar u^2$ & $>\tfrac34$ \\
$x=C$           & $H=\dfrac{3-6u+8u^2-4u^3}{4(1-u)}$      & $<\tfrac34$ \\
$x=10C$         & $H\ge z_1=\tfrac{10}{11}$               & $>\tfrac34$ \\
\bottomrule
\end{tabular}
\end{center}

The interior identities follow by direct substitution, using
$z_1(1)=u$, $z_2(1)=\tfrac12$, $z_1(C)=\tfrac12$ and $z_2(C)=1-u$. For
the middle lower bound we drop $(1-z_2)^2\ge0$ and use $z_2\le1$;
moreover $\bar u\le\tfrac1{11}$ lies below $\tfrac{3-\sqrt5}{8}$, the
smaller root of $4\bar u^2-3\bar u+\tfrac14$.

\paragraph{Severity.}
Clustering can push the count higher. A two-cluster grid at $T{=}48$
with nodes concentrated near $10^{-7}$ and near $1$ produces seven
local minima of $H$, each confirmed in 50-digit arithmetic. The
consequence for the theory is the conditional statement in
Theorem~\ref{thm:proportional}: every \emph{local} optimizer is
mode-wise proportional to \emph{some} local minimizer of $\Phi$, and
the common form $v_k^{*}=x^{*}P_k$ holds for global optimizers exactly
when $\argmin\Phi$ is a singleton.

On a two-cluster grid of
this type, a two-mode instance with $P=(1,4)$ has a genuine local
optimizer whose modes occupy different valleys, with $v_2/v_1$ of order
$10^{6}$ instead of the proportional ratio $4$; perturbations in every
direction confirm local optimality.

\subsubsection{Budget bending (Proposition~\ref{prop:budget})}
\label{app:budget}

The relevant asymptotic regime holds the budget $V$ fixed while
$T\to\infty$; this is also the regime the experiments test. The proof
uses Lemmas~\ref{lem:gridshift}--\ref{lem:integral} plus one derivative
lemma of the same kind. Throughout, $x_k:=v_k/P_k$ and
$\bar x_k:=VP_k/\sum_jP_j^2$; the hypothesis implies
$\bar x_{\max}\le\tfrac34$.

\begin{lemma}[Expansion and derivative on extended windows]
\label{lem:H-deriv}
Fix $X\ge1$. Uniformly on $x\in[T^{-1/4},X]$, on the uniform grid,
\begin{align}
T\,H(x)  &= x\ln T+\tfrac{1}{2x}+O(1),\label{eq:Hexp}\\
T\,H'(x) &= \ln T-\tfrac{1}{2x^2}+O\bigl(\tfrac1x\bigr),\label{eq:Hpexp}
\end{align}
with constants depending only on $X$.
\end{lemma}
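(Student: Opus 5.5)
The plan is to rerun the three-step comparison behind Lemma~\ref{lem:expansion} (grid shift, Riemann sum, closed-form integral) on the larger range $x\in[T^{-1/4},X]$, and then repeat it for the $x$-derivative. Since the target error is only $O(1)$ (resp.\ $O(1/x)$), crude versions of each step suffice. One new point is that $x>1$ is now allowed, so $b=1-x\in[1-X,\,1-T^{-1/4}]$ may be negative. On $[0,1]$ we then have $\psi$ monotone with $\min(x,1)\le\psi\le\max(x,1)\le X$, and all constants are allowed to depend on $X$.

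\emph{Expansion of $T\,H$.} I would first extend Lemma~\ref{lem:gridshift}. Since $\psi_{i-1}=\psi_i+b/T$ and $\psi_i\ge\min(x,1)$, we get $\psi_{i-1}/\psi_i=1+O\bigl(1/(Tx)\bigr)$ for either sign of $b$. Hence $x\Sm=x\Sm^{\circ}\bigl(1+O(1/(Tx))\bigr)$. Next I extend Lemma~\ref{lem:riemann}. For $b\le1/4$ the function $f=1/(\rho\psi^3)$ is monotone decreasing, and otherwise it is unimodal as before. In both cases (S4) gives $\TV(f)\le f(1/T)+f(1)=O(T+x^{-3})$, so the Riemann error is $O(1+1/(Tx^3))$. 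After multiplying by $x$, this is $O(1)$ on the range, because $1/(Tx^2)\le T^{-1/2}$.

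The partial-fraction identity and antiderivative $F$ of Lemma~\ref{lem:integral} are valid for every $b\ne0$, and the $b=0$ case ($x=1$) follows by continuity. This gives
\begin{equation}
x\int_{1/T}^1 f = x\ln T+\frac{1}{2x}+1+x\ln\frac1x-\frac{3x}{2}+O(x/T),
\end{equation}
where $x\ln\tfrac1x-\tfrac{3x}{2}=O(1)$ for $x\le X$. The grid-shift error is $O\bigl((\ln T+x^{-1})/T\bigr)=O(1)$. Chaining these three estimates gives \eqref{eq:Hexp}.

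\emph{Derivative.} Differentiating \eqref{eq:Sdef} termwise, using $\partial_x\psi_i=\rho_i$, gives
\begin{equation}
T\,H'=\Sm-\frac{x}{T}\sum_i\Bigl[\frac{1}{\psi_i^2\psi_{i-1}^2}+\frac{2\rho_{i-1}}{\rho_i\,\psi_i\,\psi_{i-1}^3}\Bigr].
\end{equation}
The first piece $\Sm$ is handled by the argument above: it equals $\ln T+\tfrac{1}{2x^2}+O(1/x)$. For the bracket I would compare it to its continuum analogue $3\int_0^1\psi^{-4}\dd\rho$ in two stages. First, replace $\psi_{i-1}$ by $\psi_i$ and $\rho_{i-1}/\rho_i$ by $1$. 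The latter replacement differs only in the $i=1$ term and in $O(1/i)$ corrections weighted by $\psi^{-4}\le\max(1,x^{-4})$, contributing $O(x^{-4}\ln T/T)$ before the factor $x$. Second, apply the (S4) Riemann comparison to the monotone function $\psi^{-4}$, whose total variation is $O(1+x^{-4})$.

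The continuum integral is explicit:
\begin{equation}
\int_0^1\psi^{-4}\dd\rho=\frac{1+x+x^2}{3x^3},
\end{equation}
so $3x\int_0^1\psi^{-4}\dd\rho=x^{-2}+x^{-1}+1$. Combining, the main terms are $\ln T+\tfrac{1}{2x^2}-\tfrac{1}{x^2}=\ln T-\tfrac{1}{2x^2}$. Every leftover is $O(1/x)$: the $\ln(1/x)$ and $1/x$ terms, plus discretization errors of size $O\bigl(x\cdot x^{-4}\ln T/T\bigr)=O\bigl(T^{-1/4}\ln T\bigr)$.

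\emph{Main obstacle.} The hardest step is the derivative bookkeeping: keeping every discretization error for the $H'$ sums at $O(1/x)$ uniformly down to $x=T^{-1/4}$. The summands there are as large as $x^{-4}$, so any error bound that loses a factor $x^{-1}$ relative to the main term $x^{-2}$ must be compensated by the $1/T$ gain. The lower endpoint $T^{-1/4}$ is chosen precisely so that $x^{-3}/T\to0$ and this compensation works. I would organize this step by bounding each replacement by $C_X\,x\cdot\max(1,x^{-4})\ln T/T$, and then checking this quantity against $1/x$.
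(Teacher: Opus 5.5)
Your proposal is correct and follows essentially the same route as the paper's proof. You extend the grid-shift, Riemann (with $f$ monotone when $b\le\tfrac14$) and closed-form-integral lemmas to $(0,X]$ and chain them for the value; you then differentiate $\Sm$ termwise and compare the two sums with $3\int_0^1\psi^{-4}\dd\rho$, after absorbing the grid shift and the factor $\rho_{i-1}/\rho_i$. One slip: the grid-shift error in $T\,H$ is $O\bigl((\ln T+x^{-2})/T\bigr)$ rather than $O\bigl((\ln T+x^{-1})/T\bigr)$. This is harmless, since $x^{-2}/T\le T^{-1/2}$ on the window.
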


\begin{proof}
\textbf{Extension of the three lemmas.} The restriction $x\le\tfrac34$
entered only through the decreasing-then-increasing shape of $f$ in
Lemma~\ref{lem:riemann}. For $b<\tfrac14$ (including $b\le0$) one
checks $(\ln f)'=-1/\rho+3b/\psi<0$ on all of $(0,1]$, so $f$ is
monotone and the total-variation bound only improves. The antiderivative
of Lemma~\ref{lem:integral} is valid for every $b\ne0$, and $x=1$ is
the trivial case $\psi\equiv1$. So
Lemmas~\ref{lem:gridshift}--\ref{lem:integral} extend from
$(0,\tfrac34]$ to $(0,X]$.

\textbf{The value.} Chaining as in \eqref{eq:chain} and noting
$x\ln\tfrac1x=O(1)$ and $\tfrac{1}{Tx^{3}}\le T^{-1/4}$ on the window
gives \eqref{eq:Hexp}.

\textbf{The derivative.} From $T\,H=x\Sm(x)$ we get
$T\,H'=\Sm+x\Sm'$, and \eqref{eq:Hexp} gives
$\Sm=\ln T+\tfrac{1}{2x^2}+O(\tfrac1x)$. Differentiating
\eqref{eq:Sdef} termwise with $\partial_x\psi(\rho)=\rho$,
\begin{equation}
-\Sm'(x)=\frac1T\sum_{i=1}^{T}
\left[\frac{1}{\psi_i^{2}\psi_{i-1}^{2}}
     +\frac{2\,\rho_{i-1}/\rho_i}{\psi_i\,\psi_{i-1}^{3}}\right],
\end{equation}
which is a pair of Riemann-type sums of the integrands $\psi^{-4}$,
monotone in $\rho$, up to the grid shift of
Lemma~\ref{lem:gridshift} and the factor
$\rho_{i-1}/\rho_i=1-\tfrac1i$, whose deviation contributes at most
$\tfrac{2\ln T}{Tx^{4}}=O(\tfrac1x)$ on the window. Since
\begin{equation}
\int_0^1\psi^{-4}\dd\rho=\frac{x^{-3}-1}{3b}
=\frac{x^{-3}}{3}+O(x^{-2})
\end{equation}
and the total-variation Riemann error is $O(x^{-4}/T)=O(\tfrac1x)$
there, we get $-\Sm'=x^{-3}+O(x^{-2})$ and hence
$T\,H'=\ln T+\tfrac{1}{2x^{2}}-\tfrac{1}{x^{2}}+O(\tfrac1x)$, which is
\eqref{eq:Hpexp}.
\end{proof}

\begin{proof}[Proof of Proposition~\ref{prop:budget}]
Because $\Phi\to\infty$ at $0$ and $\Phi$ is continuous, a global
optimizer exists in the interior and satisfies the Lagrange condition
$\tfrac{1}{P_k}\Phi'(x_k)=\lambda$ for every $k$, where
$\Phi'(x)=\tfrac12\tfrac{H}{1-H}H'(x)$. In particular
$\operatorname{sign}\Phi'=\operatorname{sign}H'$ and
$\Phi=\tfrac{H^2}{4}(1+O(H))$.

\textbf{Step 1 (value benchmark).} The feasible point
$\bar v_k=\bar x_kP_k$ has objective value
\begin{equation}
\Phi^{\mathrm{bm}}_T:=\sum_k\Phi(\bar x_k)
=\frac{\ln^2T}{4T^2}\Bigl(\sum_k\bar x_k^2\Bigr)\bigl(1+o(1)\bigr)
\end{equation}
by Lemma~\ref{lem:H-deriv}. Any optimizer does at least as well, so
every coordinate satisfies
$\Phi(x_k)\le\Phi^{\mathrm{bm}}_T=O(\ln^2T/T^2)$.

\textbf{Step 2 (localization).} First exclude $x_k\le T^{-1/4}$: there
\eqref{eq:tailbound} gives
$T\,H\ge\tfrac{1}{2x}(1-2T^{-1/2})-2x\ge\tfrac{T^{1/4}}{3}$, so
$\Phi(x_k)\ge\tfrac{H^2}{4}\gg\Phi^{\mathrm{bm}}_T$. Next exclude
$x_k\ge x_{\mathrm{hi}}$ for a large fixed constant
$x_{\mathrm{hi}}$ depending only on $\sum_j\bar x_j^2$, using the
regional bounds of Table~\ref{tab:regions}: $T\,H\ge x\ln T$ on
$[T^{-1/4},1]$, $T\,H\ge\tfrac{8}{27}x\ln\tfrac{T}{2x}$ on
$[1,\sqrt T]$, and $T\,H\ge\tfrac{\sqrt T}{2}$ beyond.

Now $\lambda>0$. If $\lambda\le0$ then $H'(x_k)\le0$ for every $k$, so
on $[T^{-1/4},x_{\mathrm{hi}}]$ Lemma~\ref{lem:H-deriv} would force
$x_k\le(2\ln T)^{-1/2}(1+o(1))$ and hence
$\sum_kv_k\le(2\ln T)^{-1/2}(1+o(1))\sum_kP_k<V$ for large $T$,
contradicting the budget. Therefore $\lambda>0$, every coordinate has
$H'(x_k)>0$, and $x_k\ge(2\ln T)^{-1/2}(1-o(1))$.

Finally all coordinates are bounded below by a \emph{fixed} constant.
The budget forces some
$x_{k_0}\ge V/\sum_jP_j=:x_{\mathrm{lo}}>0$, and
Lemma~\ref{lem:H-deriv} then gives
$\lambda=\tfrac{1}{P_{k_0}}\Phi'(x_{k_0})\ge c\,\tfrac{\ln^2T}{T^2}$.
Suppose some $x_k\le\varepsilon$ with $\varepsilon$ small and fixed.
Then $\Phi'(x_k)=\lambda P_k\ge c'\tfrac{\ln^2T}{T^2}$ would require
\begin{equation}
\Bigl(x_k\ln T+\tfrac{1}{2x_k}\Bigr)
\Bigl(\ln T-\tfrac{1}{2x_k^2}\Bigr)\ge c''\ln^2T ,
\end{equation}
and for $x_k\le\varepsilon$ the first factor can reach $c''\ln T$ only
through $\tfrac{1}{2x_k}\gtrsim\ln T$, which makes the second factor
negative --- a contradiction. Hence every
$x_k\in[c',x_{\mathrm{hi}}]$, a fixed compact interval.

\textbf{Step 3 (stationarity gives the law).} On that compact,
Lemma~\ref{lem:H-deriv} gives uniformly
\begin{align}
\Phi'(x)
&=\tfrac12H(x)H'(x)\bigl(1+O(H)\bigr)\notag\\
&=\frac{x\ln^2T}{2T^2}
  \Bigl(1+O\bigl(\tfrac{1}{\ln T}\bigr)\Bigr),
\end{align}
so $\tfrac{1}{P_k}\Phi'(x_k)=\lambda$ becomes
$x_k=\kappa P_k\bigl(1+O(1/\ln T)\bigr)$ with the common constant
$\kappa=2\lambda T^2/\ln^2T$. The budget equation
$V=\sum_kx_kP_k=\kappa\sum_kP_k^2(1+O(1/\ln T))$ determines $\kappa$,
and therefore
\begin{equation}
v_k^{*}=x_kP_k
=\frac{P_k^2}{\sum_jP_j^2}\,V\,\bigl(1+O(1/\ln T)\bigr),
\label{eq:budget-law-app}
\end{equation}
which is \eqref{eq:budget-law}. The hypothesis
$VP_{\max}/\sum_jP_j^2\le\tfrac34$ ensures the limiting allocation stays
in the window where the constants are uniform.
\end{proof}

\begin{table}[!htb]
\small\centering
\caption{\textbf{The budgeted optimum bends toward the $P^2$ law.} Two
modes, $P=(1,4)$, budget $V=5$, uniform grid; exact minimization of
$\sum_k\Phi_k$. The limit is
$P_1^2V/\sum_jP_j^2=\tfrac{5}{17}=0.294$, approached at the proved rate
$O(1/\ln T)$. Note that equal-budget protocols and free-noise
optimization answer different questions and have different exponents;
both should be reported.}
\label{tab:budget-bend}
\begin{tabular}{@{}rcccccc@{}}
\toprule
$T$ & $5$ & $10$ & $50$ & $200$ & $10^3$ & $2{\times}10^4$ \\
\midrule
$v_1^{*}$ & $0.773$ & $0.634$ & $0.460$ & $0.391$ & $0.347$ & $0.306$ \\
\bottomrule
\end{tabular}
\end{table}
\subsubsection{Proof of Theorem~\ref{thm:zidentity}}
\label{app:zidentity}

\begin{proof}
\textbf{(The identity.)} Let $z(\rho)=v\rho/\varphi(\rho)$. Substituting
into \eqref{eq:deficit-closed} through $\varphi=v\rho/z$ and
$1-z=(1-\rho)P/\varphi$ collapses the expression to
\begin{equation}
D_0=P\,F(z),
\qquad
F(z):=\sum_{i=1}^{T}\frac{(\Delta z_i)^2}{z_i},
\label{eq:z-deficit}
\end{equation}
with $z_0=0$ and $z_T=1$; the two-line simplification has symbolic
residual $0$. Equation~\eqref{eq:z-deficit} restates
\eqref{eq:z-identity}. Two identities used repeatedly below are
\begin{equation}
M(\rho)=P\,z(\rho),
\qquad
\frac{\Delta z_i}{z_i}=\frac{P\,\Delta\rho_i}{\rho_i\,\varphi(\rho_{i-1})},
\label{eq:Mz}
\end{equation}
both immediate from $K(\rho)=P/\varphi(\rho)$.

\textbf{(a) Color-blindness of the achievable set.} For every $v>0$ the
map $\rho\mapsto z$ is a continuous strictly increasing bijection of
$[0,1]$ (left panel of Figure~\ref{fig:app-z}). The achievable
$z$-grids --- and hence the schedule-optimized deficit --- therefore
coincide for every color at every finite $T$.

\textbf{(b) Uniqueness of the optimal grid.} $F$ is a sum of
quadratic-over-linear terms and so is jointly convex. If two minimizers
existed, $F$ would be affine on the segment joining them, which forces
every ratio $z_{i-1}/z_i$ to be constant along that segment. Together
with the pinned endpoints those ratios determine the grid, so the
minimizer is unique.

\textbf{(c) The optimal grid in closed form.} Put $a_i:=z_{i-1}/z_i$.
Stationarity of $F$ in the interior coordinates gives
\begin{equation}
2a_{i+1}=1+a_i^{2},
\qquad a_1=\frac{z_0}{z_1}=0,
\label{eq:arec}
\end{equation}
an explicit forward recurrence, from which
$z_i=\prod_{j>i}a_j$. Table~\ref{tab:zrec} lists the first values and
the resulting optimal costs. Brute-force minimization agrees to
$10^{-14}$ for $T=2,3,4$. At $T=2$ the exact optimum is
$z_1=\tfrac12$ with $F^{*}=\tfrac34$, whereas the continuum schedule
$z=s^2$ gives $\tfrac{13}{16}$: the continuum schedule is \emph{not}
exactly optimal at finite $T$.

\textbf{(d) Asymptotics of the optimum.} Set
$\varepsilon_i:=1-a_i$. Two exact rewrites start the argument. First
$\Delta z_i=z_i(1-a_i)$, so the optimal value is
\begin{equation}
F^{*}=\sum_{i=1}^{T}z_i\,\varepsilon_i^{2}.
\label{eq:Fstar}
\end{equation}
Second, \eqref{eq:arec} becomes \emph{exactly}
\begin{equation}
\varepsilon_{i+1}=\varepsilon_i\Bigl(1-\frac{\varepsilon_i}{2}\Bigr),
\qquad\varepsilon_1=1 ,
\label{eq:erec}
\end{equation}
a purely quadratic recurrence with no cubic term. It shows
$\varepsilon_i\in(0,1]$ and that the sequence decreases, so
\begin{align}
\frac{1}{\varepsilon_{i+1}}
&=\frac{1}{\varepsilon_i}
 +\frac{1}{2}\cdot\frac{1}{1-\varepsilon_i/2}\notag\\
&=\frac{1}{\varepsilon_i}+\frac12+\frac{\varepsilon_i}{4}
 +O(\varepsilon_i^2).
\label{eq:einv}
\end{align}

\emph{Sharp recurrence asymptotics.} Dropping the positive correction in
\eqref{eq:einv} gives $1/\varepsilon_i\ge(i+1)/2$, i.e.
$\varepsilon_i\le2/(i+1)$. Substituting that bound back yields
$\sum_{j<i}\varepsilon_j/4=\tfrac12\ln i+O(1)$ and
$\sum_{j<i}\varepsilon_j^2=O(1)$, so a two-sided induction with explicit
constants gives
\begin{align}
\frac{1}{\varepsilon_i}&=\frac{i}{2}+\frac12\ln i+O(1),\\
\varepsilon_i&=\frac{2}{i}
  \Bigl(1+O\bigl(\tfrac{\ln(i+1)}{i+1}\bigr)\Bigr).
\label{eq:easym}
\end{align}

\emph{Modal positions.} Since $z_i=\prod_{j>i}a_j$,
\begin{align}
\ln z_i
&=\sum_{j>i}\ln(1-\varepsilon_j)\notag\\
&=-\sum_{j>i}\varepsilon_j+O\Bigl(\sum_{j>i}\varepsilon_j^2\Bigr)\notag\\
&=-2\ln\frac{T}{i}+O\Bigl(\frac{\ln(i+1)}{i+1}\Bigr),
\end{align}
using $\sum_{j>i}2/j=2\ln(T/i)+O(1/i)$ and the fact that the two
correction sums $\sum_{j>i}(\ln j)/j^{2}$ and $\sum_{j>i}1/j^{2}$ are
both $O(\ln(i+1)/(i+1))$. Hence
\begin{equation}
z_i^{*}=(i/T)^2\,e^{\,O(\ln(i+1)/(i+1))} .
\label{eq:zasym}
\end{equation}
This is the rigorous convergence statement, with its rate: the exact
optimizer approaches the continuum schedule $z=s^2$. Equivalently the
limiting schedule is
\begin{equation}
\rho^{*}(s)=\frac{Ps^2}{Ps^2+v(1-s^2)},
\end{equation}
under which $M(\rho^{*}(s))=Ps^2$ by \eqref{eq:Mz}: the conditional
MMSE is quadratic in solver time.

\emph{The floor.} Combining \eqref{eq:easym} and \eqref{eq:zasym} term
by term,
\begin{align}
z_i\varepsilon_i^{2}
&=\frac{4}{T^{2}}
  \Bigl(1+O\bigl(\tfrac{\ln(i+2)}{i+1}\bigr)\Bigr),\\
F^{*}
&=\frac{4}{T}
  +O\Bigl(\frac{1}{T^{2}}\sum_{i=1}^{T}\frac{\ln(i+2)}{i+1}\Bigr)
 \notag\\
&=\frac{4}{T}\Bigl(1+O\bigl(\tfrac{\log^{2}T}{T}\bigr)\Bigr),
\end{align}
so $D_0^{*}=\tfrac{4P}{T}\bigl(1+O(\log^{2}T/T)\bigr)$. The $\log^{2}$
factor comes from $\sum_i(\ln i)/i$; improving the rate would require
cancellation among the per-term corrections, and no such cancellation is
evident. The last column of Table~\ref{tab:zrec} supports the proved
law.

Finally, for the uniform $s$-grid under $z=s^2$ the summand of
\eqref{eq:z-deficit} simplifies symbolically to
$P(s_i^2-s_{i-1}^2)^2/s_i^2$, in which the color $v$ disappears
identically --- confirming (a) term by term.
\end{proof}

\begin{table}[!htb]
\small\centering
\caption{\textbf{The optimal $z$-schedule of
Theorem~\ref{thm:zidentity}(c).} \emph{Left:} the first ratios
$a_i=z_{i-1}/z_i$ from the forward recurrence \eqref{eq:arec}, as exact
rationals. \emph{Right:} the optimal cost, exactly for small $T$ and
against the floor $F^{*}=4/T$ of part~(d). The approach to $1$ is at the
proved rate $1+O(\log^2T/T)$.}
\label{tab:zrec}
\begin{tabular}{@{}cl@{\hskip 1.2em}rll@{}}
\toprule
$i$ & $a_i$ & $T$ & $F^{*}$ & $F^{*}T/4$ \\
\midrule
$1$ & $0$                 & $2$    & $3/4$                & $0.375$ \\
$2$ & $1/2$               & $3$    & $39/64$              & $0.457$ \\
$3$ & $5/8$               & $4$    & $16929/32768$        & $0.517$ \\
$4$ & $89/128$            & $10$   & $0.2778$             & $0.694$ \\
$5$ & $24305/32768$       & $200$  & $0.01931$            & $0.966$ \\
$6$ & $0.7751\ldots$      & $10^4$ & $3.996{\times}10^{-4}$ & $0.999$ \\
\bottomrule
\end{tabular}
\end{table}

\begin{figure*}[!htb]
\centering
\includegraphics[width=\textwidth]{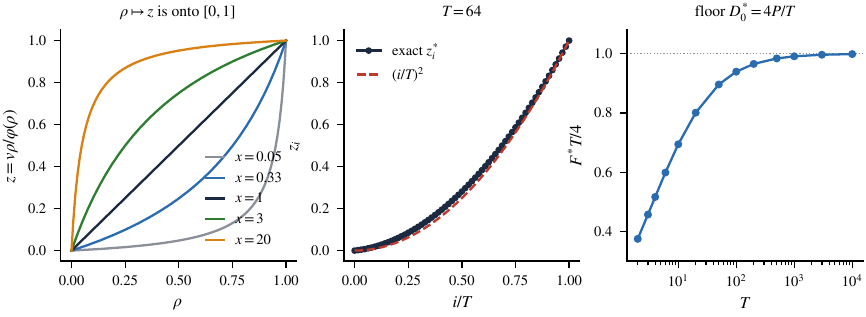}
\caption{\textbf{The solver coordinate $z$ absorbs the color.}
\emph{Left:} $\rho\mapsto z=v\rho/\varphi(\rho)$ for five values of
$x=v/P$. Each is a bijection of $[0,1]$, so every $z$-grid is reachable
at every color --- this is Theorem~\ref{thm:zidentity}a.
\emph{Center:} the exact optimizer $z_i^{*}$ from the recurrence
\eqref{eq:arec} against the continuum law $(i/T)^2$ at $T=64$,
illustrating \eqref{eq:zasym}. \emph{Right:} $F^{*}T/4\to1$, the floor
$D_0^{*}=4P/T$ of part~(d).}
\label{fig:app-z}
\end{figure*}

\subsubsection{The multi-objective schedule frontier}
\label{app:frontier}

The discretization-optimal schedule has gain-error susceptibility
\begin{equation}
\Xi:=\sum_i\dzf{i}\,(1-z_i)=2\ln T+O(1),
\label{eq:susc}
\end{equation}
which diverges logarithmically; numerically
$\Xi=2\ln T-3.05$ at $T=10^4$. The two objectives therefore conflict:
discretization favors many small steps near $z=0$, whereas robustness
favors a short log-path. Under model error, reference design becomes a
multi-objective $z$-schedule problem --- minimize $\sum_i(\Delta z_i)^2/z_i$
subject to a susceptibility budget. The constraint involves the ratios
$z_{i-1}/z_i$, which are \emph{not} jointly convex, so convexity of the
exact finite-$T$ frontier does not follow from convexity of $F$. In the
continuum, however, the frontier is available in closed form.

\begin{theorem}[Continuum susceptibility--discretization frontier]
\label{thm:frontier}
Consider increasing $C^1$ schedules $z:[0,1]\to(0,1]$ with
$z(0^{+})=z_1$ and $z(1)=1$.

\textbf{(i) Susceptibility is path-independent.}
\begin{equation}
\Xi^{\mathrm{cont}}
=\int_0^1\frac{\dot z}{z}(1-z)\dd s
=\int_{z_1}^{1}\frac{1-z}{z}\dd z
=\ln\frac{1}{z_1}-(1-z_1).
\end{equation}
This depends only on the launch point $z_1$. The budget
$\Xi^{\mathrm{cont}}\le\xi$ therefore imposes only $z_1\ge a(\xi)$ and
no further path constraint, where $a(\xi)\in(0,1)$ is the unique root of
$\ln\tfrac1a-(1-a)=\xi$.

\textbf{(ii) The square law, derived.} Substituting $w=\sqrt z$ turns
the discretization functional into a Dirichlet energy,
\begin{equation}
\int_0^1\frac{\dot z^2}{z}\dd s=4\int_0^1\dot w^2\dd s ,
\end{equation}
with endpoints fixed at $w(0)=\sqrt a$ and $w(1)=1$. Cauchy--Schwarz
gives the unique minimizer $w(s)=\sqrt a+(1-\sqrt a)s$, that is the
free square-law path launched at $a$:
\begin{equation}
z(s)=\bigl(\sqrt a+(1-\sqrt a)s\bigr)^2,
\qquad
E(\xi)=4\bigl(1-\sqrt{a(\xi)}\bigr)^2 .
\end{equation}

\textbf{(iii) Closed-form, $C^1$ frontier.} Differentiating the
constraint,
\begin{align}
\frac{\dd a}{\dd\xi}&=-\frac{a}{1-a},\\
\frac{\dd^2a}{\dd\xi^2}&=\frac{a}{(1-a)^{3}}>0,\\
\frac{\dd E}{\dd\xi}&=\frac{4\sqrt a}{1+\sqrt a}>0 .
\end{align}
Both derivatives are strictly monotone in $\xi$: $a$ decreases as $\xi$
increases, while $\sqrt a/(1+\sqrt a)$ increases with $a$. Each branch
therefore has curvature of one sign. The launch point $a(\xi)$ is
$C^1$, strictly decreasing and strictly convex; the smooth-path energy
$E(\xi)$ is $C^1$, strictly increasing and strictly concave, and
approaches the free value $4$ as $\xi\to\infty$.
\end{theorem}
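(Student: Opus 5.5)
The plan is to treat the three parts in order, since (ii) and (iii) reduce to one-variable calculus once (i) shows that only the launch point matters.

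\textbf{(i) Path independence.} Because $z$ is $C^1$ and strictly increasing, the substitution $u=z(s)$ gives $\dot z\dd s=\dd u$. This turns $\int_0^1(\dot z/z)(1-z)\dd s$ into $\int_{z_1}^1(1-u)/u\dd u=\ln(1/z_1)-(1-z_1)$. To define $a(\xi)$, set $g(a):=\ln\tfrac1a-(1-a)$. Then $g'(a)=-1/a+1<0$ on $(0,1)$, $g(0^+)=\infty$ and $g(1)=0$, so $g$ is a strictly decreasing bijection $(0,1)\to(0,\infty)$. Hence for each $\xi>0$ there is a unique root $a(\xi)$, and the budget $\Xi^{\mathrm{cont}}=g(z_1)\le\xi$ is equivalent to $z_1\ge a(\xi)$. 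Nothing else about the path enters.

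\textbf{(ii) Square law.} With $w=\sqrt z$ we have $\dot z=2w\dot w$, so $\dot z^2/z=4\dot w^2$. By Cauchy--Schwarz, $\int_0^1\dot w^2\ge\bigl(\int_0^1\dot w\bigr)^2=(1-\sqrt{z_1})^2$, with equality iff $\dot w$ is constant; this gives the stated linear $w$ and its uniqueness. The energy $4(1-\sqrt{z_1})^2$ is decreasing in $z_1$, so under the constraint $z_1\ge a$ the optimal choice is the smallest launch point $z_1=a(\xi)$. This yields $E(\xi)=4(1-\sqrt{a(\xi)})^2$.

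One subtlety needs care. The problem as posed fixes $z(0^+)=z_1$, so the budget is binding at $a$. The competitor problem instead optimizes over $z_1$. There, a smaller $z_1$ violates the budget, so the minimizing $z_1$ is exactly $a$. I would state this explicitly rather than leave it implicit.

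\textbf{(iii) Derivatives.} Implicit differentiation of $g(a(\xi))=\xi$ gives $a'=1/g'(a)=-a/(1-a)$. Differentiating again, with $\tfrac{\dd}{\dd a}\bigl[-a/(1-a)\bigr]=-1/(1-a)^2$, gives $a''=-a'/(1-a)^2=a/(1-a)^3>0$. Since $a\in(0,1)$ and $g'$ is continuous and nonzero, $a$ is $C^1$ (indeed smooth), strictly decreasing and strictly convex.

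For the energy, $E'=-4(1-\sqrt a)\,a'/\sqrt a=4(1-\sqrt a)\sqrt a/(1-a)$. Using $1-a=(1-\sqrt a)(1+\sqrt a)$, this simplifies to $4\sqrt a/(1+\sqrt a)>0$. The map $a\mapsto\sqrt a/(1+\sqrt a)$ is increasing, and $a$ is decreasing in $\xi$, so $E'$ is decreasing; hence $E$ is strictly concave and strictly increasing. As $\xi\to\infty$, $a\to0$, so $E\to4$.

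\textbf{Main obstacle.} I expect the only real care point to be in (ii). The admissible class consists of $C^1$ increasing paths with $z(0^+)=z_1$, and the Cauchy--Schwarz minimizer must lie in that class. It does: it is affine in $w$, hence smooth, increasing and positive. Equality in Cauchy--Schwarz forces a constant derivative, which gives uniqueness. The rest is routine calculus.
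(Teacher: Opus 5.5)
Your proposal is correct and follows the paper's own argument step for step: the change of variables $u=z(s)$ gives path independence, the substitution $w=\sqrt z$ with Cauchy--Schwarz gives the square law, and implicit differentiation of $\ln\tfrac1a-(1-a)=\xi$ gives the frontier derivatives. Your explicit remark that $4(1-\sqrt{z_1})^2$ decreases in $z_1$, so the budget binds at $z_1=a(\xi)$, is a useful clarification; the paper leaves this implicit by fixing $w(0)=\sqrt a$ directly.
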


At finite $T$ the launch point $z_1$ is exactly the first step's $O(1)$
contribution to the deficit, because its term in \eqref{eq:z-deficit}
is $(z_1-0)^2/z_1=z_1$. Along the envelope family the total deficit is
therefore
\begin{equation}
a(\xi)+\tfrac1TE(\xi)\bigl(1+o(1)\bigr),
\end{equation}
whose leading-order frontier is convex, decreasing and available in
closed form; the concave $E$-branch contributes only the $O(1/T)$
correction.

The susceptibility constraint becomes inactive near $\xi\approx2\ln T$,
where the frontier flattens at the free optimum of
Theorem~\ref{thm:zidentity}d. The continuum identity in (i) is the main
conceptual statement and replaces the discrete bound
$\Xi\le\ln(1/z_1)$; the discrete bound remains useful as a finite-$T$
comparison, and the difference between them, $1-z_1$, is exactly the
first step's susceptibility contribution.


\subsection{Proofs for Section~\ref{sec:modelerror} (Model Error and Learning)}
\label{app:thmC}
\subsubsection{Completeness of the error model}
\label{app:error-model}

In the solvable model the exact drift is affine,
$\hat x_0=Wx_1+K_i(x-\bar c_ix_1)$. Any learned affine drift can differ
from it in exactly two ways: a relative gain error $\eta_i$, which
mis-scales the sensitivity to the state deviation, and an additive bias
$\beta_i$. Because the analysis conditions on $x_1$, the bias may depend
on $x_1$; consequently an error in the drift's $x_1$-coefficient --- for
instance using $\hat W\ne W$ --- is absorbed into $\beta_i$. The
two-channel decomposition is therefore exhaustive within the model.
Both channels preserve the affine--Gaussian structure, so the terminal
laws remain exact.

\subsubsection{Proof of Proposition~\ref{prop:gain-unbiased}}
\label{app:gain-proof}

\begin{proof}
The perturbed step gives
\begin{equation}
c'=\bigl[r+(1-r)K(1+\eta)\bigr]c
   +(1-r)\bigl[W-K(1+\eta)\bar c\bigr].
\end{equation}
Substituting the inductive value $c=\bar c_i$, the two terms
$K(1+\eta)\bar c$ cancel \emph{identically in $\eta$}, leaving
$c'=r\bar c_i+(1-r)W=\bar c_{i-1}$. Mis-scaling the deviation around
the correct center therefore cannot bias the mean; only the additive
bias can move it.
\end{proof}

\subsubsection{Proof of Theorem~\ref{thm:colorblind}}
\label{app:colorblind-proof}

\begin{proof}
Work in $z$-coordinates. Three direct computations give exact step
quantities that do not depend on color at matched $z$-grids. The
unperturbed contraction is a ratio of values of
$w(z):=z+x(1-z)$, consistent with Lemma~\ref{lem:telescope}, and the
key identity is the second half of \eqref{eq:Mz},
\begin{equation}
\frac{(1-r_i)K_i}{A_i}=\dzf{i},
\label{eq:keyid}
\end{equation}
which is exact and independent of $x=v/P$. The gain-perturbed
contraction is therefore
\begin{equation}
\hat A_i=A_i\Bigl(1+\eta_i\dzf{i}\Bigr).
\end{equation}

The bias enters at step $i$ as $(1-r_i)\beta_i$ and is multiplied by the
downstream contractions $\hat A_{i-1}\cdots\hat A_1$ on its way to the
output, so its weight is
\begin{equation}
(1-r_i)\prod_{m<i}\hat A_m
=\dzf{i}\prod_{m<i}\Bigl(1+\eta_m\dzf{m}\Bigr),
\end{equation}
where we used $\prod_{m<i}A_m=P/\varphi(\rho_{i-1})$ from
Lemma~\ref{lem:telescope} together with
$(1-r_i)P/\varphi(\rho_{i-1})=(1-r_i)K_i/A_i=\dzf{i}$ from
\eqref{eq:keyid}. When $\eta\equiv0$ the weight reduces to $\dzf{i}$.
In general, therefore,
\begin{equation}
m_0=\sum_i\beta_i\,\dzf{i}
    \prod_{m<i}\Bigl(1+\eta_m\dzf{m}\Bigr),
\label{eq:pert-mean}
\end{equation}
which is \eqref{eq:perturbed-mean}. Unrolling the variance recursion
\eqref{eq:Vrec} with the same factors gives
\begin{equation}
\frac{V_0}{P}=\sum_{j}z_{j-1}\dzf{j}
  \prod_{m<j}\Bigl(1+\eta_m\dzf{m}\Bigr)^{2},
\label{eq:pert-var}
\end{equation}
which is \eqref{eq:perturbed-variance}. In the special case
$\eta\equiv0$ this becomes
\begin{equation}
\frac{V_0}{P}=\sum_jz_{j-1}\dzf{j}
=1-\sum_j\frac{(\Delta z_j)^2}{z_j},
\end{equation}
using $z_{j-1}=z_j-\Delta z_j$ and $\sum_j\Delta z_j=1$, which recovers
\eqref{eq:z-deficit}.

Every factor in \eqref{eq:pert-mean}--\eqref{eq:pert-var} depends only
on the $z$-grid and the error profile. The terminal law is therefore
identical for every $v>0$, and with mean error the terminal KL is
$\tfrac12(u+m_0^2/P-1-\ln u)$ with $u=V_0/P$ by (S6).
\end{proof}

\begin{remark}[Numerical verification]
\label{rem:colorblind-check}
For simultaneous perturbations $\eta(z)=0.2z+0.1\sin3z$ and a smooth
bias profile $\beta(z)$, both closed forms agree with the exact
recursion to $\le3\times10^{-16}$ in $V_0$ and in $m_0$. At a matched
$z$-grid they are also identical to $12$ digits across
$x\in[0.01,100]$.
\end{remark}

\subsubsection{Proof of Proposition~\ref{prop:learning-null}}
\label{app:learning-null}

\begin{proof}
At level $z$ the conditional correlation satisfies
\begin{equation}
\corr^2\bigl(x_0,\;\text{deviation}\mid x_1\bigr)=1-z
\end{equation}
exactly, with symbolic residual $0$. Up to scale, the per-mode joint law
of $(x_0,x_t)$ given $x_1$ therefore depends only on $z$, so any
scale-equivariant learner has a color-free relative-error law at each
$z$-level. For zero-intercept least squares from $n$ pairs on a normalized design
$\sum_iX_i^2=n\tau^2$,
\begin{equation}
\Var\bigl(\hat K/K-1\bigr)=\frac{M}{n\Sigma K^2}=\frac{z}{n(1-z)},
\end{equation}
which is again color-free; under a random design $n$ is replaced by
$n-2$. Combined with
Theorem~\ref{thm:colorblind}, per-mode color is exactly invisible end
to end once the $z$-schedule is fixed.
\end{proof}

\begin{remark}[Correlated miscalibration]
\label{rem:miscal}
A constant relative gain error compounds along the $z$-path as
\begin{equation}
\exp\Bigl(\pm2\eta\sum_i\dzf{i}\Bigr).
\end{equation}
On a \emph{fixed} $\rho$-grid the color
changes the shape of that path, and the measured optima move a great
deal (Table~\ref{tab:miscal}). By Theorem~\ref{thm:colorblind} this
dependence must be read as a $z$-schedule effect, not as a color effect
in its own right.
\end{remark}

\begin{center}
\small
\captionof{table}{\textbf{Constant gain error $\pm\eta$ on a fixed
$\rho$-grid} moves the apparent optimum by orders of magnitude ---
an artifact of the induced $z$-schedule, per
Remark~\ref{rem:miscal}.}
\label{tab:miscal}
\begin{tabular}{@{}lcccc@{}}
\toprule
$\eta$ & $0$ & $0.02$ & $0.05$ & $0.2$ \\
\midrule
$x^{*}$ & $0.34$ & $5.4$ & $16.4$ & $78.8$ \\
\bottomrule
\end{tabular}
\end{center}

\subsubsection{Ridge whitening (Proposition~\ref{prop:ridge})}
\label{app:ridge}

A shared ridge penalty $\lambda$ shrinks the estimated gain through the
relative bias
\begin{equation}
\eta_k(z)=-\frac{\lambda}{\lambda+n\Sigma_k(z)} .
\label{eq:ridge-eta}
\end{equation}
Because $\Sigma_k$ is dimension\emph{ful}, this penalty breaks the scale
symmetry of Theorem~\ref{thm:proportional}. Exact-chain
optimization with $T{=}100$ and $\lambda{=}1$ gives the sweep in
Table~\ref{tab:ridge-sweep}, which decreases strongly with the
effective sample size $nP$.

\begin{table*}[t]
    \small\centering
    \caption{\textbf{Ridge-whitening sweep.} Optimal scale $x^{*}(nP)$
    for $T=100$ and $\lambda=1$. Smaller effective sample size $nP$
    produces a flatter optimal spectrum.}
    \label{tab:ridge-sweep}
    \begin{tabular}{rccccccc}
        \toprule
        $nP$ & $10$ & $30$ & $100$ & $300$ & $10^3$ & $10^4$ & $\infty$ \\
        \midrule
        $x^{*}(nP)$ & $11.92$ & $3.95$ & $1.58$ & $0.89$ & $0.58$ & $0.39$ & $0.363$ \\
        \bottomrule
    \end{tabular}
\end{table*}

For $P=(1,4)$ and $n=100$ the optimum satisfies
\begin{equation}
\frac{v_2^{*}}{v_1^{*}}=4\cdot\frac{x^{*}(400)}{x^{*}(100)}=1.99,
\end{equation}
against the proportional value $4$: weakly observed frequencies need
disproportionately \emph{more} reference noise to counteract shrinkage,
and the regularized optimum is substantially flatter. With a shared
\emph{constant} $\lambda$ the modes remain separable, because the
mechanism is per-mode scale sensitivity. A genuinely shared network
introduces true cross-mode coupling; the simplest such model ties one gain
across modes, and its exact cost equals the curvature-weighted dispersion
of the per-mode optimal gains. The fully nonlinear shared-network case
remains open.

\paragraph{Data-dependent optima.}
Using exact moment recursions and no Monte Carlo, the leading-order
statistical contribution equals $1/n$ times the discretization
functional; measured/predicted is $0.971$ and $0.979$ at $x=0.3$ and
$x=2.0$. The mean effect therefore does not move the argmin. The shift
is caused instead by the variance-of-variance penalty, which produces
the mild downward trend
\begin{equation}
x^{*}=0.323\;(n{=}\infty)\;\to\;0.305\;(n{=}300)\;\to\;0.272\;(n{=}100).
\end{equation}

\subsubsection{Distortion--perception and underdispersion correction}
\label{app:dp}

\begin{corollary}[Distortion--perception trade-off, derived]
\label{cor:dp}
The posterior-mean estimator $Wx_1$ has conditional MSE exactly $P$ for
every $T$, independent of the reference; this is the only quantity made
invariant by mean exactness (Lemma~\ref{lem:mean-exact}). The sampled
output has the correct mean and is conditionally independent of the true
$x_0$ given $x_1$, so its conditional MSE is
\begin{equation}
\mathrm{MSE}=P+V_0\;\in\;[P,\,2P),
\end{equation}
which depends on the reference through $V_0$ alone. The collapsed
sampler minimizes it at $V_0=0$, where it is the posterior-mean
estimator in disguise; at distributional perfection $V_0=P$ and the MSE
equals $2P$. The classical factor-2 ($3$\,dB) cost of posterior
sampling therefore follows as a theorem, and the reference traces the
entire trade-off curve.
\end{corollary}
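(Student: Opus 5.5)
The claims follow from Lemma~\ref{lem:mean-exact}, the unrolled variance recursion, and the conditional independence built into the sampler. Everything is conditional on $x_1$.

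\emph{Step 1: the posterior-mean estimator.} Given $x_1$, the true $x_0$ is distributed as $\N(Wx_1,P)$. Hence the estimator $Wx_1$ has conditional MSE $\E[(x_0-Wx_1)^2\mid x_1]=P$. This quantity involves no grid, no $T$ and no $v$, so it is reference-independent.

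\emph{Step 2: the sampled output.} Its conditional law is $\N(c_0x_1,V_0)$, and $c_0=W$ by Lemma~\ref{lem:mean-exact}. The chain is driven only by $x_1$ and fresh noises $\xi_i$, never by the true $x_0$, so the output $\hat x$ is conditionally independent of $x_0$ given $x_1$. Write
\[
\hat x-x_0=(\hat x-Wx_1)-(x_0-Wx_1).
\]
The cross term has zero conditional expectation by independence and centering. Expanding the square gives $\mathrm{MSE}=V_0+P$.

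\emph{Step 3: the range $[P,2P)$.} The lower endpoint comes from $V_0\ge0$, with equality exactly when $v=0$. In that case Theorem~\ref{thm:invisibility}(b) shows the output is $\delta(Wx_1)$, which is the posterior-mean estimator. The strict upper bound comes from Corollary~\ref{cor:underdispersion}: for $v>0$ and finite $T$ we have $V_0<P$, since $D_0>0$ by Lemma~\ref{lem:deficit-rec}.

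Step 2 shows the MSE depends on the reference only through $V_0$. It remains to show that the reference sweeps the whole interval. By Theorem~\ref{thm:zidentity} and Lemma~\ref{lem:deficit-closed}, $V_0/P=1-F(z)$. This is continuous in $v$, tends to $0$ as $v\to0$ (the $x\to0$ barrier in Theorem~\ref{thm:proportional}), and tends to $1$ as $T\to\infty$ (Theorem~\ref{thm:invisibility}(a)). So $V_0$ takes every value in $[0,P)$, and $P+V_0$ traces the entire curve from $P$ up to, but not including, $2P$. The factor-two statement is the limiting value $V_0=P$.

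\emph{Main obstacle.} The only delicate point is justifying the conditional independence in Step 2. It is structural rather than computational, and I would state it explicitly from the affine recursion of App~\ref{app:affine}: each step is a deterministic function of $x_1$ and the previous state plus an independent $\xi_i$, and $x_0$ never enters. The identification "at distributional perfection, $\mathrm{MSE}=2P$" is simply the limit $V_0\to P$ and needs no separate argument.
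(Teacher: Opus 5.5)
Your proposal is correct and follows the paper's own argument, which is given inline in the statement: Lemma~\ref{lem:mean-exact} makes the output centered at $Wx_1$, its conditional independence from $x_0$ given $x_1$ yields $\mathrm{MSE}=P+V_0$, and Corollary~\ref{cor:underdispersion} gives $V_0<P$. Two small precisions: at $T=1$ one has $q_1=0$ and hence $V_0=0$ for every $v$, so ``equality exactly when $v=0$'' needs $T\ge2$; and your sweep of $[0,P)$ lets $T$ vary as well as $v$, because at a fixed $T$ even a schedule-optimized reference reaches only $V_0\le P(1-F^*)$ with $F^*\approx 4/T$ (Theorem~\ref{thm:zidentity}).
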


\begin{corollary}[Principled underdispersion correction]
\label{cor:underdisp-fix}
By Corollary~\ref{cor:underdispersion} the plug-in sampler's only
failure mode is underdispersion, so deliberate gain inflation can serve
as a calibrated correction: by \eqref{eq:pert-var}, choose $\eta(z)$ so
that
\begin{equation}
\sum_{j}z_{j-1}\dzf{j}
  \prod_{m<j}\Bigl(1+\eta_m\dzf{m}\Bigr)^{2}=1 .
\label{eq:calibrate}
\end{equation}
Each term carries its own partial product; there is no single global
factor multiplying the $\eta\equiv0$ variance. A convenient
one-parameter solution holds $\eta$ constant and solves the resulting
scalar equation, giving an exact-formula counterpart to temperature and
churn heuristics.
\end{corollary}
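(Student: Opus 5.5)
The statement is Corollary~\ref{cor:underdisp-fix}. The claim has two parts: the calibration condition \eqref{eq:calibrate} is exactly the condition $V_0=P$, and a constant gain inflation solves it. I would take them in that order.

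\textbf{Step 1 (the condition is exact).} By Theorem~\ref{thm:colorblind}, with zero bias the terminal mean is still exactly $Wx_1$ (Proposition~\ref{prop:gain-unbiased}). The terminal variance is then given by \eqref{eq:pert-var}. By (S6) the terminal KL reduces to $\tfrac12(u-1-\ln u)$ with $u=V_0/P$, which vanishes iff $u=1$. So a profile $\eta(z)$ makes the perturbed sampler's terminal law equal the posterior $\N(Wx_1,P)$ exactly when the weighted sum in \eqref{eq:calibrate} equals $1$. This is why \eqref{eq:calibrate} is the calibration condition rather than a heuristic.

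\textbf{Step 2 (no single global factor).} In \eqref{eq:pert-var} the $j$-th summand carries the partial product over $m<j$. A constant overall rescaling of variance would need these products to coincide across $j$, which fails as soon as some $\eta_m\ne0$ with $m<T$. I would record this only as the remark in the statement.

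\textbf{Step 3 (one-parameter solution).} Set $\eta_m\equiv\eta$ and define
\[
g(\eta)=\sum_j z_{j-1}\frac{\Delta z_j}{z_j}\prod_{m<j}\Bigl(1+\eta\frac{\Delta z_m}{z_m}\Bigr)^2 .
\]
For $\eta\ge0$ this is a polynomial in $\eta$ with nonnegative coefficients, hence continuous and nondecreasing. It is strictly increasing provided some summand with $j\ge2$ has $z_{j-1}>0$; this holds because $z_{j-1}>0$ for $j\ge2$ on any grid with $T\ge2$. At $\eta=0$, Theorem~\ref{thm:zidentity} gives $g(0)=1-F(z)<1$ for $v>0$ (Corollary~\ref{cor:underdispersion}). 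As $\eta\to\infty$, the $j=T$ term contains the factor $(1+\eta\,\Delta z_1/z_1)^2=(1+\eta)^2$ with positive coefficient $z_{T-1}\Delta z_T/z_T$, so $g(\eta)\to\infty$. The intermediate value theorem together with strict monotonicity then yields a unique $\eta^*>0$ with $g(\eta^*)=1$.

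The main obstacle is the degenerate case: the grid positivity checks behind strict increase and divergence, in particular $T=1$, where $g\equiv z_0=0$ and no correction exists. I would state $T\ge2$ explicitly in the corollary.
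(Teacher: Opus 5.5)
Your proposal is correct and follows the paper's route. \eqref{eq:calibrate} is simply $V_0=P$ read off from \eqref{eq:pert-var}, and the mean is already exact by Proposition~\ref{prop:gain-unbiased}. Step~3 goes beyond the paper, which only asserts that the constant-$\eta$ scalar equation can be solved. Your monotonicity and intermediate-value argument uses $g(0)=1-F(z)<1$ and the factor $(1+\eta)^2$ that comes from $\Delta z_1/z_1=1$. It correctly gives a unique root with $\eta>0$ whenever $T\ge2$. Your remark that $T=1$ admits no correction at all is a caveat the statement leaves implicit.

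One small correction to Step~2. The $j=1$ summand vanishes because $z_0=0$, and $\Delta z_1/z_1=1$. So the factor $(1+\eta_1)^2$ is common to every surviving term. The partial products can therefore differ only through factors with $2\le m\le T-1$, not through any $\eta_m\neq0$ with $m<T$ as you wrote. In fact, a gain inflation confined to the final step is an exact global rescaling. Taking $\eta_1=(1-F(z))^{-1/2}-1$ and all other $\eta_m=0$ already satisfies \eqref{eq:calibrate}. The paper's ``no single global factor'' remark thus holds only for profiles that are active at some step other than the last one.
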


Note that an
objective that checks only the expected variance can be satisfied by
\emph{any} error that adds variance. The appropriate objective is
$\E[\KL]$ over error realizations.
\section{Experimental Protocols \& Additional Results}
\label{app:experiments}

\subsection{Reference color in $I^2$SB super-resolution}
\label{sec:exp-i2sb}

We begin with the question that motivated the theory: does reference
color matter for a full pretrained restoration model? Starting from the
stock $I^2$SB $4\times$ super-resolution checkpoint, we fine-tune one
model per reference for $50$k steps under identical optimization
settings, changing only the color of the reference noise --- white,
low-pass ($1/f^{2}$), and high-pass ($f^{2}$). All three references
share the same scale $\varepsilon$ and the same total pixel variance,
which removes the trivial confound; only the allocation of variance
across frequencies differs. Since $4\times$ downsampling destroys high
spatial frequencies, the high-pass reference is the one that
concentrates noise on the destroyed information --- the qualitative
direction of $v_k\propto P_k$.

\begin{table}[!htb]
\small\centering
\caption{\textbf{$4\times$ super-resolution with recolored references.}
PSNR/SSIM on held-out aligned pairs at NFE $20$; equal data, compute and
total reference variance. ``Best'' selects the best validation
checkpoint; ``final'' is at $50$k steps.}
\label{tab:i2sb-color}
\begin{tabular}{@{}lcccc@{}}
\toprule
& \multicolumn{2}{c}{best} & \multicolumn{2}{c}{final} \\
\cmidrule(lr){2-3}\cmidrule(lr){4-5}
reference & PSNR$\uparrow$ & SSIM$\uparrow$ & PSNR$\uparrow$ & SSIM$\uparrow$ \\
\midrule
high-pass ($f^{2}$)   & $\mathbf{29.13}$ & $\mathbf{0.714}$ & $\mathbf{27.66}$ & $\mathbf{0.647}$ \\
white                 & $27.64$ & $0.579$ & $26.82$ & $0.554$ \\
low-pass ($1/f^{2}$)  & $26.36$ & $0.476$ & $21.35$ & $0.207$ \\
\bottomrule
\end{tabular}
\end{table}

Table~\ref{tab:i2sb-color} shows the ordering predicted by the
destroyed-information picture: high-pass $>$ white $>$ low-pass on both
metrics, at both the best checkpoint and the end of training. The
high-pass reference beats white by $1.5$~dB PSNR and $0.14$ SSIM at the
best checkpoint, despite the base model having been pretrained with
white noise. The low-pass reference is more striking: its best
checkpoint is its \emph{first} one, and continued fine-tuning is
monotonically destructive, losing $5.0$~dB by $50$k steps. A reference
that concentrates noise on frequencies the observation already preserves
does not merely learn slowly; it steadily erases the pretrained model's
ability to reconstruct.

This experiment changes only the color of the noise, at a single scale
$\varepsilon$ and a single step budget. It confirms the coarse direction
the theory predicts --- noise belongs where the observation destroyed
information --- but it does not test the predicted amount $x^{*}(T)$,
which requires sweeping the scale (Fig.~\ref{fig:teasor} does this in
the exact setting). Even within these limits the effect is large: on a
real pretrained restoration model the reference spectrum is a
first-order design choice, not an implementation detail.

\subsection{Protocols}
\label{app:protocols}

\subsubsection{Exact numerics (Phase 1)}
\label{app:phase1}

Every quantity is computed three ways --- the closed-form deficit
\eqref{eq:deficit-closed}, the telescoped $V_0$, and the exact affine
recursion \eqref{eq:crec}--\eqref{eq:Vrec} --- and the three are
cross-checked to machine precision before each run. The fixed
regression anchor is $D_0=1.171587$ at $P=v=4$ and $T=10$.

Figure~\ref{fig:exact}(a) uses $P=v=4$ on uniform and power grids with
$T$ up to $10^5$. Figure~\ref{fig:exact}(c) locates $x^{*}(T)$ by
golden-section minimization of $\Phi$ at $22$ logarithmically spaced
$T\in[5,10^5]$. Figure~\ref{fig:exact}e uses the two-cluster grid at
$T{=}48$; each candidate minimum is re-verified with exact derivatives
in 50-digit \texttt{mpmath} arithmetic. Table~\ref{tab:twomode} uses the
exact recursion with budget $\sum_kv_k=5$, a linear schedule and vertex
$\epsilon=10^{-2}$. The $z$-schedule computations use the forward
recurrence \eqref{eq:arec}. All Phase-1 experiments are deterministic,
use no random seeds, and run in minutes on a laptop.

\subsubsection{Gaussian Setting (Phase 2)}
\label{app:phase2}

The synthetic setting has $64$ modes (up to $256$ in secondary sweeps),
prior $S_k=k^{-2}$, Gaussian MTF $h_k=\exp(-(k/k_c)^2)$ with $k_c=16$,
and flat noise $N_k=10^{-3}$; $P_k$ is computed exactly and saved with
every run. The matched reference is $v_k=x^{*}(T)P_k$ with $x^{*}$ from
Table~\ref{tab:xstar}; white, anti-matched ($\propto1/P_k$) and
prior-colored ($\propto S_k$) references are rescaled to the matched
total budget.

The predictor is $64$ \emph{independent} per-mode MLPs (batched with
\texttt{bmm}), each receiving $(x_t,x_1,\text{32-dim sinusoidal }t)$,
with $3$ SiLU hidden layers and scalar output $\hat x_0$. Training:
Adam $10^{-3}$, batch $2048$--$4096$, $12$k--$20$k steps, EMA $0.999$,
continuous $t\sim U(0,1)$, per-mode I/O normalization by $\sqrt{S_k}$ (a
benign per-mode rescaling), seeds $\{0,1,2\}$, common random numbers
across references.

Terminal KL is estimated as follows: conditional on $x_1$, the terminal
law of each mode is fit by regression over a common-random-number batch
of $10^5$ samples. The KL divergence between the fitted Gaussian and
$\N(Wx_1,P)$ is then evaluated in closed form; the empirical KL floor
is about $10^{-5}$.
The ridge-whitening and constant-sweep
experiments use closed-form ridge/OLS learners per
$(k,t\text{-bin})$ with $32$ bins and exact deterministic chain
propagation, hence zero sampling noise. The Jacobian probe estimates
$\hat\eta(z)$ by finite differences of $\partial\hat x_0/\partial x_t$
on the trained fingerprint network and applies \eqref{eq:pert-var}.

\subsubsection{FFHQ $64{\times}64$ (Phase 3)}
\label{app:phase3}

FFHQ is center-cropped and resized to $64{\times}64$ with Lanczos
interpolation; the first $60$k images are used for training, for fitting
$S(k)$ and for FID statistics, and the next $10$k for evaluation; pixels
lie in $[-1,1]$.

The degradation is defined in the DFT domain so that $P(k)$ is analytic:
$x_1=k_{\mathrm{blur}}*x_0+n$ with Gaussian blur
$\sigma_{\mathrm{blur}}=2.0$\,px, transfer function
$h(k)=\exp(-2\pi^2\sigma_{\mathrm{blur}}^2|f|^2)$,
and noise $\sigma_n=0.05$, i.e.\ $N=2.5\times10^{-3}$. There is no
decimation and hence no aliasing block, so
Corollary~\ref{cor:blocks} is not needed here. $S(k)$ is fit once by
radially averaging the training-split power spectrum and saved in a
bundle together with $P(k)$, $W(k)$, $h(k)$ and $x^{*}(T)$. The matched
reference uses $x^{*}(T_{\mathrm{ref}}{=}50)=0.404$; all budget-matched
colors have the same per-pixel noise variance, so ``same noise level,
different spectrum'' is literal.

Each run trains one $I^2$SB-style ADM U-Net (base width $128$,
multipliers $(1,2,2,2)$, $2$ residual blocks per resolution, attention
at $16^2$ and $8^2$; $39.6$M parameters), conditioned by concatenating
$x_1$ with $x_t$ ($6$ input channels) plus a standard $t$-embedding.

\emph{Original recipe:} $\hat x_0$ MSE with $\rho\sim U(0,1)$ on the
uniform schedule; Adam $10^{-4}$, batch $128$, $150$k steps, EMA
$0.9999$, dropout $0.1$, horizontal flips, bf16 autocast. Matched and
white use seeds $\{0,1,2\}$; anti, $\alpha{=}1$, $\alpha{=}2$ and
equal-budget matched use seed $0$ --- $10$ runs, each $4$--$7$\,h on one
modern GPU.

Evaluation uses the ancestral plug-in-mean sampler at
NFE $5$, $10$, $20$, $50$ and $100$ on uniform sub-grids with fixed generator
seeds and chunking, so comparisons across references use byte-identical
common random numbers. We report PSNR/SSIM/LPIPS on $1$k held-out
images and FID against FFHQ-64 statistics; radially averaged error
spectra include the exact-drift overlay $2P(k)-D_0(k)$. Every $5$k
steps the EMA network is probed on $256$ fixed images over a $16$-point
$t$-grid, recording the per-(radial band, $t$) fingerprint relative to
the Bayes floor.

\subsubsection{Converged recipe and the $\theta$-family}
\label{app:phase3-conv}

The converged recipe is identical to the original except dropout $0$,
batch size $512$ and $300$k steps --- eight times as many gradient
samples. The $\theta$-family is $v(k)\propto P(k)^{\theta}$ rescaled to
the fixed total budget $A$, with
$\theta\in\{0,\tfrac14,\tfrac12,\tfrac34,1\}$; the endpoints coincide
with the white and matched references at that budget (verified against
the saved bundle to $10^{-6}$, so separate endpoint runs are
unnecessary). Runs: matched and white with seeds $\{0,\dots,4\}$;
$\theta\in\{0.25,0.5,0.75\}$ with seed $0$; matched and white at
$\sigma_{\mathrm{blur}}=4.0$ with seed $0$ (separate bundle).

\emph{FID protocol.} Every FID in Section~\ref{sec:exp-regime} uses
$50$k samples, formed by pooling $5$ replicates of the $10$k evaluation
set with fresh sampler driving noise; per-replicate $10$k FIDs are
stored, and the original-recipe runs were re-evaluated under the same
protocol before any comparison.

\emph{Schedule ablation.} Every trained network is also evaluated on the
$z$-optimal grid of Theorem~\ref{thm:zidentity}(c) without retraining.

Predictions P1--P4 of Section~\ref{app:regime-details} were written down
before the first converged-recipe run began.

\subsection{Exact numerical evaluation}
\label{app:exact-details}

\paragraph{Numerical verification.}
All deterministic-recursion self-tests pass to machine precision. For
the invisibility experiment (Fig.~\ref{fig:exact}a) the terminal
variance reaches $V_0=3.9995$ at $T=10^5$ for $P=v=4$, while the
singular case $v=0$ collapses, as Theorem~\ref{thm:invisibility}(b)
requires. For the scaling experiment (Fig.~\ref{fig:exact}c) the
measured optima include $x^{*}(217)=0.329$ and $x^{*}(10^5)=0.212$, and
the ratio $x^{*}(T)/(2\ln T)^{-1/2}$ decreases from $1.29$ at $T=5$ to
$1.017$ at $T=10^5$; Table~\ref{tab:xstar} gives the full sweep. For the
clustered $T=48$ schedule (Fig.~\ref{fig:exact}e) seven local minima are
certified in 50-digit arithmetic.

\begin{table*}[!htb]
\small\centering
\caption{\textbf{Finite-step KL in a two-mode setting.} Total KL
$\sum_k\KL_k$ for $P=(1,4)$, total budget $5$ and a linear schedule.
The matched allocation is best at every tested step count. The vertex
allocation uses $v_2=\epsilon$ with $\epsilon=10^{-2}$ and is the
allocation selected by the degenerate Bayes-risk objective of
App~\ref{app:deadroute}.}
\label{tab:twomode}
\begin{tabular}{@{}lccccc@{}}
\toprule
allocation & $T{=}5$ & $T{=}10$ & $T{=}20$ & $T{=}50$ & $T{=}200$ \\
\midrule
matched $(1,4)$              & $\mathbf{0.1534}$ & $\mathbf{0.0537}$ & $\mathbf{0.0184}$ & $\mathbf{0.0043}$ & $\mathbf{0.0004}$ \\
uniform $(2.5,2.5)$          & $0.1901$ & $0.0710$ & $0.0263$ & $0.0068$ & $0.0008$ \\
anti $(4,1)$                 & $0.2682$ & $0.1012$ & $0.0382$ & $0.0104$ & $0.0013$ \\
vertex $(5{-}\epsilon,\epsilon)$ & $1.8491$ & $1.3075$ & $0.8995$ & $0.4869$ & $0.1174$ \\
\bottomrule
\end{tabular}
\end{table*}

Table~\ref{tab:twomode} shows that the matched allocation minimizes KL
at every tested $T$; at $T=200$ the vertex allocation has $294\times$
larger KL.

\paragraph{Schedule and budget effects.}
For the $z$-schedule recurrence, $F^{*}T/4$ equals $0.375$, $0.694$,
$0.966$ and $0.999$ at $T=2$, $10$, $200$ and $10^4$
(Table~\ref{tab:zrec}), approaching the $4P/T$ floor of
Theorem~\ref{thm:zidentity}d. After schedule optimization the deficit is
invariant to reference color up to $10^{-12}$, as
Theorem~\ref{thm:zidentity}a requires. The budgeted optimum bends toward
the $P^2$ allocation predicted by Proposition~\ref{prop:budget}:
$v_1^{*}$ moves from $0.773$ at $T=5$ to $0.347$ at $T=10^3$ and
$0.306$ at $T=2\times10^4$, approaching the vertex value
$5/17\approx0.294$ (Table~\ref{tab:budget-bend}).

\subsection{Controlled Gaussian experiments}
\label{app:gaussian-details}

\paragraph{Tested predictions.}
We test whether: (i) the converged loss matches the Bayes floor
$P\phi/(\phi+P)$ with $\phi=v\rho/(1-\rho)$; (ii) the predicted
finite-NFE reference ordering appears; (iii) the optimal scale follows
the theoretical constant and bends toward $P^2$ under an equal-budget
constraint; (iv) finite data and weight decay flatten the optimum toward
white; and (v) a Jacobian-based estimate from one trained network
predicts terminal variance without retraining.

\paragraph{Loss fingerprint and NFE dependence.}
After $12$k training steps the median relative excess over the Bayes
floor is $0.18\%$ across the $64\times32$ grid of mode--time pairs,
below the pre-specified $5\%$ gate. For every NFE up to $50$ the KL
ordering is matched $<$ white $<$ anti-matched $<$ prior-colored; at
NFE $50$ the total KL is $0.19{\pm}0.03$, $0.21{\pm}0.03$,
$0.44{\pm}0.04$ and $4.3{\pm}0.3$ over three seeds.

At NFE $200$ gain error dominates discretization error: the learned
curves lie one to two orders of magnitude above their exact-drift
floors, the prior-colored reference becomes unstable
(KL $10^4$--$10^5$), and matched and white are statistically
indistinguishable at $0.60{\pm}0.16$ and $0.54{\pm}0.15$. This regime
therefore does not preserve the finite-step ordering predicted under
exact drift --- exactly as Theorem~\ref{thm:colorblind} anticipates.

\paragraph{Optimal scale and ridge whitening.}
The measured optimal constants are $0.575$ at $T=10$ and $0.402$ at
$T=50$, against the theoretical values $0.577$ and $0.404$ of
Table~\ref{tab:xstar}. At $T=200$ the measured optimum is $0.360$, which
is $8\%$ above the exact-drift value $0.332$ --- the direction predicted
for finite-data regularization by Proposition~\ref{prop:ridge}.
In the exact-chain ridge sweep at $T=100$ and $\lambda=1$ (relative
bias \eqref{eq:ridge-eta}), the optimal scale $x^{*}(nP)$ grows
monotonically as the effective sample size shrinks --- from $0.363$ at
$nP=\infty$ through $0.58$ at $nP=10^3$ to $11.92$ at $nP=10$ --- i.e.\
a smaller effective sample size produces a flatter optimal spectrum.
For $P=(1,4)$ and $n=100$ the ridge-adjusted optimum gives
$v_2^{*}/v_1^{*}=1.99$ against $4$ under exact proportionality: the
regularized optimum is substantially flatter.

\paragraph{Jacobian and schedule probes.}
The Jacobian analysis of the trained fingerprint network predicts
terminal variance with $0.6\%$ median relative error and needs no
retraining, validating the measurement protocol of
Section~\ref{sec:modelerror}. Per-mode schedule adaptation reduces
reference-color gaps toward the theoretical floor, supporting the
color-blindness prediction of Theorem~\ref{thm:zidentity}a once
schedule error is removed.

\subsection{FFHQ: evaluation details and secondary results}
\label{app:ffhq-details}

\paragraph{Data and metrics.}
All FFHQ runs use common random numbers across reference choices. PSNR,
SSIM and LPIPS are computed on the same $1{,}000$ held-out images. In
the original-recipe experiments FID is computed from $50$k restorations
against Inception statistics of the $60$k clean training images; the
converged-recipe experiments use $50$k restorations
(Section~\ref{app:regime-details}). The matched reference is computed
from the known Gaussian MTF and flat observation-noise spectrum; it is
not selected using validation performance.

\paragraph{Budget conventions.}
Most rows of Table~\ref{tab:ffhq} use total budget
$A=\sum_kx^{*}P(k)$. The ``matched, equal budget'' row instead uses the
optimal-white budget $B$. At NFE $50$ the equal-budget matched model has
FID $9.12$ against $8.45$ for matched under its free budget --- the
budget effect predicted by Proposition~\ref{prop:budget}. We report both
conventions because they answer different questions. The flatter
empirical optimum is also not specific to NFE $50$: at NFE $100$, white
attains FID $6.07$ against $6.64$ for matched, beyond the observed seed
variation.

\paragraph{Schedule re-evaluation.}
Each trained model is re-evaluated with the $z$-optimal schedule of
Theorem~\ref{thm:zidentity}, without retraining. The optimized schedule
improves every reference by a similar amount and leaves the ordering
unchanged (at NFE $50$: white $7.87\to7.10$, matched $8.45\to7.62$,
anti-matched $14.1\to13.0$); the white--matched FID gap changes only
from $0.58$ to $0.53$. It therefore removes a shared discretization
component but does not explain the color-dependent gap.

\subsection{Training regime and reference exponent}
\label{app:regime-details}

\paragraph{FID measurement.}
For each converged-recipe run, five independent groups of $10$k
restorations are generated with fresh sampler noise and their Inception
features pooled into a $50$k estimate; the median per-group spread is
$\pm0.04$ and the maximum $\pm0.08$. For direct comparison with the
original $10$k protocol, per-group $10$k FIDs are also reported. Pooling
does not change the conclusions: at NFE $50$ the matched--white gap is
$1.169$ on the $10$k basis and $1.177$ after pooling.

\paragraph{Pre-specified prediction ledger.}
Before training we recorded four directional predictions:
\begin{itemize}[leftmargin=2.2em,itemsep=1pt,topsep=2pt]
\item[P1.] Better optimization should shrink the matched--white FID and
  LPIPS gaps, particularly at NFE $5$--$10$, and may reverse their sign.
\item[P2.] The $\theta$-family should have an interior perceptual
  optimum, and its minimizing $\theta$ should increase under the
  converged recipe.
\item[P3.] Anti-matched should remain substantially worse on perceptual
  metrics in every tested regime.
\item[P4.] Increasing the blur to $\sigma_{\mathrm{blur}}=4.0$ should
  increase the magnitude of the matched--white effect, regardless of
  direction.
\end{itemize}
These predictions concern the direction in which the optimum moves; they
do not assume that matched must become best.

\begin{figure*}[!htb]
\centering
\includegraphics[width=0.9\textwidth]{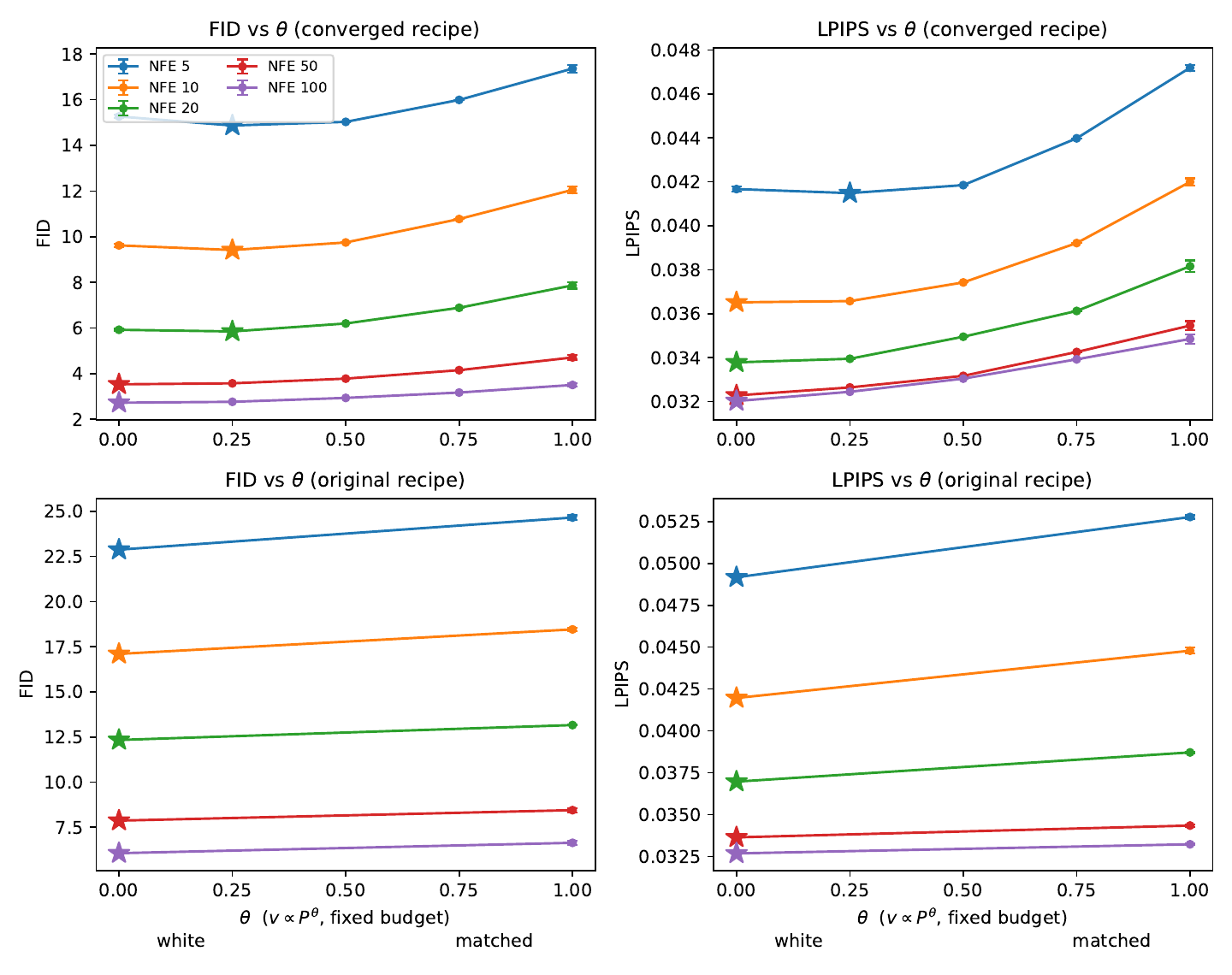}
\caption{\textbf{Reference exponent under two training recipes.} FID
(left) and LPIPS (right) against the color exponent $\theta$
($v\propto P_k^{\theta}$, fixed budget); stars mark the argmin per NFE.
\emph{Top:} converged recipe --- $\theta{=}0.25$ is best only for
NFE $\le20$, and white is best for NFE $\ge50$. \emph{Bottom:} original
recipe (endpoints only; the $\alpha$-suite is not in the
$P_k^{\theta}$ family). Better optimization widens, rather than closes,
the white--matched FID gap.}
\label{fig:theta}
\end{figure*}

\paragraph{P1 and P2.}
The converged recipe improves both endpoint references in absolute
terms: at NFE $50$, white improves from
FID $7.87$ to $4.34$ and matched from $8.45$ to $5.50$.

P1 is nevertheless refuted. The matched--white gap
\emph{increases} from $1.78$ to $2.09$ at NFE $5$, from $1.36$ to $2.43$
at NFE $10$, and from $0.58$ to $1.18$ at NFE $50$ --- at NFE $50$
roughly $27$ standard errors, given seed spreads of $\pm0.09$ (matched)
and $\pm0.02$ (white) on the pooled estimate. The LPIPS gap increases
from $0.0028$ to $0.0055$ at NFE $10$ and from $0.0007$ to $0.0032$ at
NFE $50$.

P2 receives limited support only at small NFE: FID is minimized at
$\theta=0.25$ for NFE $\le20$ (Fig.~\ref{fig:theta}), and at NFE $5$ its
value is $14.87$ against $15.27{\pm}0.07$ for white. For NFE $\ge50$ FID returns to a white
optimum, and LPIPS selects white for NFE $\ge10$. A direct cross-recipe
comparison of the minimizing $\theta$ is not identifiable, because the
original recipe did not include interior $\theta$ values; the
identifiable endpoint gap moves opposite to the prediction.

\paragraph{P3 and P4.}
P3 is supported in the original regime under both schedule grids: at
NFE $50$ anti-matched has FID $14.1$ under the uniform schedule and
$13.0$ under the $z$-optimal schedule, the worst perceptual reference in
both cases. Anti-matched was not retrained under the converged recipe,
so P3 is not evaluated there.

P4 is supported in magnitude, in the direction favoring white: at
$\sigma_{\mathrm{blur}}=4.0$ the white--matched FID gap at NFE $50$ is
$2.80$ ($6.45$ versus $9.25$; one seed per reference, pooled $50$k FID),
against $1.18$ at $\sigma_{\mathrm{blur}}=2.0$ on the same basis. The
corresponding LPIPS gaps are $0.0066$ and $0.0032$.

\paragraph{Convergence audit and scope.}
The loss fingerprint (Fig.~\ref{fig:fingerprint-conv}) shows that the
converged recipe reduces but does not eliminate low-frequency
optimization error: in the lowest radial band the median relative excess
above the Bayes floor decreases from $1.37\times$ at $150$k steps to
$1.05\times$ at $300$k for matched, and from $0.73\times$ to
$0.60\times$ for white, while mid- and high-frequency bands remain at
the floor. Under the pre-specified criterion the converged recipe is
better optimized but has not reached the exact-drift regime.

Because low-frequency fingerprint error
persists at $300$k steps, the ridge-whitening law remains untested at
true convergence. The small-NFE optimum at $\theta=0.25$ shows
that limited $P_k$-coloring can help.

\begin{figure}[!htb]
\centering
\includegraphics[width=\columnwidth]{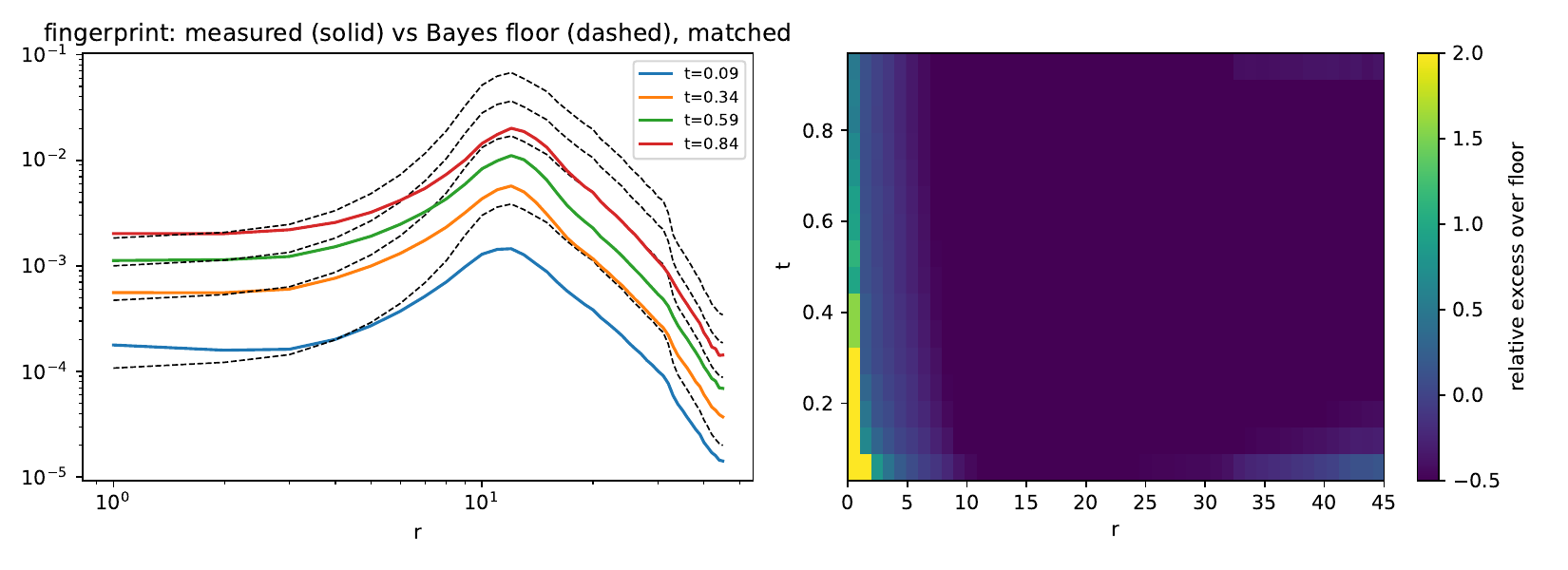}
\caption{\textbf{Convergence audit (converged recipe).} Per-(radial
band, $t$) training loss relative to the Bayes floor at $300$k steps.
The low-frequency excess shrinks relative to the original recipe but
persists; mid and high frequencies sit at the floor.}
\label{fig:fingerprint-conv}
\end{figure}

\begin{table}[!htb]
\small\centering
\caption{\textbf{FFHQ under the converged recipe} at NFE $50$ (dropout
$0$, batch size $512$, $300$k steps; all other settings as in
Table~\ref{tab:ffhq}). FID uses $50$k pooled samples. White and matched
report mean$\pm$std over five seeds; interior $\theta$ values use one
seed.}
\label{tab:ffhq-conv}
\setlength{\tabcolsep}{2.5pt}
\begin{tabular}{@{}lcccc@{}}
\toprule
reference & PSNR$\uparrow$ & SSIM$\uparrow$ & LPIPS$\downarrow$ & FID$\downarrow$ \\
\midrule
white ($\theta{=}0$)   & $\mathbf{24.81}{\pm}0.01$ & $\mathbf{0.828}$ & $\mathbf{0.0323}$ & $\mathbf{3.53}{\pm}0.02$ \\
$\theta{=}0.25$        & $24.73$ & $0.826$ & $0.0326$ & $3.58$ \\
$\theta{=}0.5$         & $24.67$ & $0.824$ & $0.0332$ & $3.78$ \\
$\theta{=}0.75$        & $24.67$ & $0.823$ & $0.0343$ & $4.15$ \\
matched ($\theta{=}1$) & $24.73{\pm}0.01$ & $0.825$ & $0.0355$ & $4.71{\pm}0.09$ \\
\bottomrule
\end{tabular}
\end{table}

\subsection{Dataset replication: CelebA}
\label{app:celeba}

\paragraph{Protocol.}
\label{app:celeba-protocol}
To test whether the white-first ordering is specific to FFHQ, we
replicate the converged recipe on CelebA at $64{\times}64$ under the
identical degradation ($\sigma_{\mathrm{blur}}=2.0$,
$\sigma_n=0.05$), with $S(k)$ and the bundle re-fit on the CelebA
training split; white, matched, $\theta{=}0.25$ and anti-matched are
trained with one seed each. Table~\ref{tab:celeba} reports FID, PSNR
and LPIPS at every evaluated NFE.

\begin{table}[!htb]
\small\centering
\caption{\textbf{CelebA replication} (converged recipe, one seed per
reference; FID uses $50$k pooled samples). Best per column in bold
within each block. White has the best FID and anti-matched the best
PSNR (SSIM tracks PSNR) at every NFE; anti-matched has the worst LPIPS
and FID at every NFE.}
\label{tab:celeba}
\setlength{\tabcolsep}{3.5pt}
\begin{tabular}{@{}lccccc@{}}
\toprule
NFE & $5$ & $10$ & $20$ & $50$ & $100$ \\
\midrule
\multicolumn{6}{@{}l}{FID$\downarrow$} \\
white           & $\mathbf{10.97}$ & $\mathbf{6.99}$ & $\mathbf{4.46}$ & $\mathbf{2.73}$ & $\mathbf{2.13}$ \\
$\theta{=}0.25$ & $11.03$ & $7.21$ & $4.75$ & $2.98$ & $2.34$ \\
matched         & $12.72$ & $8.88$ & $6.04$ & $3.85$ & $2.99$ \\
anti-matched    & $16.12$ & $11.97$ & $8.77$ & $5.88$ & $4.44$ \\
\midrule
\multicolumn{6}{@{}l}{PSNR$\uparrow$} \\
white           & $26.64$ & $26.28$ & $25.95$ & $25.60$ & $25.42$ \\
$\theta{=}0.25$ & $26.50$ & $26.17$ & $25.83$ & $25.53$ & $25.39$ \\
matched         & $26.39$ & $26.13$ & $25.85$ & $25.54$ & $25.33$ \\
anti-matched    & $\mathbf{27.27}$ & $\mathbf{27.10}$ & $\mathbf{26.89}$ & $\mathbf{26.56}$ & $\mathbf{26.28}$ \\
\midrule
\multicolumn{6}{@{}l}{LPIPS$\downarrow$} \\
white           & $0.0371$ & $0.0330$ & $\mathbf{0.0306}$ & $\mathbf{0.0295}$ & $\mathbf{0.0295}$ \\
$\theta{=}0.25$ & $\mathbf{0.0365}$ & $\mathbf{0.0329}$ & $0.0312$ & $0.0298$ & $0.0296$ \\
matched         & $0.0428$ & $0.0388$ & $0.0355$ & $0.0326$ & $0.0326$ \\
anti-matched    & $0.0510$ & $0.0463$ & $0.0416$ & $0.0362$ & $0.0335$ \\
\bottomrule
\end{tabular}
\end{table}

\paragraph{Results.}
Both FFHQ findings replicate (Table~\ref{tab:celeba}). The
distortion--perception split of Corollary~\ref{cor:dp} holds at every
NFE: anti-matched has the best PSNR/SSIM and the worst LPIPS/FID
throughout. The white-first FID ordering also holds at every NFE, and
at NFE $50$ the matched--white gap of $1.12$ is comparable to FFHQ's. On CelebA white already has the best FID
at NFE $5$, so the small-NFE interior optimum of Fig.~\ref{fig:theta}
appears dataset-dependent. 

\bibliographystyle{plain}
\bibliography{ref}

@article{shi2023diffusion,
  title={Diffusion {Schr{\"o}dinger} Bridge Matching},
  author={Shi, Yuyang and De Bortoli, Valentin and Campbell, Andrew and Doucet, Arnaud},
  journal={Advances in Neural Information Processing Systems},
  volume={36},
  pages={62183--62223},
  year={2023}
}

@article{liu20232,
  title={{I2SB}: Image-to-Image {Schr{\"o}dinger} Bridge},
  author={Liu, Guan-Horng and Vahdat, Arash and Huang, De-An and Theodorou, Evangelos A. and Nie, Weili and Anandkumar, Anima},
  journal={arXiv preprint arXiv:2302.05872},
  year={2023}
}

@article{de2021diffusion,
  title={Diffusion {Schr{\"o}dinger} Bridge with Applications to Score-Based Generative Modeling},
  author={De Bortoli, Valentin and Thornton, James and Heng, Jeremy and Doucet, Arnaud},
  journal={Advances in Neural Information Processing Systems},
  volume={34},
  pages={17695--17709},
  year={2021}
}

@article{karras2022elucidating,
  title={Elucidating the design space of diffusion-based generative models},
  author={Karras, Tero and Aittala, Miika and Aila, Timo and Laine, Samuli},
  journal={Advances in neural information processing systems},
  volume={35},
  pages={26565--26577},
  year={2022}
}

@article{kingma2021variational,
  title={Variational diffusion models},
  author={Kingma, Diederik and Salimans, Tim and Poole, Ben and Ho, Jonathan},
  journal={Advances in neural information processing systems},
  volume={34},
  pages={21696--21707},
  year={2021}
}

@article{peluchetti2023diffusion,
  title={Diffusion bridge mixture transports, Schr{\"o}dinger bridge problems and generative modeling},
  author={Peluchetti, Stefano},
  journal={Journal of Machine Learning Research},
  volume={24},
  number={374},
  pages={1--51},
  year={2023}
}

@inproceedings{bunne2023schrodinger,
  title={The schr{\"o}dinger bridge between gaussian measures has a closed form},
  author={Bunne, Charlotte and Hsieh, Ya-Ping and Cuturi, Marco and Krause, Andreas},
  booktitle={International Conference on Artificial Intelligence and Statistics},
  pages={5802--5833},
  year={2023},
  organization={PMLR}
}

@article{tang2024simplified,
  title={Simplified diffusion schr{\"o}dinger bridge},
  author={Tang, Zhicong and Hang, Tiankai and Gu, Shuyang and Chen, Dong and Guo, Baining},
  journal={arXiv preprint arXiv:2403.14623},
  year={2024}
}

@inproceedings{gushchin2024light,
  title={Light and optimal schr{\"o}dinger bridge matching},
  author={Gushchin, Nikita and Kholkin, Sergei and Burnaev, Evgeny and Korotin, Alexander},
  booktitle={Forty-first International Conference on Machine Learning},
  year={2024}
}

@inproceedings{
liu2024generalized,
title={Generalized schr{\"o}dinger Bridge Matching},
author={Guan-Horng Liu and Yaron Lipman and Maximilian Nickel and Brian Karrer and Evangelos Theodorou and Ricky T. Q. Chen},
booktitle={The Twelfth International Conference on Learning Representations},
year={2024},
url={https://openreview.net/forum?id=SoismgeX7z}
}

@article{zhang2026learning,
  title={Learning non-equilibrium diffusions with schr{\"o}dinger bridges: from exactly solvable to simulation-free},
  author={Zhang, Stephen and Stumpf, Michael},
  journal={Advances in Neural Information Processing Systems},
  volume={38},
  pages={119861--119898},
  year={2026}
}

@article{kholkin2024diffusion,
  title={Diffusion \& adversarial schr{\"o}dinger bridges via iterative proportional markovian fitting},
  author={Kholkin, Sergei and Ksenofontov, Grigoriy and Li, David and Kornilov, Nikita and Gushchin, Nikita and Suvorikova, Alexandra and Kroshnin, Alexey and Burnaev, Evgeny and Korotin, Alexander},
  journal={arXiv preprint arXiv:2410.02601},
  year={2024}
}

@article{gushchin2024adversarial,
  title={Adversarial schr{\"o}dinger bridge matching},
  author={Gushchin, Nikita and Selikhanovych, Daniil and Kholkin, Sergei and Burnaev, Evgeny and Korotin, Alexander},
  journal={Advances in Neural Information Processing Systems},
  volume={37},
  pages={89612--89651},
  year={2024}
}

@article{howard2026schrodinger,
  title={Schr{\"o}dinger Bridge Matching for Tree-Structured Costs and Entropic Wasserstein Barycentres},
  author={Howard, Samuel and Potaptchik, Peter and Deligiannidis, George},
  journal={Advances in Neural Information Processing Systems},
  volume={38},
  pages={130112--130149},
  year={2026}
}

@inproceedings{zhou2024denoising,
  title={Denoising diffusion bridge models},
  author={Zhou, Linqi and Lou, Aaron and Khanna, Samar and Ermon, Stefano},
  booktitle={International Conference on Learning Representations},
  volume={2024},
  pages={8160--8171},
  year={2024}
}

@article{wang2024implicit,
  title={Implicit image-to-image schr{\"o}dinger bridge for ct super-resolution and denoising},
  author={Wang, Yuang and Yoon, Siyeop and Jin, Pengfei and Tivnan, Matthew and Chen, Zhennong and Hu, Rui and Zhang, Li and Chen, Zhiqiang and Li, Quanzheng and Wu, Dufan},
  journal={arXiv preprint arXiv:2403.06069},
  volume={2},
  year={2024}
}

@inproceedings{zheng2025diffusion,
  title={Diffusion bridge implicit models},
  author={Zheng, Kaiwen and He, Guande and Chen, Jianfei and Bao, Fan and Zhu, Jun},
  booktitle={International Conference on Learning Representations},
  volume={2025},
  pages={81857--81884},
  year={2025}
}

@article{gushchin2025inverse,
  title={Inverse bridge matching distillation},
  author={Gushchin, Nikita and Li, David and Selikhanovych, Daniil and Burnaev, Evgeny and Baranchuk, Dmitry and Korotin, Alexander},
  journal={arXiv preprint arXiv:2502.01362},
  year={2025}
}

@article{he2024consistency,
  title={Consistency diffusion bridge models},
  author={He, Guande and Zheng, Kaiwen and Chen, Jianfei and Bao, Fan and Zhu, Jun},
  journal={Advances in Neural Information Processing Systems},
  volume={37},
  pages={23516--23548},
  year={2024}
}

@article{wang2025irbridge,
  title={Irbridge: Solving image restoration bridge with pre-trained generative diffusion models},
  author={Wang, Hanting and Jin, Tao and Lin, Wang and Wang, Shulei and Huang, Hai and Ji, Shengpeng and Zhao, Zhou},
  journal={arXiv preprint arXiv:2505.24406},
  year={2025}
}

@article{zhu2025unidb,
  title={Unidb: A unified diffusion bridge framework via stochastic optimal control},
  author={Zhu, Kaizhen and Pan, Mokai and Ma, Yuexin and Fu, Yanwei and Yu, Jingyi and Wang, Jingya and Shi, Ye},
  journal={arXiv preprint arXiv:2502.05749},
  year={2025}
}

@inproceedings{wang2026residual,
  title={Residual diffusion bridge model for image restoration},
  author={Wang, Hebaixu and Zhang, Jing and Chen, Haoyang and Guo, Haonan and Wang, Di and Ma, Jiayi and Du, Bo},
  booktitle={Proceedings of the IEEE/CVF Conference on Computer Vision and Pattern Recognition},
  pages={8375--8386},
  year={2026}
}

@article{hou2026energy,
  title={Energy-oriented Diffusion Bridge for Image Restoration with Foundational Diffusion Models},
  author={Hou, Jinhui and Zhu, Zhiyu and Hou, Junhui},
  journal={arXiv preprint arXiv:2604.10983},
  year={2026}
}

@article{yao2025regularized,
  title={Regularized schr{\"o}dinger Bridge: Alleviating Distortion and Exposure Bias in Solving Inverse Problems},
  author={Yao, Qing and Gao, Lijian and Mao, Qirong and Dong, Ming},
  journal={arXiv preprint arXiv:2511.11686},
  year={2025}
}

@inproceedings{
wyrwal2026topological,
title={Topological Flow Matching},
author={Kacper Wyrwal and Ismail Ilkan Ceylan and Alexander Tong},
booktitle={The Fourteenth International Conference on Learning Representations},
year={2026},
url={https://openreview.net/forum?id=5CM3ax45Ma}
}

@article{chertkov2026analytic,
  title={Analytic Bridge Diffusions for Controlled Path Generation},
  author={Chertkov, Michael},
  journal={arXiv preprint arXiv:2605.02961},
  year={2026}
}

@article{falck2025fourier,
  title={A fourier space perspective on diffusion models},
  author={Falck, Fabian and Pandeva, Teodora and Zahirnia, Kiarash and Lawrence, Rachel and Turner, Richard and Meeds, Edward and Zazo, Javier and Karmalkar, Sushrut},
  journal={arXiv preprint arXiv:2505.11278},
  year={2025}
}

@inproceedings{huang2024blue,
  title={Blue noise for diffusion models},
  author={Huang, Xingchang and Salaun, Corentin and Vasconcelos, Cristina and Theobalt, Christian and Oztireli, Cengiz and Singh, Gurprit},
  booktitle={ACM SIGGRAPH 2024 conference papers},
  pages={1--11},
  year={2024}
}

@article{jiralerspong2025shaping,
  title={Shaping inductive bias in diffusion models through frequency-based noise control},
  author={Jiralerspong, Thomas and Earnshaw, Berton and Hartford, Jason and Bengio, Yoshua and Scimeca, Luca},
  journal={arXiv preprint arXiv:2502.10236},
  year={2025}
}

@article{scimeca2025learning,
  title={Learning What Matters: Steering Diffusion via Spectrally Anisotropic Forward Noise},
  author={Scimeca, Luca and Jiralerspong, Thomas and Earnshaw, Berton and Hartford, Jason and Bengio, Yoshua},
  journal={arXiv preprint arXiv:2510.09660},
  year={2025}
}

@article{benita2026spectral,
  title={Spectral analysis of diffusion models with application to schedule design},
  author={Benita, Roi and Elad, Miki and Keshet, Joseph},
  journal={Advances in Neural Information Processing Systems},
  volume={38},
  pages={2073--2127},
  year={2026}
}

@inproceedings{
esteves2026spectrallyguided,
title={Spectrally-Guided Diffusion Noise Schedules},
author={Carlos Esteves and Ameesh Makadia},
booktitle={Forty-third International Conference on Machine Learning},
year={2026},
url={https://openreview.net/forum?id=5cIgeU4WOG}
}

@article{davidson2026colored,
  title={Colored Noise Diffusion Sampling},
  author={Davidson, Hadar and Issachar, Noam and Benaim, Sagie},
  journal={arXiv preprint arXiv:2605.30332},
  year={2026}
}

@article{kingma2023understanding,
  title={Understanding diffusion objectives as the elbo with simple data augmentation},
  author={Kingma, Diederik and Gao, Ruiqi},
  journal={Advances in Neural Information Processing Systems},
  volume={36},
  pages={65484--65516},
  year={2023}
}

@inproceedings{lin2024common,
  title={Common diffusion noise schedules and sample steps are flawed},
  author={Lin, Shanchuan and Liu, Bingchen and Li, Jiashi and Yang, Xiao},
  booktitle={Proceedings of the IEEE/CVF winter conference on applications of computer vision},
  pages={5404--5411},
  year={2024}
}

@article{okada2024constant,
  title={Constant Rate Scheduling: Constant-Rate Distributional Change for Efficient Training and Sampling in Diffusion Models},
  author={Okada, Shuntaro and Yoshihashi, Ryota and Kataoka, Hirokatsu and Tanaka, Tomohiro and others},
  journal={arXiv preprint arXiv:2411.12188},
  year={2024}
}

@inproceedings{blau2018perception,
  title={The perception-distortion tradeoff},
  author={Blau, Yochai and Michaeli, Tomer},
  booktitle={Proceedings of the IEEE conference on computer vision and pattern recognition},
  pages={6228--6237},
  year={2018}
}

@article{freirich2021theory,
  title={A theory of the distortion-perception tradeoff in wasserstein space},
  author={Freirich, Dror and Michaeli, Tomer and Meir, Ron},
  journal={Advances in Neural Information Processing Systems},
  volume={34},
  pages={25661--25672},
  year={2021}
}

@article{fallah2025rareflow,
  title={RareFlow: Physics-Aware Flow-Matching for Cross-Sensor Super-Resolution of Rare-Earth Features},
  author={Fallah, Forouzan and Li, Wenwen and Hsu, Chia-Yu and Lee, Hyunho and Yang, Yezhou},
  journal={arXiv preprint arXiv:2510.23816},
  year={2025}
}

@book{horn2012matrix,
  title={Matrix Analysis},
  author={Horn, R.A. and Johnson, C.R.},
  isbn={9781139788885},
  url={https://books.google.com/books?id=O7sgAwAAQBAJ},
  year={2012},
  publisher={Cambridge University Press}
}

@book{basu2006algorithms,
  title={Algorithms in real algebraic geometry},
  author={Basu, Saugata and Pollack, Richard and Roy, Marie-Fran{\c{c}}oise},
  year={2006},
  publisher={Springer}
}

@book{oksendal2010stochastic,
  title={Stochastic Differential Equations: An Introduction with Applications},
  author={{\O}ksendal, B.},
  isbn={9783642143946},
  lccn={2003052637},
  series={Universitext},
  url={https://books.google.com/books?id=EQZEAAAAQBAJ},
  year={2010},
  publisher={Springer Berlin Heidelberg}
}

@book{karatzas1991brownian,
  title={Brownian Motion and Stochastic Calculus},
  author={Karatzas, I. and Shreve, S.},
  number={v. 113},
  isbn={9780387976556},
  lccn={96167783},
  series={Brownian Motion and Stochastic Calculus},
  url={https://books.google.com/books?id=ATNy_Zg3PSsC},
  year={1991},
  publisher={Springer New York}
}

@book{rudin1976principles,
  title={Principles of Mathematical Analysis},
  author={Rudin, W.},
  isbn={9780070856134},
  lccn={75179033},
  series={International series in pure and applied mathematics},
  url={https://books.google.com/books?id=kwqzPAAACAAJ},
  year={1976},
  publisher={McGraw-Hill}
}

@book{cover2012elements,
  title={Elements of Information Theory},
  author={Cover, T.M. and Thomas, J.A.},
  isbn={9781118585771},
  lccn={2005047799},
  url={https://books.google.com/books?id=VWq5GG6ycxMC},
  year={2012},
  publisher={Wiley}
}

@article{WANG2025111627,
title = {Implicit Image-to-Image schr{\"o}dinger Bridge for image restoration},
journal = {Pattern Recognition},
volume = {165},
pages = {111627},
year = {2025},
issn = {0031-3203},
doi = {https://doi.org/10.1016/j.patcog.2025.111627},
url = {https://www.sciencedirect.com/science/article/pii/S0031320325002870},
author = {Yuang Wang and Siyeop Yoon and Pengfei Jin and Matthew Tivnan and Sifan Song and Zhennong Chen and Rui Hu and Li Zhang and Quanzheng Li and Zhiqiang Chen and Dufan Wu},
}

@inproceedings{celebA,
  title={Deep learning face attributes in the wild},
  author={Liu, Ziwei and Luo, Ping and Wang, Xiaogang and Tang, Xiaoou},
  booktitle={Proceedings of the IEEE international conference on computer vision},
  pages={3730--3738},
  year={2015}
}

@inproceedings{ffhq,
  title={A style-based generator architecture for generative adversarial networks},
  author={Karras, Tero and Laine, Samuli and Aila, Timo},
  booktitle={Proceedings of the IEEE/CVF conference on computer vision and pattern recognition},
  pages={4401--4410},
  year={2019}
}

@book{anderson2003introduction,
  title={An Introduction to Multivariate Statistical Analysis},
  author={Anderson, T.W.},
  isbn={9780471360919},
  lccn={20234317},
  series={Wiley Series in Probability and Statistics},
  url={https://books.google.com/books?id=Cmm9QgAACAAJ},
  year={2003},
  publisher={Wiley}
}

\end{document}